\newenvironment{delayedproof}[1]
 {\begin{proof}[\raisedtarget{#1}Proof of \Cref{#1}]}
 {\end{proof}}
\newcommand{\raisedtarget}[1]{%
  \raisebox{\fontcharht\font`P}[0pt][0pt]{\hypertarget{#1}{}}%
}
\newcommand{\proofref}[1]{\hyperlink{#1}{proof of \Cref{#1}}}
\newtheorem{prop}[theorem]{Proposition}
\newcommand{\relu}{\text{ReLU}}
\newtheorem{assumption}{Assumption}
\pgfplotsset{compat=1.11}
\title{Analysis of Structured Deep Kernel Networks}
\author[1,2]{Tizian Wenzel \thanks{tizian.wenzel@mathematik.uni-stuttgart.de, corresponding author}}
\author[3]{Gabriele Santin \thanks{gabriele.santin@unive.it, \href{http://orcid.org/0000-0001-6959-1070}{orcid.org/0000-0001-6959-1070}}}
\author[1]{Bernard Haasdonk \thanks{haasdonk@mathematik.uni-stuttgart.de}}
\affil[1]{Institute for Applied Analysis and Numerical Simulation, University of Stuttgart, Germany}
\affil[2]{Deparment of Mathematics, Ludwig-Maximilians-Universität München, Germany}
\affil[3]{Ca' Foscari University of Venice (Venice, Italy)}
\begin{document}

\maketitle 
\begin{abstract}

In this paper, we leverage a recent deep kernel representer theorem to connect kernel based learning and (deep) neural networks in order to understand their interplay. 
In particular, we show that the use of special types of kernels yields models reminiscent of neural networks that are founded in the same theoretical framework of classical kernel methods, while benefiting from the computational advantages of deep neural networks. 
Especially the introduced Structured Deep Kernel Networks (SDKNs) can be viewed as neural networks (NNs) with optimizable activation functions obeying a representer theorem. 
This link allows us to analyze also NNs within the framework of kernel networks.

We prove analytic properties of the SDKNs which show their universal approximation properties in three different asymptotic regimes of unbounded number of 
centers, width and depth. 
Especially in the case of unbounded depth, more accurate constructions can be achieved using fewer layers compared to corresponding constructions for ReLU 
neural networks.
This is made possible by leveraging properties of kernel approximation.

\textbf{Keywords} Kernel methods $\cdot$ Neural networks $\cdot$ Deep learning $\cdot$ Representer theorem $\cdot$ Universal approximation %
\end{abstract}

\section{Introduction}\label{sec:introduction}

Kernel-based methods \cite{Wendland2005} rely on positive definite kernels and their associated Reproducing Kernel Hilbert Space (RKHS), 
offering a unified framework for analysis via the representer theorem \cite{Wahba1970, Schoelkopf2001v}. 
These methods apply simple linear algorithms to input data mapped through a nonlinear 
feature map to a possibly infinite-dimensional feature space. This feature map is 
either implicitly defined by the kernel or manually designed to capture similarities between relevant data structures,
including non-Euclidean structured data like graphs. \\
Another approach which gained more popularity in the recent decade due to increased computational power and unprecendented availability of large data sets is given by deep neural networks \cite{Goodfellow2016}. 
These methods manage to reach an excellent accuracy on very high-dimensional problems, but are based on completely different premises: Based on a multilayer setup,
composed of linear mappings and nonlinear activation functions, they also learn a feature representation of the data themselves instead of using a fixed feature 
map. 

It is thus particularly natural and promising to investigate multilayer kernel approximants obtained by composition of several kernel-based functions. 
Although this approach has been already pursued in some directions \cite{NIPS2009_3628,damianou2013deep}, 
a recent result has laid the foundations of a new analysis through the establishment of a representer theorem for multilayer kernels \cite{Bohn2017}, which 
reduces an infinite dimensional problem to a finite one. 
Still, the actual solution of the corresponding optimization problem presents several challenges both on a computational and on a theoretical level. 
First, despite one can move from infinite to finite-dimensional optimization via the representer theorem, the problem is non-convex and still very hard to solve 
numerically.
Second, the rich theoretical foundation of the shallow (single-layer) case can not be directly transferred to this situation. \\
Our approach introduces three key innovations:
First we introduce the Structured Deep Kernel Network (SDKN) framework.
Second, we provide a theoretical analysis demonstrating that basic functions, such as polynomials, can be approximated to arbitrary accuracy by simple SDKNs.
Finally, these results are combined to establish efficient universal approximation theorems.

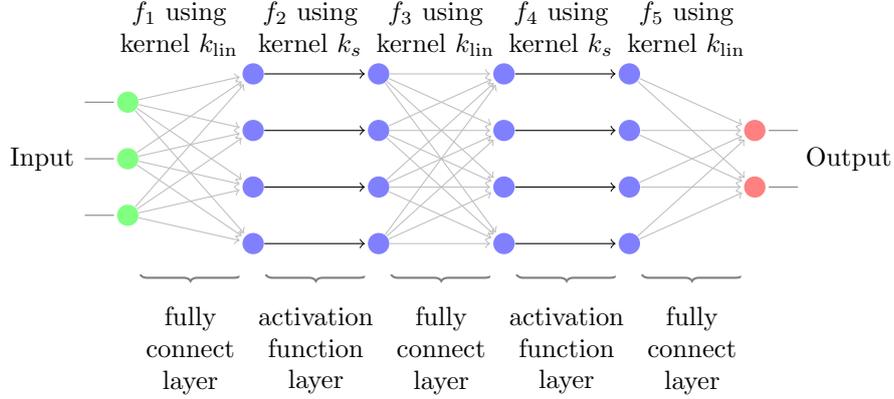
\begin{figure}[H]
\def\layersep{2.2cm}
\centering %
\begin{tikzpicture}[scale=0.75, shorten >=1pt,->,draw=black!50, node distance=\layersep]
    \tikzstyle{every pin edge}=[<-,shorten <=1pt]
    \tikzstyle{neuron}=[circle,fill=black!25,minimum size=8pt,inner sep=0pt]
    \tikzstyle{input neuron}=[neuron, fill=green!50];
    \tikzstyle{output neuron}=[neuron, fill=red!50];
    \tikzstyle{hidden neuron}=[neuron, fill=blue!50];
    \tikzstyle{annot} = [text width=5em, text centered]

	\node[annot] () at (-1.5, -2.5) {Input};
	\node[annot] () at (5.75*\layersep, -2.5) {Output};

    \foreach \name / \y in {1,2,3}
        \node[input neuron,pin={[pin edge={-}]left:}] (I-\name) at (0,-.5-\y) {};

    \foreach \name / \y in {1,...,4}
        \node[hidden neuron] (H1-\name) at (\layersep,-\y) {};
        
    \foreach \name / \y in {1,...,4}
        \node[hidden neuron] (H2-\name) at (2*\layersep,-\y) {};
        
    \foreach \name / \y in {1,...,4}
        \node[hidden neuron] (H3-\name) at (3*\layersep,-\y) {};
        
    \foreach \name / \y in {1,...,4}
        \node[hidden neuron] (H4-\name) at (4*\layersep,-\y) {};

    \foreach \name / \y in {1,2}
        \node[output neuron,pin={[pin edge={-}]right:}] (O-\name) at (5*\layersep,-1-\y) {};    

    \foreach \source in {1,...,3}
	    \foreach \dest in {1,...,4}
	        \path [lightgray] (I-\source) edge (H1-\dest);    
    
    \foreach \source in {1,...,4}
        \foreach \dest in {1,...,4}
            \path [lightgray] (H2-\source) edge (H3-\dest);
            
	\foreach \source in {1,...,4}
		\foreach \dest in {1,2}
	        \path [lightgray] (H4-\source) edge (O-\dest);   

	\foreach \node in {1,...,4}
		\path [black] (H1-\node) edge (H2-\node);
		
	\foreach \node in {1,...,4}
		\path [black] (H3-\node) edge (H4-\node);

	\node[annot] (hl1) at (.9, -.2) {$f_1$ using kernel $k_\text{lin}$};
    \node[annot] (hl2) at (3.2, -.2) {$f_2$ using kernel $k_s$};
    \node[annot] (hl3) at (5.4, -.2) {$f_3$ using kernel $k_\text{lin}$};
	\node[annot] (hl4) at (7.6, -.2) {$f_4$ using kernel $k_s$};
	\node[annot] (hl5) at (9.8, -.2) {$f_5$ using kernel $k_\text{lin}$};

	\draw[decorate, thick, -, decoration={brace,mirror}, yshift=3ex]  (.1*\layersep, -5) -- node[below=2ex, align=center, midway] {fully \\ connect \\ layer}  (.9*\layersep, -5);
	\draw[decorate, thick, -, decoration={brace,mirror}, yshift=3ex]  (1.1*\layersep, -5) -- node[below=2ex, align=center, midway] {activation \\ function \\ layer}  (1.9*\layersep, -5);
	\draw[decorate, thick, -, decoration={brace,mirror}, yshift=3ex]  (2.1*\layersep, -5) -- node[below=2ex, align=center, midway] {fully \\ connect \\ layer}  (2.9*\layersep, -5);	
	\draw[decorate, thick, -, decoration={brace,mirror}, yshift=3ex]  (3.1*\layersep, -5) -- node[below=2ex, align=center, midway] {activation \\ function \\ layer}  (3.9*\layersep, -5);	
	\draw[decorate, thick, -, decoration={brace,mirror}, yshift=3ex]  (4.1*\layersep, -5) -- node[below=2ex, align=center, midway] {fully \\ connect \\ layer}  (4.9*\layersep, -5);	

\end{tikzpicture}
\caption{Visualization of the overall Structured Deep Kernel Network (SDKN) architecture (see Section~\ref{sec:sdkn} for a precise definition). Throughout the 
figures, gray arrows refer to layers using the linear kernel, while black arrows refer to layers using the single dimensional kernel. The braces below the 
layers indicate the similarities to neural networks. \newline
The illustrated SDKN has a depth of $L=2$, a maximal dimension (width) of $w = 4$ and the dimensions $d_0 = 2, d_1 = d_2 = d_3 = d_4 = 4, d_5 = 2$ (see notation introduced in Section \ref{sec:analysis}).}
\label{fig:visualization_NN_DK}
\end{figure}

In this view, we start by analyzing possible ways of using the deep representer theorem and highlight some limitations of a straightforward approach (Section \ref{subsec:rbf_approach}). 
We then use our insights to introduce a suitable architecture (i.e., a specification of the choice of the kernels and of their composition into multiple 
layers--Figure~\ref{fig:visualization_NN_DK}) that makes the solution of the optimization problem possible with efficient methods. 
More precisely, we construct two basic layers, one that is using the linear kernel, and one that is built by means of a nonlinear kernel applied componentwise (Section \ref{subsec:setup_sdkn}). 
We use these layers as building blocks that can be stacked into an SDKN, whose architecture is reminiscent of feedforward neural networks. 
We comment on the optimality and the relation to a deep neural network representer theorem (Section \ref{subsec:optimality_nn_representer_th}). 
In Section \ref{sec:analysis} we show the full flexibility of the new family of SDKNs by proving universal approximation properties and elucidating improved constructions which are made possible by using kernel mappings. \\
In particular, we will prove three universal approximation properties for the SDKN setup (see also Table \ref{tab:overview_approximation_results}):
\begin{enumerate}
\item Universal approximation in the number of centers: 
Is it possible to achieve a universal approximation for the SDKN by increasing the number of centers, but keeping the remaining architecture, i.e.\ width and 
depth fixed? See Section \ref{subsubsec:infinite_center_case}.
\item Universal approximation in the width of the network: 
Is it possible to achieve a universal approximation by increasing the width of the SDKN, i.e.\ increasing the dimension of the intermediate Euclidean spaces, 
but keeping the remaining architecture, i.e.\ depth and the number of centers fixed? See Section \ref{subsubsec:infinite_width_case}.
\item Universal approximation in the depth of the network: Is it possible to achieve a universal approximation by increasing the number of layers but keeping 
the remaining architecture, i.e.\ number of centers and widths fixed? See Section \ref{subsubsec:infinite_depth_case}.
\end{enumerate}

\begin{table}[h]
\begin{center}
\begin{tabular}{ |c|c|c| } 
 \hline
 & Exemplary kernels & Theorem reference \\ \hline
Unbounded center case & any universal kernels & Theorem \ref{th:unbounded_nr_centers} \\ 
Unbounded width case & e.g.\ Gaussian, basic Mat\'ern & Theorem \ref{th:unbounded_width} \\ 
Unbounded depth case & e.g.\ Gaussian, quadratic Mat\'ern & Theorem \ref{th:infinite_depth} \\ \hline
\end{tabular}
\caption{Overview of the approximation results in different asymptotic regimes.
}
\label{tab:overview_approximation_results}
\end{center}
\end{table}

For the sake of brevity we do not include experimental results in the present article, but we refer to \cite{wenzel2022structured,wenzel2024application} for examples of successful applications of SDKNs 
and comparisons to NNs in the context of closure term prediction and reduced order modeling.
We also have positive experience with the SDKN architecture in various other frameworks such as kernel autoencoders \cite{goehring2021bsc} or solving of ODEs \cite{wang2022bsc}, 
which are subject of ongoing work.
Code is available online.\footnote{\url{https://gitlab.mathematik.uni-stuttgart.de/pub/ians-anm/sdkn}}

\section{Kernel methods and neural networks}

We start by recollecting some background material that is required for the following discussion. \\

We consider an input domain $\Omega \subset \R^{d_0}$ and a target domain $Y \subset \R^{\dout}$ as well as a scattered data set $D = ((x_1, y_1), \ldots, 
(x_N, y_N))\in (\Omega\times Y)^N$. In the context of statistical learning theory, this data set is assumed to be distributed according to a typically
unknown probability distribution $\mathcal{P}$, i.e.\ $(x,y) \sim \mathcal{P}$.
The values $y_i$ can be distributed according to some deterministic target function $g$, i.e.\ $y_i = g(x_i)$, possibly affected by noise.
Formally, approximating this is usually done by minimizing a loss over a suitable space $\mathcal{H}$ of functions, for example by
\begin{align}
\label{eq:loss_functional}
f^* := \argmin_{f \in \calh} \mathcal{L}_D(f), \quad \mathcal{L}_D(f) := \frac{1}{N} \sum_{i=1}^N L (x_i, y_i, f(x_i)).
\end{align}
In the following we focus on the case of absence of noise.
The goal is to approximate a function $g$, which will be done in the following using the supremum norm $\|g\|_{\infty}:=\sup_{x\in\Omega}\left\|g(x)\right\|_{2}$. For this it is important that the space $\calh$ describes a sufficiently large set of functions. This will be discussed in Section \ref{subsec:universal_approx_prop}. Before, kernel methods and neural networks are introduced in Section \ref{subsec:approx_with_kernels} and Section \ref{subsec:neural_networks}. Section \ref{subsec:dk_repr_theorem} recalls the deep kernel representer theorem and finally Section \ref{subsubsec:related_work} provides an overview on related work.

\subsection{Approximation with kernel methods}
\label{subsec:approx_with_kernels}

Kernel methods are a popular and successful family of algorithms in approximation theory and machine learning, and they may be used to address the optimization problem of Eq.~\eqref{eq:loss_functional}.
We point the reader to \cite{Wendland2005, Fasshauer2015} for more background information. They are based on the use of a positive definite kernel, 
i.e., a symmetric function $k:\Omega\times\Omega\to\R$ such that the kernel matrix $(k(x_i, x_j))_{i, j=1}^N\in \R^{N\times N}$ is positive semi-definite for 
any set of $N$ points $X_N:=\{x_i\}_{i=1}^N\subset \Omega$, $|X_N| = N \in \N$. 
If the matrix is even positive definite whenever the points are pairwise distinct, the kernel is denoted strictly positive definite.
Given a %
positive definite kernel $k$ on a domain $\Omega$, there exists a unique reproducing kernel Hilbert space (RKHS).

Radial Basis Functions (RBFs) are a notable class of kernels that are defined as $k(x, z):=\varphi(\varepsilon\|x-z\|_2)$ with a width (or shape) parameter 
$\varepsilon>0$ and a univariate function $\varphi: [0, \infty)\to\R$. 
Some examples of RBFs that will be used in this work are given in Table \ref{tab:rbfs}.

\begin{table}
\begin{center}
\begin{tabular}{|l|c|}
\hline
& $\varphi(r)$\\
\hline
Gaussian & $\exp(-r^2)$\\
Basic Mat\'ern& $\exp(-r)$\\
Wendland of order $0$& $(1 -r)_+:=\max(1-r, 0)$\\
\hline
\end{tabular}
\caption{Examples of radial basis functions $\varphi$ which yield strictly positive definite kernels $k$ on $\Omega = \R$.}
\label{tab:rbfs}
\end{center}
\end{table}

A milestone result in kernel theory is the representer theorem (see \cite{Wahba1970,Schoelkopf2001v}). 
It ensures that if the loss optimization of Eq.~\eqref{eq:loss_functional} over the RKHS $\mathcal{H} = \ns$ has a solution, 
then there exists a minimizer $s$ that is actually an element of $V(X_N) := \Sp \{ k(\cdot, x), ~ x \in X_N \} \subset \ns$, i.e., 
\begin{align} \label{eq:approximant_kernel}
s(x)=\sum_{i=1}^N \alpha_i k(x, x_i).  
\end{align}
This representer theorem also allows for the use of a regularization term that depends monotonically on the RKHS norm of the function, i.e.\ $\mathcal{R}(f) = 
\Lambda(\Vert f \Vert_{\ns}^2)$ for some monotonically increasing function $\Lambda: \R_{\geq 0} \rightarrow \R_{\geq 0}$.
If $\Lambda$ is even strictly monotonically increasing, every minimizer admits such a representation. If also strong convexity of both the loss function $L$ 
and the regularization function $\Lambda$ is assumed, there is a unique solution of the form in Eq.\ \eqref{eq:approximant_kernel}. 
The infinite-dimensional optimization problem is thus reduced to a finite-dimensional one.

It can be shown that for each positive definite kernel $k$ there exists at least one feature map $\phi: \Omega \rightarrow \mathcal{H}$ into a feature space $\mathcal{H}$ such that $k(x, z)= \langle \phi(x), \phi(z) \rangle_{\mathcal{H}}$. Also the opposite holds, i.e.,  any function  $\tilde{k}(x, z) := \langle \phi(x), \phi(z) \rangle_{\mathcal{H}}$ is in fact a positive definite kernel on $\Omega$. Especially, it can be shown that kernel (ridge) regression is in fact linear regression applied to the data set $\left\{\left(\varphi(x_i), y_i\right)\right\}_{i=1}^N$ \cite{Goodfellow2016}, i.e., a simple algorithm (namely linear regression) is applied to the inputs that are transformed by means of the fixed feature map. This is partly a drawback of kernel methods, as they are limited by the fact that no data representation is learned from the data themselves, as instead is the case for neural networks, as we will discuss in the next section. Using our new method this limitation is removed.

Since the rise of deep learning, kernel methods have been considered to be less apt to process very large datasets, 
and indeed several limitations exists when using traditional kernel techniques.
Nevertheless, recent research has developed methods that allow a very efficient computational scalability of kernel methods. 
We refer for example to the Nystrom method \cite{Williams2001}, to Random Fourier Features \cite{Rahimi2008}, 
to EigenPro \cite{Ma2017, Ma2019b} and to Falkon \cite{Meanti2020}. 
The last paper in particular includes also a very recent survey of state-of-the-art methods that allow one to train kernel models with up to billions of data 
points.

\subsection{Approximation with neural networks}\label{subsec:neural_networks}

Neural network approaches are another popular and well-known machine learning technique.
In their simplest form, standard feedforward neural networks are given by a concatenation of simple functions as
\begin{align} \label{eq:approximant_nn}
f(x, \Theta) = W_L \sigma(\ldots(\sigma(W_2 \sigma(W_1x + b_1) + b_2))) + b_L
\end{align}
with weight matrices $W_i \in \R^{d_i \times d_{i-1}}, i=1, \ldots, L$ and bias terms $b_i \in \R^i, i=1, \ldots, L$ of suitable sizes and nonlinear activation 
functions $\sigma: \R^{d_i} \rightarrow \R^{d_i}$, that are applied componentwise and in order to simplify the notation, we do not indicate the dimensions in 
the notation $\sigma$. 
A common activation function is the $\relu$ function, which is given as $\sigma(x) = \max(x, 0)$. 
Due to this multilayer structure, neural networks are effective in approximating also high-dimensional data sets, 
simply by increasing the network size: 
Both the number of layers $L$ as well as the width determined by the inner dimensions $d_1, \ldots, d_{L-1}$ can be increased. 

Optimizing such a neural network means optimizing its parameters $\Theta := \{W_i, b_i, i=1, \ldots, L\}$ such that the loss is minimized. 
As the loss landscape in the parameters $\Theta$ is usually highly nontrivial, no direct optimization is feasible. 
Instead one usually relies on variants of stochastic gradient descent for an iterative minimization starting with randomly initialized weights. 

A lot of research has proved their practical success as well as their theoretical foundations: Universal approximation properties (see also Section 
\ref{subsec:universal_approx_prop}) have been proven \cite{math7100992}, 
the initialization of parameters was investigated \cite{yang2021tensor}, 
sophisticated optimization methods have been devised \cite{Goodfellow2016} and recent works have focussed on the overparametrized case, 
where the number of parameters $\Theta$ considerably exceeds the number of data points $N$ \cite{NEURIPS2019_62dad6e2}. 
Efficient frameworks like PyTorch \cite{NEURIPS2019_9015} enable the use and investigation of neural networks for a broad community.  
Nevertheless the theory of neural networks is not that well understood in comparison to kernel methods.

In view of Eq.\ \eqref{eq:approximant_nn}, the neural network can be viewed as a linear model after an optimizable feature map 
\begin{align} \label{eq:nn_feature_map}
x \mapsto \sigma(\ldots(\sigma(W_2 \sigma(W_1x + b_1) + b_2)).
\end{align}
This is a crucial advantage for high-dimensional data sets like images, where feature learning is crucial.

\subsection{Universal approximation properties} \label{subsec:universal_approx_prop}

A universal approximation property states that the class of functions generated from some model is dense in a considered function space, provided the setup is chosen large enough. 

For kernel methods, the question is whether functions described via Eq.\ \eqref{eq:approximant_kernel} for $N \rightarrow \infty$ are a sufficiently rich class of functions. This question is adressed via the notion of \textit{universal kernels}: A kernel is called universal, if its RKHS $\ns$ is dense in the Banach space of continuous function $C(\Omega)$. As the space of so called kernel translates, i.e.\ $\Sp\{k(\cdot, x), x \in \Omega \}$ is dense in the  RKHS $\ns$, this property affirmatively answers the question of universal approximation. We refer to \cite{Micchelli2006} for further details and remark, that common kernels like the radial basis function kernels given in Table \ref{tab:rbfs} are universal kernels.

For standard neural networks, the question is whether the parametrization of functions via Eq.\ \eqref{eq:approximant_nn} is sufficiently rich. This is 
confirmed for different limit cases: For the case of wide networks (that is $d_1, \ldots, d_{L-1} \rightarrow \infty$) see e.g.\ \cite{HORNIK1989359, 
LeshnoLPS93, pinkus1999approximation} and for the case of deep networks (that is $L \rightarrow \infty$) see e.g.\ \cite{pmlr-v49-eldan16, 
pmlr-v49-telgarsky16, math7100992, kidger2020universal, montanelli2020error}. 
We remark that a lot of recent approximation results for very deep neural networks have focussed on the popular \relu \ 
activation function.

But also the sigmoid activation function \cite{L21} or other smooth activation functions \cite{OK19} are 
frequently considered. Recent studies conclude universality
for ReLU networks of bounded width \cite{M21} and
for a wide class of activation functions \cite{BuiTan2024} using
the concept of neural network approximate identity.

\subsection{Deep kernel representer theorem} \label{subsec:dk_repr_theorem}

The following deep kernel representer theorem with slightly modified notation was stated and proved in \cite{Bohn2017}. It generalizes the standard representer 
theorem to multilayer kernel methods with matrix-valued kernels and thus constitutes a solid theoretical foundation.

\begin{theorem}[Deep kernel representer theorem]
\label{th:dk_representer_theorem}
Let $\mathcal{H}_1, \ldots, \mathcal{H}_L$ be reproducing kernel Hilbert spaces of functions with finite-dimensional domains $D_l$ and ranges $R_l \subseteq 
\R^{d_l}$ with $d_l \in \mathbb{N}$ for $l=1,\ldots, N$ such that $R_l \subseteq D_{l+1}$ for $l=1,\ldots, L-1, D_1 = \Omega$ 
and $R_L \subset \mathbb{R}$. 
Let furthermore $\mathcal{L}: \mathbb{R}^2 \rightarrow [0, \infty]$ be an abitrary loss function and let $\Theta_1, \ldots, \Theta_L: [0, \infty) \rightarrow 
[0, \infty)$ be strictly monotonically increasing functions. Then, a minimizing tuple $(f_l^*)_{l=1}^L$ with $f_l^* \in \mathcal{H}_l$ of
\begin{align}
J(f_1, \ldots, f_L) := \sum_{i=1}^N \mathcal{L}(y_i, f_L \circ \ldots \circ f_1(x_i)) + \sum_{l=1}^L \Theta_l(\Vert f_l \Vert_{\mathcal{H}_l}^2) \tag{$\star$}
\end{align}
fulfills $f_l^* \in \tilde{V} \subset \mathcal{H}_l$ for all $l=1, \ldots, L$ with 
\begin{align*}
\tilde{V}_l := \Sp \{ K_l(~ \cdot ~, f_{l-1}^* \circ \ldots \circ f_1^*(x_i))e_{k_l} | i=1,\ldots,N, k_l = 1,\ldots, d_l \},
\end{align*}
where $K_l$ denotes the reproducing kernel of $\mathcal{H}_l$ and $e_{k_l} \in \mathbb{R}^{d_l}$ is the $k_l$-th unit vector.
\end{theorem}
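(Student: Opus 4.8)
The plan is to run the classical orthogonality argument behind representer theorems, but layerwise and on a \emph{fixed} minimizing tuple. Let $(f_l^*)_{l=1}^L$ be a minimizer of ($\star$), and abbreviate the intermediate representations by $z_i^{(0)} := x_i$ and $z_i^{(l)} := f_l^* \circ \cdots \circ f_1^*(x_i)$ for $l = 1, \ldots, L$ and $i = 1, \ldots, N$, so that $\tilde V_l = \Sp\{ K_l(\cdot, z_i^{(l-1)})\, e_{k_l} \mid i = 1, \ldots, N,\ k_l = 1, \ldots, d_l \}$. Since $\tilde V_l$ is finite-dimensional, hence a closed subspace of $\mathcal{H}_l$, I would orthogonally decompose $f_l^* = v_l + w_l$ with $v_l \in \tilde V_l$ and $w_l \in \tilde V_l^{\perp}$. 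The assertion $f_l^* \in \tilde V_l$ is equivalent to $w_l = 0$ for every $l$, and I would obtain this by comparing $(f_l^*)_{l=1}^L$ with the competitor $(v_l)_{l=1}^L$: showing $J(v_1, \ldots, v_L) \le J(f_1^*, \ldots, f_L^*)$ with strict inequality whenever some $w_l \ne 0$ forces, by minimality, all the $w_l$ to vanish.

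For the loss part of ($\star$) I would invoke the reproducing property of the matrix-valued kernel $K_l$, i.e.\ $\langle g, K_l(\cdot, z)\, e_k \rangle_{\mathcal{H}_l} = (g(z))_k$ for $g \in \mathcal{H}_l$, $z \in D_l$ and $k \in \{1, \ldots, d_l\}$. Applied to $g = w_l$ and $z = z_i^{(l-1)}$ this yields $w_l(z_i^{(l-1)}) = 0$, hence $v_l(z_i^{(l-1)}) = f_l^*(z_i^{(l-1)})$ for all $i$. A short induction on $l$ then gives $v_l \circ \cdots \circ v_1(x_i) = z_i^{(l)}$: the case $l = 0$ is trivial, and in the step $v_l \circ \cdots \circ v_1(x_i) = v_l(z_i^{(l-1)}) = f_l^*(z_i^{(l-1)}) = z_i^{(l)}$, where the inductive hypothesis supplies the first equality. (All compositions are well defined, since every element of $\mathcal{H}_l$ maps $D_l$ into $R_l \subseteq D_{l+1}$, so $(v_l)_{l=1}^L$ is an admissible tuple for the problem.) In particular $v_L \circ \cdots \circ v_1(x_i) = f_L^* \circ \cdots \circ f_1^*(x_i)$, so the loss term is unchanged.

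For the regularization part, Pythagoras gives $\|f_l^*\|_{\mathcal{H}_l}^2 = \|v_l\|_{\mathcal{H}_l}^2 + \|w_l\|_{\mathcal{H}_l}^2 \ge \|v_l\|_{\mathcal{H}_l}^2$, with equality iff $w_l = 0$, and strict monotonicity of $\Theta_l$ upgrades this to $\Theta_l(\|f_l^*\|_{\mathcal{H}_l}^2) \ge \Theta_l(\|v_l\|_{\mathcal{H}_l}^2)$, still with equality iff $w_l = 0$. Adding the (unchanged) loss and the (non-increased) regularizer gives $J(v_1, \ldots, v_L) \le J(f_1^*, \ldots, f_L^*)$, strict as soon as some $w_l \ne 0$. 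Since $(f_l^*)_{l=1}^L$ is a minimizer, no strict decrease is possible, so $w_l = 0$ and $f_l^* = v_l \in \tilde V_l$ for all $l$.

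I expect the only genuine subtlety — and the reason $\tilde V_l$ is defined through the optimal lower layers $f_1^*, \ldots, f_{l-1}^*$ rather than through $v_1, \ldots, v_{l-1}$ — to be the bookkeeping in that induction: projecting $f_l^*$ onto $\tilde V_l$ must not move the centers $z_i^{(l-1)}$ that define $\tilde V_l$ (nor the $z_i^{(l)}$ defining $\tilde V_{l+1}$, etc.). This is exactly what $w_l(z_i^{(l-1)}) = 0$ secures, so processing the layers in increasing order keeps every intermediate representation fixed and the argument closes. Apart from this, only the matrix-valued reproducing identity and strict monotonicity of the $\Theta_l$ are needed; in particular no convexity of $\mathcal{L}$ enters, consistent with the statement.
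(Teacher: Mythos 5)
Your argument is correct and is essentially the standard proof of this result: the paper itself does not reprove the theorem but cites \cite{Bohn2017}, and the proof there proceeds exactly as you do — layerwise orthogonal decomposition onto the span of kernel translates at the propagated centers, the matrix-valued reproducing property to show the orthogonal parts vanish at those centers (so the propagated training points, and hence the loss term, are unchanged), and Pythagoras plus strict monotonicity of the $\Theta_l$ to force the orthogonal parts to zero by minimality. Your closing remark about processing layers in increasing order so the centers defining each $\tilde V_l$ stay fixed is precisely the inductive bookkeeping used in that reference, so no gap remains.
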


\begin{figure}[h]
\centering
\begin{tikzpicture}[node distance=.33cm and 1.0cm]
\node[] (start) {$\mathbb{R}^{d_{0}} \supset D_1$}; 
\node[right=of start] (layer1) {$\mathbb{R}^{d_1}$};
\draw[thick, ->] (start) to node[above, midway] {$f_1 \in \mathcal{H}_1$} (layer1);
\node[right=of layer1] (layer2) {$\mathbb{R}^{d_2}$};
\draw[thick, ->] (layer1) to node[above, midway] {$f_{2} \in \mathcal{H}_{2}$} (layer2);
\node[right=of layer2] (layer3) {...};
\draw[thick, ->] (layer2) to (layer3); 
\node[right=of layer3] (end) {$R_L \subset \R$}; %
\draw[thick, ->] (layer3) to node[above, midway] {$f_L \in \mathcal{H}_L$} (end);
\draw[thick, ->] (start) to[out=-15,in=-165] node[below, midway] {$F_2 := f_2 \circ f_1$} (layer2);
\end{tikzpicture}
\caption{Visualization of the mappings in Theorem \ref{th:dk_representer_theorem}.}
\label{fig:visualization_mappings}
\end{figure}

Note that the theorem can easily be generalized to vector-valued outputs. Using this theorem, the infinite-dimensional optimization problem with $f_l \in 
\mathcal{H}_l, l = 1, \ldots, L$ boils down to a finite-dimensional problem of optimizing the expansion coefficients $\alpha_i \in \R^{d_i}$ of 
\begin{align*}
\tilde{V}_l \ni f_l^* &= \sum_{i=1}^N K_l(\cdot, f_{l-1}^* \circ \ldots \circ f_1^*(x_i)) \alpha_i \\
\Rightarrow f_l^* \circ \ldots \circ f_1^*(\cdot) &= \sum_{i=1}^N K_l(f_{l-1}^* \circ \ldots \circ f_1^*(\cdot), f_{l-1}^* \circ \ldots \circ f_1^*(x_i)) 
\alpha_i.
\end{align*}
Like this the overall approximant is given by 
\begin{align*}
h(x) = f_L^*(\ldots(f_1^*(x))),
\end{align*}
which can be expressed as a standard kernel approximant $f^*(x) = \sum_{i=1}^N \mathcal{K}_L(\cdot,\tilde{x}_i) \alpha_i$ using a deep kernel $\mathcal{K}^L$
\begin{align}
\label{eq:deep_kernel}
\mathcal{K}^L(x, z) &= K_L(f_{L-1} \circ \ldots \circ f_1(x), f_{L-1} \circ \ldots \circ f_1(z)).
\end{align}
with the propagated centers $\tilde{x}_i = f_{L-1} \circ \ldots \circ f_1(x_i)$. Note that this kernel $\mathcal{K}^L$ is parametrized and learned by minimizing 
$(\star)$. %

\subsubsection{Notation}

In the following, we will drop the star-notation of the optimal functions and simply use $f_1, \ldots, f_L$. 
Furthermore we set $F_1 := f_1$, $F_j := f_j \circ \ldots \circ f_1$ for $j \geq 2$ such that $f(x)=f^*(x) = F_L(x)$. 
We use a superscript notation $x^{(i)}$ to refer to the $i$-th component in the standard Euclidean vector representation, 
and use the same notation also for the components of functions, i.e.\ $f_j^{(i)}(x)$ denotes the $i$-th component of $f_j(x)$. 
The setting of the representer theorem with the corresponding spaces is depicted in Figure \ref{fig:visualization_mappings}. 
As we will not focus on the inner domains $D_l$ and ranges $R_l$, they are not visualized explicitly, while the figure reports only the dimensions $d_0, d_1, 
\ldots, d_L$.

\subsection{Related work} \label{subsubsec:related_work}

In order to allow for the flexibility of learning feature maps instead of using a fixed one, 
several multilayer kernel methods have been proposed. 
This is done frequently based on neural networks: In \cite{Le2016}, a neural network is used in order to directly learn a kernel function from the data. 
The paper \cite{Wilson2016} suggests to use a neural network to transform the data,
and subsequently use a fixed kernel on top. 
Like this the deep neural network is used to learn a feature representation, and both parts are optimized jointly. 
Another approach that combines kernel methods and neural networks is proposed in \cite{ijcai2019-558}, which introduces a setup called \textit{Deep Spectral Kernel Learning}. 
Here a multilayered kernel mapping is built by using Random Fourier Feature mappings in every layer. 
This is essentially a neural network with fixed parameters in every second layer. 
Recently, \cite{wurzberger2024learning} combined radial basis function kernels with ideas from convolutional neural networks to obtain deep radial basis function networks.

Hierarchical Gaussian kernels, which are an iterated composition of Gaussian kernels, are introduced in \cite{steinwart2016learning}. The setup has as well a 
structure comparable with neural networks. However, the methodology does not easily extend to a broader class of kernels.
From a Gaussian process point of view, \cite{damianou2013deep} introduces Deep Gaussian processes, which is a hierarchical composition of Gaussian processes. 
In order to scale them to large data sets, approximation techniques are used. 
Under the notion of \textit{Kernel Flows}, \cite{OWHADI201922,hamzi2021learning, hamzi2021simple, hamzi2023learning, lee2023learning
} introduces a way to build a kernel step-by-step, by iteratively modifying the transformation of the data, which is described with help of standard kernel 
mappings.

\textit{Arc-cosine} kernels are introduced in \cite{NIPS2009_3628}, which the authors also call multilayer kernel machines, that emerge in the unbounded width limit of neural networks. Further research in this direction is conducted under the notion of the \textit{neural tangent kernel} \cite{jacot2018neural}. \\
Starting from the neural network point of view, there are approaches that do not use fixed activation functions but strive for optimized ones. 
The paper \cite{JMLR:v20:18-418} extends the loss functional with a specific regularization term. This allows the authors to derive a representer theorem, yielding activation functions given by nonuniform piecewise linear splines. Another approach coined \textit{Kafnets} is introduced in  \cite{SCARDAPANE201919}. 
Their structure of the network is quite similar to the setup which we will derive based on the deep kernel representer theorem. 
However in contrast to ours, 
their setup is built on prechosen center points within each layer, given e.g.\ by a grid that is independent of the data. 
The article \cite{liu2024kan} recently introduced Kolmogorov-Arnold networks as an alternative to neural networks,
leveraging splines to obtain optimizable activation functions. \\

We remark that \cite{ijcai2019-558, JMLR:v20:18-418} draw some links to the deep kernel representer theorem, but do not make use of it. 
By using the deep kernel representer theorem for our SDKN, we obtain optimality of our proposed setup in contrast to other approaches like standard neural networks, that enjoy the same structure. In Section \ref{subsec:optimality_nn_representer_th} we further comment on this and especially elaborate on the connections and differences to the neural network representer theorem of \cite{JMLR:v20:18-418}.

\section{Structured Deep Kernel Networks} \label{sec:sdkn}

In the first Section \ref{subsec:rbf_approach}, we comment on a straightforward application of the deep kernel representer theorem (Theorem \ref{th:dk_representer_theorem}) using common classes of kernels and elaborate why this approach is not suitable. 
Subsequently, we introduce our Structured Deep Kernel Network (SDKN) approach in Section \ref{subsec:setup_sdkn} and comment on its optimality and relation to neural networks in Section \ref{subsec:optimality_nn_representer_th}.

\subsection{Radial basis function approach} \label{subsec:rbf_approach}

In order to use kernels in a multilayer setup according to \Cref{th:dk_representer_theorem},
one could directly leverage matrix-valued radial basis function kernels like the Gaussian, Wendland or Mat\'ern kernels, i.e.\
\begin{align*}
k(x,z) = k_{\text{Gauss}}(x,z) \cdot I_d \in \R^{d \times d}
\end{align*}
or even more general matrix-valued kernels.
In preliminary numerical experiments we have seen that such an approach is not successful, 
as there are way too many parameters to optimize that do not obey any special structure. 
To reduce the number of optimizable parameters and simplify the optimization, one can pick a subset $\{ z_i \}_{i=1}^M \subset \{ x_i \}_{i=1}^N$ for $M \ll N$ and use them as center points, such that every mapping $f_l: \R^{d_l} \rightarrow R^{d_{l+1}}$ is given by
\begin{align*}
f_l(\cdot) = \sum_{i=1}^M  k_l(\cdot , F_{l-1}(z_i)) \alpha_i
\end{align*}
with optimizable parameters $\alpha_i \in \R^{d_{l+1}}$. But even this approach is experimentally inferior to the setup of SDKNs introduced below. %
The theoretical reason for this is likely the choice of ansatz (trial) space.
Indeed, assume that the output of the function $f_l$ should be slightly adjusted, s.t. for $F_{l-1}(z_i)$ it holds
\begin{align*}
f_l(F_{l-1}(z_i)) &= y_i + \epsilon \\
\text{instead of } \hspace{1cm} f_l(F_{l-1}(z_i)) &= y_i,
\end{align*}
with $\epsilon \in \R^{d_{l+1}}$. Due to the potentially very bad conditioning of the kernel matrix $A \in \R^{Md_l \times Md_l}, A_{ij} = k_l(F_{l-1}(z_i), 
F_{l-1}(z_j))$ (see e.g.\ \cite{diederichs2019improved}), this results in large changes for the parameters $\alpha_j$ even for small $\epsilon$. This slows down 
the optimization when using common stochastic gradient descent optimization strategies \cite{Ma2017}. \\
A remedy for this might be a change of basis, e.g.\ using a Lagrange or Newton basis instead of the basis of kernel translates $\{k(\cdot, z_i), i=1, .., M\}$ \cite{MULLER2009645}. However, this is not possible in deep setups with more than two layers, as the Lagrange and Newton basis both depend on the used centers $\{ z_i \}_{i=1}^M$, and these change as soon as the previous functions $f_1, .., f_{l-1}$ change. \\

\subsection{Setup of structured deep kernel network} \label{subsec:setup_sdkn}

In order to alleviate these problems, we propose a better setup with a more subtle selection of kernels, which
will be called Structured Deep Kernel Network (SDKN).
For this, in the following we introduce two types of kernels, namely simple linear kernels (Section \ref{subsubsec:linear_kernel}) and kernels that are acting 
on single dimensions (Section \ref{subsubsec:single_dim_kernel}). 
Both types of kernels can be proven to be positive definite, but in general not strictly positive definite (see \Cref{app:positive_definiteness}). 
Subsequently in Section \ref{subsubsec:overall_setup}, we combine these types of kernels in an alternating way.

\subsubsection{Linear kernel} \label{subsubsec:linear_kernel}

The linear kernel $k_\text{lin}: \R^d \times \R^d \rightarrow \R$ is defined via $(x,z) \rightarrow \langle x, z \rangle_{\R^d}$, i.e.\ the standard Euclidean 
inner product is computed. We have the following theorem:

\begin{prop} \label{prop:linear_layer}
Given $A \in \R^{b \times d}$,
the linear mapping $\R^{d} \rightarrow \R^b, x \mapsto Ax$ can be realized as a kernel mapping 
\begin{align*}
s: \R^d \rightarrow \R^b, 
x \mapsto \sum_{i=1}^M k(x, z_i) \alpha_i, ~~ \alpha_i \in \R^b
\end{align*} 
with centers $\{ z_i \}_{i=1}^M \subset \R^d$ and using the matrix-valued linear kernel 
\begin{align}
k(x,z) = k_\text{lin}(x,z) \cdot I_b = \langle x, z \rangle_{\R^d} \cdot I_b, \tag{$\star_1$}
\end{align}
if and only if the span of the center points $\{z_i\}_{i=1}^M \subset \R^d$ is a superset of the row space of the matrix A.
\end{prop}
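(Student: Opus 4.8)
The plan is to write out the candidate kernel mapping $s$ explicitly, observe that it is \emph{automatically} a matrix--vector product, and then reduce the claim to an elementary statement in linear algebra about which matrices can arise this way.

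First I would insert the kernel $(\star_1)$ into the definition of $s$. Since $k(x,z_i) = \langle x, z_i\rangle_{\R^d}\, I_d$ multiplies $\alpha_i$ only by the scalar $\langle x, z_i\rangle_{\R^d}$, one obtains
\begin{align*}
s(x) \;=\; \sum_{i=1}^M \langle x, z_i \rangle_{\R^d}\, \alpha_i \;=\; \Big( \sum_{i=1}^M \alpha_i z_i^\top \Big) x \;=:\; B x ,
\end{align*}
so that $s$ is the linear map $\R^d \to \R^b$ represented by the matrix $B := \sum_{i=1}^M \alpha_i z_i^\top \in \R^{b\times d}$ (the factor $I_d$ is immaterial here, being multiplied by a scalar). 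Two linear maps that agree on all of $\R^d$ are equal, hence $s$ realizes $x \mapsto Ax$ precisely when $B = A$, and the proposition reduces to the claim: the set of matrices expressible as $\sum_{i=1}^M \alpha_i z_i^\top$ with $\alpha_i \in \R^b$ coincides with the set of $b\times d$ matrices whose row space is contained in $\Sp\{z_1,\dots,z_M\}$.

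The two inclusions are then routine. For ``$\subseteq$'': the $k$-th row of $\sum_{i=1}^M \alpha_i z_i^\top$ equals $\sum_{i=1}^M \alpha_i^{(k)} z_i^\top$, a linear combination of the $z_i$, so any such matrix has all its rows in $\Sp\{z_i\}$. For ``$\supseteq$'': given $A$ with all rows in $\Sp\{z_i\}$, expand the $k$-th row of $A$ as $\sum_{i=1}^M c_{ki}\, z_i^\top$ and set $\alpha_i := (c_{1i}, \dots, c_{bi})^\top \in \R^b$; then the $k$-th row of $\sum_{i=1}^M \alpha_i z_i^\top$ is exactly $\sum_{i=1}^M c_{ki}\, z_i^\top$, i.e.\ $\sum_{i=1}^M \alpha_i z_i^\top = A$. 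This gives both directions of the ``iff''.

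I do not expect a genuine obstacle: the crux is just the identity $\sum_i \alpha_i z_i^\top = B \iff \mathrm{rowspace}(B) \subseteq \Sp\{z_1,\dots,z_M\}$. The only points requiring care are the row/column convention --- the span of the centers must be compared with the \emph{row} space (not the column space) of $A$, which is why the hypothesis reads ``superset of the row space of $A$'' --- and the remark that the identity matrix in $(\star_1)$ plays no role. Optionally I would add that one may always reduce to $M = \dim \Sp\{z_i\} \le d$ by discarding redundant centers, and that the coefficients $\alpha_i$ are unique exactly when the $z_i$ are linearly independent.
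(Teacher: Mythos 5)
Your proposal is correct and follows essentially the same route as the paper's proof: expand the kernel map to see it is the linear map $x \mapsto \bigl(\sum_{i=1}^M \alpha_i z_i^\top\bigr)x$ and match it with $A$ row by row, which is exactly the condition that the rows of $A$ lie in $\Sp\{z_1,\dots,z_M\}$. The only cosmetic difference is that the paper reduces to $b=1$ and calls the extension obvious, whereas you carry the full matrix $B=\sum_i \alpha_i z_i^\top$ and spell out both inclusions explicitly.
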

A proof is given in the appendix, see \proofref{prop:linear_layer}. We remark that \Cref{prop:linear_layer} can also be generalized to the mapping $x \mapsto Ax + c$, $c \in \R^b$, i.e.\ to a linear mapping with bias. However, as we will focus on translational invariant kernels in the following, such an additional bias term has no effect.  \\
We proceed with the following proposition, which essentially shows that any linear mapping is possible, i.e.\ it is not required to check that the span of the 
center points is a specific superset as required in \Cref{prop:linear_layer}: %

\begin{prop} \label{prop:linear_layer_2}
Let $d_0,d\in\N$, $\Omega\subset\R^{d_0}$ and consider a mapping $g: \Omega \to\R^d$ with $x\mapsto g(x):=(g^{(1)}(x), \ldots, g^{(d)}(x))$ and $g^{(i)}: \Omega 
\rightarrow \R$, $1\leq i\leq d$. Denote as $g(\Omega)$ its image.

Then any linear combination
\begin{align*}
(g^{(1)}(x), \ldots, g^{(d)}(x))^\top \mapsto \sum_{j=1}^d g^{(j)}(x) \cdot \beta_j
\end{align*}
with $\beta_j \in \R^b, j=1, \ldots, d$ can be realized with help of a linear kernel $k$ using propagated centers $g(z_i) \in g(\Omega)$, i.e.\ 
\begin{align*}
&\exists z_1, \ldots, z_d \in \Omega, \alpha_1, \ldots, \alpha_d \in \R^b ~ \forall x \in \Omega \\ 
&\qquad \qquad \sum_{j=1}^d g^{(j)}(x) \cdot \beta_j = \sum_{i=1}^d k(g(x), g(z_i)) \cdot \alpha_i
\end{align*}
\end{prop}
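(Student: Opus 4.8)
The plan is to reduce \Cref{prop:linear_layer_2} to \Cref{prop:linear_layer} by a careful bookkeeping of which vectors in $\R^d$ are actually hit by the map $g$. The linear mapping in question, $v \mapsto Bv$ with $B := (\beta_1 \mid \dots \mid \beta_d) \in \R^{b \times d}$ acting on $v = (g^{(1)}(x),\dots,g^{(d)}(x))^\top$, is a linear mapping on $\R^d$, so \Cref{prop:linear_layer} tells us it can be written as $v \mapsto \sum_{i=1}^M k_\mathrm{lin}(v,z_i)\,I_d\,\alpha_i$ precisely when the span of the chosen centers contains the row space of $B$. The key point is that here the centers are not free vectors in $\R^d$: they must be of the form $g(z_i)$ for $z_i \in \Omega$, i.e.\ they must lie in $g(\Omega)$. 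So the real content is: it is always possible to choose finitely many points $z_1,\dots,z_d \in \Omega$ such that $\Sp\{g(z_1),\dots,g(z_d)\}$ contains the row space of $B$ — and in fact $d$ of them always suffice.

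First I would argue that $\Sp\{g(x) : x \in \Omega\}$ is some subspace $U \subseteq \R^d$, say of dimension $r \le d$, and pick $z_1,\dots,z_r \in \Omega$ with $g(z_1),\dots,g(z_r)$ a basis of $U$ (padding with arbitrary extra points up to $d$ if one wants exactly $d$ centers, which is harmless). The crucial observation is that the row space of $B$, when viewed through the quadratic form $x \mapsto Bx$ restricted to the relevant inputs, only ever needs to be matched \emph{on} $U$: for every $x \in \Omega$ we have $g(x) \in U$, so the value $B\,g(x) = \sum_j g^{(j)}(x)\beta_j$ is unchanged if we replace $B$ by $B' := B P_U$, where $P_U$ is the orthogonal projection onto $U$. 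The matrix $B'$ has row space contained in $U = \Sp\{g(z_1),\dots,g(z_r)\}$, so \Cref{prop:linear_layer} applies to $B'$ with these centers and yields coefficients $\alpha_1,\dots,\alpha_r \in \R^b$ (set the remaining $\alpha_i = 0$) such that $B' v = \sum_i \langle v, g(z_i)\rangle \alpha_i$ for all $v \in \R^d$. Evaluating at $v = g(x)$ and using $B' g(x) = B g(x)$ gives exactly the claimed identity
\begin{align*}
\sum_{j=1}^d g^{(j)}(x)\,\beta_j \;=\; \sum_{i=1}^d k(g(x), g(z_i))\,\alpha_i,
\end{align*}
with $k(\cdot,\cdot) = k_\mathrm{lin}(\cdot,\cdot)\cdot I_d$ the matrix-valued linear kernel.

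The main obstacle — or rather the only subtlety worth flagging — is the matching of the number of centers: the statement asks for exactly $d$ centers $z_1,\dots,z_d$, whereas the argument naturally produces $r = \dim U \le d$ of them. This is resolved simply by appending $d - r$ further arbitrary points of $\Omega$ as centers with zero coefficients $\alpha_i = 0$, which does not alter the sum; one should just note $\Omega \neq \emptyset$ (otherwise the statement is vacuous). A second point to handle cleanly is that \Cref{prop:linear_layer} as stated concerns a mapping $\R^d \to \R^b$ realized on all of $\R^d$; since we only evaluate at the propagated points $g(x)$, restricting $B$ to $B' = B P_U$ before invoking it is what makes the row-space hypothesis satisfiable, and one should remark explicitly that $B' g(x) = B g(x)$ because $P_U g(x) = g(x)$. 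Everything else is the same elementary linear algebra already used for \Cref{prop:linear_layer}.
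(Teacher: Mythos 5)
Your proof is correct and follows essentially the same route as the paper: the projection $B' = BP_U$ onto $U = \Sp g(\Omega)$ is exactly the paper's decomposition $\beta = \beta^{\parallel} + \beta^{\perp}$ (stated there for $b=1$ with the extension to general $b$ declared obvious), the orthogonal part being annihilated on $g(\Omega)$, and both arguments then invoke \Cref{prop:linear_layer} with centers $z_i$ chosen so that $\Sp\{g(z_i)\} \supseteq \Sp g(\Omega)$. Your explicit padding to exactly $d$ centers with zero coefficients is a minor tidying of a point the paper leaves implicit.
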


The proof is given in the appendix, see \proofref{prop:linear_layer_2}. 
Note that \Cref{prop:linear_layer_2} means that any linear mapping can be obtained using the linear kernel.
The following example illustrates the statement of the proposition:

\begin{example}
Choose $d_0 = 1, d=3$ and $\Omega = [0,1]$ as well as $g: \R \rightarrow \R^3$ with $g^{(1)}(x) = x, g^{(2)}(x) = 2x, g^{(3)}(x) = x^2$. 
Consider a set $X_N \subset \Omega$ with at least 
three pairwise distinct points, i.e.\ $N \geq 3$. Then the space $\Sp \{ g(z_i), z_i \in X_N \} \subseteq \R^d$ spanned by the propagated data points $g(z_i)$ 
is only two-dimensional as $(g^{(1)}(z_i))_{i=1}^N$  and $(g^{(2)}(z_i))_{i=1}^N$ are linearly dependent. In particular, it holds $\Sp \{ g(z_i) , z_i \in X_N  
\} = \Sp \{ (1, 2, 0)^\top, (0, 0, 1)^\top \}$. \\
Thus, according to \Cref{prop:linear_layer} it is not possible to realize the linear combination $(z^{(1)}, z^{(2)}, z^{(3)}) \mapsto 2 \cdot z^{(1)} - z^{(2)}$.
However, note that $2 \cdot g^{(1)}(x) - g^{(2)}(x) = 2x - 2x = 0$.
\end{example}

\subsubsection{Single-dimensional kernels} \label{subsubsec:single_dim_kernel}

The following type of kernel $k_s: \R^d \times \R^d \rightarrow \R^{d \times d}$, that is composed of $d$  strictly positive definite kernels $k^{(1)}, \ldots, 
k^{(d)}$ acting on one-dimensional inputs, will be called \textit{(matrix valued) single-dimensional kernel}, or componentwise kernels (the index $s$ refers to 
\textit{single}):
\begin{align}
\label{eq:special_kernel}
k_s: \R^d \times \R^d \rightarrow \R^{d \times d}, k_s(x,z) = 
\begin{bmatrix}
   k^{(1)}(x^{(1)}, z^{(1)}) &  & 0 \\ 
   &  \ddots & \\ 
   0 &   & k^{(d)}(x^{(d)}, z^{(d)})
 \end{bmatrix}
\end{align}
Recall that the notation $x^{(i)}, z^{(i)}$ means that only the $i$-th component of the vectors $x$ or $z$ is used. %
In the following we will mostly choose the same kernel for each component, i.e.\ $k^{(1)} = \ldots = k^{(d)}$, but this general setting is possible as well.

\subsubsection{Overall setup} \label{subsubsec:overall_setup}

Following the notation within the deep kernel representer theorem (Theorem \ref{th:dk_representer_theorem}), 
we seek an approximant of the form $f_L \circ \ldots \circ f_1(\cdot)$. 
We will use $L$ odd and use alternately the linear kernel from Section \ref{subsubsec:linear_kernel} and the single-dimensional kernels from Section \ref{subsubsec:single_dim_kernel}, 
where we always start (and therefore end) with the linear kernel. 
This results in an alternating concatenation of linear and nonlinear functions.
The dimensionality of the output of these matrix-valued kernels directly corresponds to the dimensionality of the hidden spaces and can be chosen arbitrarily. 

The overall setup is therefore depicted in Figure \ref{fig:visualization_NN_DK} and is reminiscent of the architecture of standard feedforward neural networks:
\begin{enumerate}
\item The mappings $f_l$ for $l$ odd use the linear kernel ($\star_1$) and thus can realize linear mappings $x \mapsto Ax$ via \Cref{prop:linear_layer}, %
especially any linear mapping that is compatible with the data in the sense of \Cref{prop:linear_layer_2} can be realized.
\item The mappings $f_l$ for $l$ even use the single-dimensional kernels ($\star_2$). Thus these mappings have the form
\begin{align*}
f_l^{(j)}(x) = \sum_{i=1}^N \alpha_{l,i}^{(j)} k_j(x^{(j)}, F_{l-1}(z_i)^{(j)}),
\end{align*}
with $\alpha_{l,i}^{(j)} \in \R$.
These functions $f_l^{(j)}$ for $l=2, 4, \ldots$, $j=1, \ldots, d_l$ can be viewed as nonlinear activations acting componentwise, which are also present in 
neural networks. 
However, as the parameters $\alpha_{l,i}^{(j)} \in \R$ are subject to optimization, these functions are \textit{optimizable activation functions}. This is a conceptual advantage over NNs that mostly use fixed or parametrized activation functions. \\
In order to enhance sparsity of the model, 
we only use $M \ll N$ training points (called \textit{centers}) of the whole data set $X_N$, i.e.\ the remaining coefficients are set to zero at initialization and remain fixed. Thus we have
\begin{align}
\label{eq:optim_activation_function}
f_l^{(j)}(x) = \sum_{i=1}^M \alpha_{l,i}^{(j)} k_j(x^{(j)}, F_{l-1}(z_i)^{(j)}),
\end{align}
using fixed, (randomly) prechosen centers $z_i \in X_N$.
 This sparsity speeds up the optimization procedure in a practical implementation and does not impede the approximation capabilites as analyzed in Section \ref{sec:analysis}.
\end{enumerate}

\subsection{Generality, sparsity and optimality} \label{subsec:optimality_nn_representer_th}

We want to highlight some positive properties of our SDKN approach,
particularly as it generalizes some existing results.

\begin{itemize}
\item[(a)] Generality of kernels:
  We do not pose severe restrictions to the applicable kernels in the different SDKN layers, hence allow a wide range of kernels of
  different smoothness, locality, etc.   
\item[(b)] Sparsity of the SDKN:
  We typically choose a small number $M\ll N$ of centers, which
  is sufficient in theory (see \Cref{th:unbounded_width} and \Cref{th:infinite_depth}) and practice \cite{wenzel2024application,wenzel2022structured}.
  By this we immediately obtain sparse expansions of the SDKN model.
  In practice we realize this by simply choosing the corresponding
  coefficients for optimization and setting the remaining ones to zero
  (as explained in Subsection \ref{subsubsec:overall_setup}). 
  The particular choice of centers seems of less relevance in practice.
\item[(c)] Optimality of propagated centers: The deep kernel
  representer theorem directly states the optimality of the propagated
  centers. This drastically reduces the number of parameters to be chosen
  in the architecture. 
\end{itemize}

\begin{rem}[Relation to NN representer theorem]
  We remark that the paper \cite{JMLR:v20:18-418} derives a
representer theorem for deep neural networks based on the use of a particular 
second-order total variation regularization term. Its use is motivated by the
idea of promoting activation functions with favorable propertis, namely they 
should be \textit{simple} for practical relevance and
\textit{piecewise linear}, because this works in practice. 
The resulting optimal activations, for each node, are shown to be piecewise
linear splines.
This representer theorem is comparable to the deep kernel representer
theorem \Cref{th:dk_representer_theorem} when 
combined with the setup of the SDKN,
and in this respect our approach has
some additional generality. %
In view of item (a) we can cover the piecewise linear spline
kernels in our SDKN architecture, hence generalize that architecture. 
But the SDKN structure
also allows the use of other kernels,
while preserving its inherent optimality.
This furthermore also yields the optimality of the linear layers,
which were not discussed in \cite{JMLR:v20:18-418}.
With respect to sparsity, i.e.\ item (b), 
\cite{JMLR:v20:18-418} leverages a function norm for regularization, 
that turns out to be an $\ell^1$-norm of coefficients, 
and thus can be used to promote sparsity,
i.e.\ reducing the number of nonzero parameters from $N$ to $K \ll N$.
However there is no explicit control of the number of nonzero terms
in contrast to our explicit first layer center choice.
The item (c) is a particular advantage
compared to \cite{SCARDAPANE201919}, where fixed (non-optimized) centers are
used within each layer.
\end{rem}

\section{Analysis of approximation properties} \label{sec:analysis}

Given a particular setup of the SDKN,
the mapping which is realized by that SDKN is inherently based on the provided (training) data $X_N$ due to its construction based on the representer  \Cref{th:dk_representer_theorem}, 
from which its optimality is given. 
Thus, without any data, i.e.\ $X_N = \emptyset$, only the zero mapping can be realized.  
Provided there is sufficient training data $X_N$, 
it will be shown in the following that the SDKN satisfies universal approximation results in different asymptotic regimes: 
We assume $X_N = \Omega$, but require only $M=3$ centers $z_1, z_2, z_3 \in \Omega$ for the optimization of the single-dimensional kernel function layers. 
In what follows, we stress that only the structure of the SDKN is based on the representer  \Cref{th:dk_representer_theorem}, while the coefficients in the 
kernel expansion are set to convenient values, and are not necessarily the result of a 
data-dependent optimization process.

We remark that proving the universal approximation properties is not possible by reusing standard NN approximation statements, for two reasons: 
First, the mapping of NNs are decoupled from the (training) data 
and second they use fixed pre-chosen activation functions instead of optimizable activation functions (see Section \ref{subsubsec:overall_setup}). 
Thus we need new tools and arguments.

In order to derive such statements, the focus will be on the approximation of scalar-valued outputs. 
The extension to vector valued outputs can be accomplished in a straightforward way by treating the output dimensions separately. We further restrict to $\Omega = [0, 1]^d \subset \R^d$, but we remark that generalizations to more general compact sets $\Omega$ are possible, using transformation arguments. 

\noindent We will make use of the following notation:
\begin{enumerate}
\item $\mathcal{F}_{L,w,M}$ denotes the class of functions from $\Omega \subset \R^{d_0}$ to $\R^{d_{2L+1}}$ that can be realized by any SDKN with $L \in \N$ 
optimizable nonlinear activation function layers and $L+1$ linear layers.
Further, $w:= \max(d_0, \ldots, d_{2L+1}) \in \N$ denotes the maximal dimension and $M$ refers to the number of centers. 
Due to the comparability with neural networks, $w$ will also be called the \textit{width} of the SDKN, while $L$ will be called its \textit{depth}. Note that the number of layers therefore is $2L+1$. 
An example for this notation is also given in Figure \ref{fig:visualization_NN_DK}.
In the following we restrict to the scalar-valued case, i.e.\ $d_{2L+1} = 1$.
\item $\dist(\varphi, \mathcal{F}) := \inf_{f \in \mathcal{F}} \Vert \varphi - f \Vert_{L^\infty(\Omega)}$ for $\varphi \in C(\Omega)$ for some $\mathcal{F} \subset C(\Omega)$. 
Note that $\mathcal{F}$ is not necessarily closed, thus we consider $\inf_{f \in \mathcal{F}}$ instead of $\min_{f \in \mathcal{F}}$.
\item Starting with Figure \ref{fig:visualization_NN_DK_inf_centers},
we visualize small SDKN architectures that can be used to approximate simple functions.
\end{enumerate}

\subsection{Universal approximation in the number of centers} \label{subsubsec:infinite_center_case}

In order to show that the proposed structured deep kernel setup satisfies a universal approximation property in the number of centers, 
we recall a version of the Kolmogorov-Arnolds theorem \cite{lorentz1996constructive}. %

\begin{theorem}[Kolmogorov-Arnolds Theorem]
\label{th:kolmogorov_theorem}
There exist $d$ constants $\lambda_j > 0, j=1, \ldots, d$ with $\sum_{j=1}^d \lambda_j \leq 1$, 
and $2d+1$ continuous, strictly increasing functions $\phi_q$, $q=0, \ldots., 2d$, which map $[0,1]$ to itself, such that every continuous function $f$ of $d$ 
variables on $[0,1]^d$ can be represented in the form 
\begin{align*}
f(x) = f(x^{(1)}, \ldots, x^{(d)}) = \sum_{q=0}^{2d} \Phi \left( \sum_{j=1}^d \lambda_j \phi_{q}(x^{(j)}) \right)
\end{align*}
for some $\Phi \in C([0,1])$, depending on $f$.
\end{theorem}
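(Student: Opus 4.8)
The plan is to follow the classical route of Kolmogorov and Arnold in the streamlined form due to Lorentz and Kahane, which splits the construction into two essentially independent parts: first the construction of the \emph{universal} inner functions $\phi_q$, which depend only on $d$, and then the construction of the \emph{$f$-dependent} outer function $\Phi$ by an iteration that sums a geometric series. Since $\sum_{j=1}^d \lambda_j \le 1$ and each $\phi_q$ maps $[0,1]$ into $[0,1]$, every argument $\sum_{j=1}^d \lambda_j \phi_q(x^{(j)})$ lies in $[0,1]$, so requiring $\Phi \in C[0,1]$ is consistent with the statement.

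For the inner functions, I would fix rationally independent constants $\lambda_j > 0$ with $\sum_{j=1}^d \lambda_j \le 1$, and for each ``rank'' $k \in \N$ cover the cube $[0,1]^d$ by families of small closed cubes obtained by subdividing each coordinate into short intervals and shifting the resulting grid by $2d+1$ distinct translations indexed by $q = 0, \dots, 2d$. The decisive combinatorial fact is that these $2d+1$ shifted grids can be arranged so that every point of $[0,1]^d$ lies, for at least $d+1$ of the indices $q$, in the interior of a cube of the $q$-th grid. One then seeks continuous nondecreasing $\phi_q : [0,1] \to [0,1]$ such that, for each cube $Q$ of the $q$-th grid at rank $k$, the image $\{ \sum_{j=1}^d \lambda_j \phi_q(x^{(j)}) : x \in Q \}$ lies in a short interval $I_{Q,q}$, and such that for fixed $q$ and $k$ these intervals are pairwise disjoint as $Q$ varies. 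Existence of such $\phi_q$ follows from a Baire category argument: in the complete metric space of continuous nondecreasing maps $[0,1] \to [0,1]$ with the supremum metric, each of the countably many ``rank-$k$ separation'' conditions defines a dense open subset, so their intersection is nonempty; a final rescaling $\phi_q \mapsto (\phi_q + \varepsilon\,\mathrm{id})/(1+\varepsilon)$ makes the functions strictly increasing while only shrinking the separation by a harmless amount.

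With the $\phi_q$ fixed, I would construct $\Phi$ iteratively. Given $f$ with $\|f\|_\infty \le 1$, choose a rank $k$ fine enough that $f$ oscillates by at most $\frac{1}{2d+1}$ on every cube of every grid, and define a continuous $\Phi_1$ that equals (approximately) $\frac{1}{2d+1}$ times a sampled value of $f$ on $Q$ on each interval $I_{Q,q}$, interpolated linearly in between. Using that every $x$ lies in a good cube for at least $d+1$ indices $q$ (and bounding the contribution of the remaining indices by $\|\Phi_1\|_\infty$), one obtains $\|\, f - \sum_{q=0}^{2d} \Phi_1(\sum_{j=1}^d \lambda_j \phi_q(x^{(j)})) \,\|_\infty \le 1 - \theta$ for a fixed $\theta = \theta(d) \in (0,1)$, together with $\|\Phi_1\|_\infty \le C(d)$. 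Applying the same construction to the residual and iterating produces functions $\Phi_m$ with $\|\Phi_m\|_\infty \le C(d)(1-\theta)^{m-1}$, so that $\Phi := \sum_{m \ge 1} \Phi_m$ converges uniformly to a continuous function on $[0,1]$ satisfying $f = \sum_{q=0}^{2d} \Phi(\sum_{j=1}^d \lambda_j \phi_q(x^{(j)}))$, which is the claimed representation.

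The main obstacle is the inner-function step: establishing that each separation condition is dense and open in the function space, which rests on the combinatorial bookkeeping of how the $2d+1$ shifted grids overlap (the ``at least $d+1$ good indices'' property) together with a careful piecewise-linear perturbation argument. Once the $\phi_q$ are in hand the outer-function iteration is routine — essentially a contraction/geometric-series estimate — and the upgrade from nondecreasing to strictly increasing $\phi_q$ is cosmetic. Alternatively, since the theorem is used here only as a black box, one may simply invoke the explicit construction of Sprecher or the exposition in \cite{lorentz1996constructive} and omit the details.
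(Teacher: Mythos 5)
The paper does not prove this statement at all: \Cref{th:kolmogorov_theorem} is recalled as a classical result and used as a black box, with the proof delegated to the cited monograph \cite{lorentz1996constructive} --- exactly the option you mention in your closing remark, and the only one needed for the purposes of this paper. Your sketch is the standard Kolmogorov--Lorentz route from that reference (universal inner functions $\phi_q$ via a Baire-category argument over shifted grids with the ``at least $d+1$ good indices'' covering property, then the $f$-dependent outer function $\Phi$ by the contraction/geometric-series iteration), and it is sound in outline. The one step I would not let stand as written is the final rescaling $\phi_q \mapsto (\phi_q + \varepsilon\,\mathrm{id})/(1+\varepsilon)$: the disjointness of the interval images $I_{Q,q}$ is a quantitative condition whose gaps shrink to zero as the rank $k$ grows, so a fixed $\varepsilon>0$ perturbation cannot be guaranteed to preserve separation at all ranks simultaneously. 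The standard fix is to build strict monotonicity into the Baire argument itself --- for each pair of rationals $p<q$ the set $\{\phi : \phi(p)<\phi(q)\}$ is open and dense in the nondecreasing functions, so genericity delivers strictly increasing $\phi_q$ satisfying all separation conditions at once --- rather than to impose it by an a posteriori perturbation.
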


Using this theorem, it is possible to show that any continuous function $f: [0,1]^d \rightarrow \R$ can be approximated to arbitrary accuracy by an SDKN setup of fixed inner dimension and depth. 
An exemplary setup for input dimension $d=2$ is visualized in Figure \ref{fig:visualization_NN_DK_inf_centers}. 
The details are given in Theorem \ref{th:unbounded_nr_centers}, and its assumptions are collected in the following. 
In particular, a setup using $L=2$ is sufficient to derive a universal approximation statement:

\begin{assumption}%
[Universal approximation in the number of centers]
\label{ass:inf_centers_assumptions} ~
\begin{enumerate}
\item The kernels $k_2, k_4$ used in the single-dimensional mappings are universal kernels, i.e.\ their RKHS are dense in the space of continuous functions.
\item The number of centers $M\in \N$ and the centers can be chosen arbitrarily within $[0,1]^d$ (unbounded number of centers).
\end{enumerate}
\end{assumption}

\begin{figure}[H]
\def\layersep{2cm}
\centering %
\begin{tikzpicture}[scale=0.5, shorten >=1pt,->,draw=black!50, node distance=\layersep]
    \tikzstyle{every pin edge}=[<-,shorten <=1pt]
    \tikzstyle{neuron}=[circle,fill=black!25,minimum size=8pt,inner sep=0pt]
    \tikzstyle{input neuron}=[neuron, fill=green!50];
    \tikzstyle{output neuron}=[neuron, fill=red!50];
    \tikzstyle{hidden neuron}=[neuron, fill=blue!50];
    \tikzstyle{annot} = [text width=4em, text centered]

	\node[annot] (hl1) at (-3, -5.5) {Input};
	\node[annot] (hl2) at (19, -5.5) {Output};

    \foreach \name / \y in {1,2}
        \node[input neuron,pin={[pin edge={-}]left:}] (I-\name) at (0,-4-\y) {};

	\node[hidden neuron, label={[label distance=-.1cm]90:\tiny{$x^{(1)}$}}] (H1-1) at (1.5*\layersep,-1) {};
	\node[hidden neuron, label={[label distance=-.1cm]90:\tiny{$x^{(2)}$}}] (H1-2) at (1.5*\layersep,-2) {};
	\node[hidden neuron, label={[label distance=-.1cm]90:\tiny{$x^{(1)}$}}] (H1-3) at (1.5*\layersep,-3) {};
	\node[hidden neuron, label={[label distance=-.1cm]90:\tiny{$x^{(2)}$}}] (H1-4) at (1.5*\layersep,-4) {};
	\node[hidden neuron, label={[label distance=-.1cm]90:\tiny{$x^{(1)}$}}] (H1-5) at (1.5*\layersep,-5) {};
	\node[hidden neuron, label={[label distance=-.1cm]90:\tiny{$x^{(2)}$}}] (H1-6) at (1.5*\layersep,-6) {};
	\node[hidden neuron, label={[label distance=-.1cm]90:\tiny{$x^{(1)}$}}] (H1-7) at (1.5*\layersep,-7) {};
	\node[hidden neuron, label={[label distance=-.1cm]90:\tiny{$x^{(2)}$}}] (H1-8) at (1.5*\layersep,-8) {};
	\node[hidden neuron, label={[label distance=-.1cm]90:\tiny{$x^{(1)}$}}] (H1-9) at (1.5*\layersep,-9) {};
	\node[hidden neuron, label={[label distance=-.1cm]90:\tiny{$x^{(2)}$}}] (H1-10) at (1.5*\layersep,-10) {};        

    \node[hidden neuron, label={[label distance=-.1cm]90:\tiny{$\tilde\phi_0(x^{(1)})$}}] (H2-1) at (3*\layersep,-1) {};
    \node[hidden neuron, label={[label distance=-.1cm]90:\tiny{$\tilde\phi_0(x^{(2)})$}}] (H2-2) at (3*\layersep,-2) {};
    \node[hidden neuron, label={[label distance=-.1cm]90:\tiny{$\tilde\phi_1(x^{(1)})$}}] (H2-3) at (3*\layersep,-3) {};
    \node[hidden neuron, label={[label distance=-.1cm]90:\tiny{$\tilde\phi_1(x^{(2)})$}}] (H2-4) at (3*\layersep,-4) {};
    \node[hidden neuron, label={[label distance=-.1cm]90:\tiny{$\tilde\phi_2(x^{(1)})$}}] (H2-5) at (3*\layersep,-5) {};
    \node[hidden neuron, label={[label distance=-.1cm]90:\tiny{$\tilde\phi_2(x^{(2)})$}}] (H2-6) at (3*\layersep,-6) {};
    \node[hidden neuron, label={[label distance=-.1cm]90:\tiny{$\tilde\phi_3(x^{(1)})$}}] (H2-7) at (3*\layersep,-7) {};
    \node[hidden neuron, label={[label distance=-.1cm]90:\tiny{$\tilde\phi_3(x^{(2)})$}}] (H2-8) at (3*\layersep,-8) {};
    \node[hidden neuron, label={[label distance=-.1cm]90:\tiny{$\tilde\phi_4(x^{(1)})$}}] (H2-9) at (3*\layersep,-9) {};
    \node[hidden neuron, label={[label distance=-.1cm]90:\tiny{$\tilde\phi_4(x^{(2)})$}}] (H2-10) at (3*\layersep,-10) {};            

	\node[hidden neuron, label={\tiny{$\sum \lambda_j \tilde\phi_0(x^{(j)})$}}] (H3-1) at (4.5*\layersep,-2.5) {};
	\node[hidden neuron, label={\tiny{$\sum \lambda_j \tilde\phi_1(x^{(j)})$}}] (H3-2) at (4.5*\layersep,-4) {};
	\node[hidden neuron, label={\tiny{$\sum \lambda_j \tilde\phi_2(x^{(j)})$}}] (H3-3) at (4.5*\layersep,-5.5) {};
	\node[hidden neuron, label={\tiny{$\sum \lambda_j \tilde\phi_3(x^{(j)})$}}] (H3-4) at (4.5*\layersep,-7) {};
	\node[hidden neuron, label={\tiny{$\sum \lambda_j \tilde\phi_4(x^{(j)})$}}] (H3-5) at (4.5*\layersep,-8.5) {};	        

	\node[hidden neuron, label={\tiny{$\tilde\Phi(\ldots)$}}] (H4-1) at (6*\layersep,-2.5) {};
	\node[hidden neuron, label={\tiny{$\tilde\Phi(\ldots)$}}] (H4-2) at (6*\layersep,-4) {};
	\node[hidden neuron, label={\tiny{$\tilde\Phi(\ldots)$}}] (H4-3) at (6*\layersep,-5.5) {};
	\node[hidden neuron, label={\tiny{$\tilde\Phi(\ldots)$}}] (H4-4) at (6*\layersep,-7) {};
	\node[hidden neuron, label={\tiny{$\tilde\Phi(\ldots)$}}] (H4-5) at (6*\layersep,-8.5) {};	

    \foreach \name / \y in {1}
    \node[output neuron,pin={[pin edge={-}]right:}, label={\tiny{$\sum_{q=0}^{2d} \tilde \Phi(\ldots)$}}] (O-\name) at (8*\layersep,-5.5) {};    

	\foreach \dest in {1,3,5,7,9}
		\path [lightgray] (I-1) edge (H1-\dest);    
	\foreach \dest in {2,4,6,8,10}
		\path [lightgray] (I-2) edge (H1-\dest);

    \path [lightgray] (H2-1) edge (H3-1);
	\path [lightgray] (H2-3) edge (H3-2);
	\path [lightgray] (H2-5) edge (H3-3);
	\path [lightgray] (H2-7) edge (H3-4);
	\path [lightgray] (H2-9) edge (H3-5);

    \path [lightgray] (H2-2) edge (H3-1);
	\path [lightgray] (H2-4) edge (H3-2);
	\path [lightgray] (H2-6) edge (H3-3);
	\path [lightgray] (H2-8) edge (H3-4);
	\path [lightgray] (H2-10) edge (H3-5);

    \foreach \source in {1,...,5}
	    \foreach \dest in {1,...,1}
	        \path [lightgray] (H4-\source) edge (O-\dest);    

	\foreach \node in {1,...,10}
		\path [black] (H1-\node) edge (H2-\node);
		
	\foreach \node in {1,...,5}
		\path [black] (H3-\node) edge (H4-\node);

\end{tikzpicture}
\caption{Visualization for the unbounded number of centers case for $d = d_0 = 2, d_{2L+1} = 1$: Unbounded number of centers, but otherwise fixed setup. The network is mimicking the Kolmogorov-Arnolds decomposition of a target function according to
  Thm.\ \ref{th:kolmogorov_theorem} and approximating (indicated by $\tilde{}$)
  the corresponding functions $\phi_i$ and $\Phi$.
The first, third and fifth mapping use a linear kernel, 
the second and fourth one use single dimensional kernels $k_2, k_4$. 
Within the linear kernel layers, not all connections are required. 
The first layer just duplicates the inputs, the second layer approximates the mappings $\phi_q(\cdot)$, the third layer
builds the linear combination of the approximants $\sum_{j=1}^d \lambda_j \tilde\phi_{q}(x^{(j)})$ and the
fourth layer approximates the mappings $\Phi$ while the last layer builds the sum $\sum_{q=0}^{2d} \tilde\Phi(\ldots)$.}
\label{fig:visualization_NN_DK_inf_centers}
\end{figure}

\begin{theorem}[Universal approx.\ for unbounded number of centers] \label{th:unbounded_nr_centers}
For any $d \in \N$, 
let $\Omega = [0,1]^{d}$ and consider an arbitrary continuous function $f: \Omega \rightarrow \R$. 
Then it is possible under \Cref{ass:inf_centers_assumptions} to approximate this function $f$ to arbitrary accuracy using an SDKN
of finite width $w = (2d+1)d$ and finite depth $L=2$, i.e.\
\begin{align*}
  \lim_{M \rightarrow \infty} \dist(f, \mathcal{F}_{2, (2d+1)d, M}(\Omega)) = 0.
\end{align*}
\end{theorem}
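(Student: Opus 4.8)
The plan is to realize the Kolmogorov–Arnold representation of \Cref{th:kolmogorov_theorem} layer by layer with the five-layer ($L=2$) SDKN architecture depicted in \Cref{fig:visualization_NN_DK_inf_centers}, and then quantify the error incurred by replacing the exact univariate functions $\phi_q$ and $\Phi$ by kernel approximants. First I would fix $f \in C(\Omega)$ and invoke \Cref{th:kolmogorov_theorem} to obtain the constants $\lambda_j$, the $2d+1$ inner functions $\phi_q \colon [0,1] \to [0,1]$, and the outer function $\Phi \in C[0,1]$ with $f(x) = \sum_{q=0}^{2d} \Phi\!\left(\sum_{j=1}^d \lambda_j \phi_q(x^{(j)})\right)$. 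I would then describe the five mappings: $f_1$ (linear kernel) duplicates each input coordinate $2d+1$ times, producing a vector in $\R^{(2d+1)d}$ whose blocks are copies of $(x^{(1)},\dots,x^{(d)})$; $f_2$ (single-dimensional kernel) applies, in the $q$-th block and $j$-th coordinate, a one-dimensional kernel approximation of $\phi_q$; $f_3$ (linear kernel) forms, for each $q$, the weighted sum $\sum_{j=1}^d \lambda_j \,(\text{approx of }\phi_q(x^{(j)}))$, collapsing the $(2d+1)d$-dimensional vector to a $(2d+1)$-dimensional one; $f_4$ (single-dimensional kernel) applies in each coordinate a one-dimensional kernel approximation of $\Phi$; and $f_5$ (linear kernel) sums the $2d+1$ resulting values. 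The width is $\max$ of all involved dimensions, which is $(2d+1)d$ at layers $1,2$, hence the claimed width; the depth is $L=2$.

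The key analytic step is the approximation of the univariate functions. Since the kernels $k_2, k_4$ are universal (Assumption~\ref{ass:inf_centers_assumptions}.1), their RKHS are dense in $C([0,1])$, so for any $\varepsilon>0$ and any continuous $\psi$ on a compact interval there is a finite kernel expansion $\sum_i \alpha_i k(\cdot, z_i)$ within $\varepsilon$ of $\psi$ in $\|\cdot\|_\infty$; the number of centers $z_i$ needed grows as $\varepsilon \to 0$, which is exactly what the $M \to \infty$ limit permits. I would first choose $\tilde\phi_q$, a kernel approximation of $\phi_q$ with $\|\tilde\phi_q - \phi_q\|_{\infty,[0,1]} < \delta$; then the intermediate quantities $\xi_q := \sum_j \lambda_j \tilde\phi_q(x^{(j)})$ satisfy $|\xi_q - \sum_j \lambda_j \phi_q(x^{(j)})| \le \delta \sum_j \lambda_j \le \delta$, so all $\xi_q$ lie in a fixed compact interval $[-\delta, 1+\delta] \subset [-1,2]$ (recall $\sum_j \lambda_j \le 1$ and $\phi_q$ maps into $[0,1]$). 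On this compact interval $\Phi$ is uniformly continuous, so I pick $\delta$ small enough (via the modulus of continuity of $\Phi$) that $|\Phi(\xi_q) - \Phi(\sum_j \lambda_j \phi_q(x^{(j)}))| < \varepsilon/(2(2d+1))$ for all $q$ and all $x$; then I choose a kernel approximation $\tilde\Phi$ of $\Phi$ on $[-1,2]$ (extended continuously) with $\|\tilde\Phi - \Phi\|_{\infty,[-1,2]} < \varepsilon/(2(2d+1))$. Summing the $2d+1$ contributions and using the triangle inequality yields $\|f - h\|_{\infty,\Omega} < \varepsilon$, where $h$ is the SDKN output.

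A few technical points need care. The single-dimensional kernel layer of the SDKN uses \emph{centers} that are propagated data points, and the formulation \eqref{eq:optim_activation_function} fixes only $M$ of them; here $X_N = \Omega$ and we are free to pick centers arbitrarily in $[0,1]^d$, so propagating them through $f_1$ (a coordinate duplication) gives centers whose coordinates range over all of $[0,1]$, which suffices to place kernel translates densely enough to approximate $\phi_q$ and $\Phi$ — I would spell out that the relevant one-dimensional centers for the $j$-th coordinate of block $q$ are $\{z_i^{(j)}\}$, and density of $\Sp\{k(\cdot,z), z \in [0,1]\}$ in $C([0,1])$ lets us take $M$ large. One must also check \Cref{prop:linear_layer_2} applies to $f_1, f_3, f_5$: the duplication and weighted-sum maps are linear in the (propagated) coordinates, and Proposition~\ref{prop:linear_layer_2} guarantees any linear combination of the propagated feature components is realizable by a linear-kernel layer with propagated centers, regardless of the span condition — this is precisely why Proposition~\ref{prop:linear_layer_2} was isolated. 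I expect the main obstacle to be bookkeeping rather than depth: carefully tracking the compact domains on which each univariate approximation must hold (so that $\Phi$ is only ever evaluated on a controlled set), and verifying that the kernel-translate spans available after propagation through the linear layers are rich enough — but since the input coordinates pass through $f_1$ unchanged up to duplication, this reduces to one-dimensional density of the universal kernels, which is given.
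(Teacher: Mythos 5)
Your proposal is correct and follows essentially the same route as the paper's proof: Kolmogorov--Arnold decomposition, realization of the five-layer structure of Figure \ref{fig:visualization_NN_DK_inf_centers}, approximation of the univariate functions $\phi_q$ and $\Phi$ via density of kernel translates of the universal kernels, a uniform-continuity argument for the composition error, and \Cref{prop:linear_layer_2} for the linear layers. The only cosmetic difference is that you approximate $\Phi$ on a fixed enlarged interval $[-1,2]$ (after continuous extension) whereas the paper works on the ranges $I_q$ of the perturbed inner sums, which amounts to the same bookkeeping.
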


As the proof is rather technical it can be found in the appendix, see \proofref{th:unbounded_nr_centers}
This approach via the unbounded number of centers case rather is of theoretical nature than
of practical use, see also e.g.\ \cite{girosi1989representation} for the same case in neural networks. 
In the SDKN case, the challenge is to approximate the mappings $\phi_{q}$ and $\Phi$ to sufficient precision. 
However, in practice this is difficult due to stability issues for kernel methods in the case of large number of centers, see also the explanations
in Section \ref{subsec:rbf_approach}. %

\subsection{Universal approximation in the width of the network} 
\label{subsubsec:infinite_width_case}

In the case of unbounded width, we consider an SDKN setup using $L=1$, 
i.e. $f_3 \circ f_2 \circ f_1: \R^{d_0} \rightarrow \R^{d_3}$ with $f_l: \R^{d_{l-1}} \rightarrow \R^{d_l}, l=1,2,3$. 
The mappings $f_1$ and $f_3$ describe linear mappings, $f_2$ is a single-dimensional kernel mapping. 
While $d_0$ and $d_3$ are fixed and given by the input and output dimension of the learning problem, 
$d_1 = d_2$ are unbounded and we consider the case $d_1 = d_2 \rightarrow \infty$.
The situation is visualized in Figure \ref{fig:visualization_NN_DK_inf_width}. Thus the overall model considered here is given by 
\begin{align*}
&f(x) = A_3f_2(A_1x), \qquad A_1 \in \R^{d_1 \times d_0}, A_3 \in \R^{d_3 \times d_2}, \\
&f_2^{(j)}(x) = \sum_{i=1}^M \alpha_{2,i}^{(j)} k^{(j)}(x^{(j)}, (A_1 z_i)^{(j)}), ~ 1 \leq j \leq d_1 = d_2.
\end{align*}

\begin{figure}[H]
\def\layersep{2cm}
\centering %
\begin{tikzpicture}[scale=0.5, shorten >=1pt,->,draw=black!50, node distance=\layersep]
    \tikzstyle{every pin edge}=[<-,shorten <=1pt]
    \tikzstyle{neuron}=[circle,fill=black!25,minimum size=8pt,inner sep=0pt]
    \tikzstyle{input neuron}=[neuron, fill=green!50];
    \tikzstyle{output neuron}=[neuron, fill=red!50];
    \tikzstyle{hidden neuron}=[neuron, fill=blue!50];
    \tikzstyle{annot} = [text width=4em, text centered]

	\node[annot] () at (-2.5, -2.5) {Input};
	\node[annot] () at (9, -2.5) {Output};

    \foreach \name / \y in {1,2,3}
        \node[input neuron,pin={[pin edge={-}]left:}] (I-\name) at (0,-.5-\y) {};

    \foreach \name / \y in {1,2,3,9,10}
        \node[hidden neuron] (H1-\name) at (\layersep,3-\y) {};
        
    \foreach \name / \y in {1,2,3,9,10}
        \node[hidden neuron] (H2-\name) at (2*\layersep,3-\y) {};

    \foreach \name / \y in {1}
        \node[output neuron,pin={[pin edge={-}]right:}] (O-\name) at (3*\layersep,-1.5-\y) {};    

    \node(dots) at (1.5*\layersep,-2.5){\vdots};

    \foreach \source in {1,2,3}
	    \foreach \dest in {1,2,3,9,10}
	        \path [lightgray] (I-\source) edge (H1-\dest);    
    
	\foreach \source in {1,2,3,9,10}
		\foreach \dest in {1}
	        \path [lightgray] (H2-\source) edge (O-\dest);   

	\foreach \node in {1,2,3,9,10}
		\path [black] (H1-\node) edge (H2-\node);
		
\end{tikzpicture} 
\caption{Visualization of the unbounded width case for $d_0 = d = 3, d_1 = d_2, d_3 = 1$: Unbounded width, i.e.\ $ d_1 = d_2 \rightarrow \infty$, but otherwise fixed setup.}
\label{fig:visualization_NN_DK_inf_width}
\end{figure}
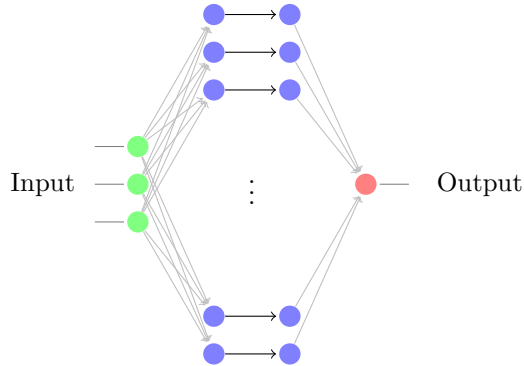

We focus again on a univariate output, i.e.\ $d_3 = 1$. 
We collect the requirements in the following:
\begin{assumption}%
[Universal approximation in the width of the network]
\label{ass:wide_assumptions} ~ 
\begin{enumerate}
\item We consider a single-dimensional kernel $k_2$ that is built using a radial basis function $\varphi: \R_{\geq 0} \rightarrow 
\R$ such that 
\begin{equation*}
\overline{\mathrm{span} \{ f ~|~ \exists a > 0 \forall x > 0:  f(x) = 
\varphi(ax) \} }= C([0, 1]).
\end{equation*}

\item At least 2 different centers $z_1 \neq z_2$ are given.
\end{enumerate}
\end{assumption}

\begin{rem}
An exact characterization of the first requirement within \Cref{ass:wide_assumptions} seems to be possible with the help of advanced tools from harmonic analysis, 
however this is beyond the scope of this paper. 
We remark, that e.g.\ the Gaussian or the basic Mat\'ern kernel satisfy this assumption due to the Stone--Weierstrass theorem.
\end{rem}

Then, we obtain the following approximation statement: 

\begin{theorem}[Universal approximation for unbounded width] \label{th:unbounded_width}
For any $d \in \N$, 
let $\Omega = [0,1]^{d}$ and consider an arbitrary continuous function $f: \Omega \rightarrow \R$. Then it is possible under the Assumptions \ref{ass:wide_assumptions} to approximate this function $f$ to arbitrary accuracy using an SDKN of depth $L=1$ and $M=2$ centers, i.e.\
\begin{align*}
  \lim_{d_1=d_2 \rightarrow \infty} \dist(f, \mathcal{F}_{1, d_1, 2}(\Omega)) = 0.
\end{align*}
\end{theorem}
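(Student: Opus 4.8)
The strategy is to reduce the multivariate approximation problem to a superposition of one-dimensional ridge functions and then realize each ridge function with the single-dimensional kernel layer $f_2$. First I would invoke a classical universal approximation result for single-hidden-layer networks with a nonpolynomial activation — for instance the Leshno--Lin--Pinkus--Schocken theorem cited in Section \ref{subsec:universal_approx_prop} — to write, for a given $f \in C(\Omega)$ and $\varepsilon > 0$, an approximant of the form $x \mapsto \sum_{k=1}^{n} c_k \, \psi(\langle w_k, x\rangle + \theta_k)$ with $\|f - \sum_k c_k \psi(\langle w_k,\cdot\rangle + \theta_k)\|_{L^\infty(\Omega)} < \varepsilon$, where $\psi$ is any fixed nonpolynomial continuous function of my choosing. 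The idea is to choose $\psi$ to be something the kernel layer can produce, then match the SDKN architecture: the first linear layer $f_1 = A_1$ produces the $d_1$ preactivations $\langle w_k, x\rangle + \theta_k$ (the bias being absorbed, or harmless by translation invariance as remarked after \Cref{prop:linear_layer}); the single-dimensional layer $f_2$ applies a univariate function coordinatewise; and the final linear layer $f_3 = A_3$ forms the output linear combination $\sum_k c_k(\cdot)$. By \Cref{prop:linear_layer_2} the two linear layers are unproblematic: any linear combination compatible with the (here, full-dimensional) data can be realized, and with $\Omega = [0,1]^d$ and $X_N = \Omega$ there is no rank deficiency.

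The crux is the middle layer. The $j$-th coordinate of $f_2$ has the form $f_2^{(j)}(x) = \sum_{i=1}^M \alpha_{2,i}^{(j)} \varphi\bigl(\varepsilon_j \,|x^{(j)} - c_{ij}|\bigr)$ for suitable shifts $c_{ij} = (A_1 z_i)^{(j)}$, i.e. a shallow RBF expansion in one variable with only $M=2$ centers. So I need: given a target univariate continuous function on a bounded interval (the image of the $j$-th preactivation coordinate, which is bounded since $\Omega$ is compact and $A_1$ fixed), approximate it uniformly by $\alpha_1 \varphi(\varepsilon_1 |t - c_1|) + \alpha_2 \varphi(\varepsilon_2 |t - c_2|)$ with $z_1 \ne z_2$ giving two distinct shifts. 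Here is where Assumption \ref{ass:wide_assumptions}(1), $\overline{\operatorname{span}\{\varphi(a\cdot)\}} = C([0,1])$, enters: after choosing one center, say $c_1$, I take differences or scalings of $\varphi(\varepsilon|t-c_1|)$ as $\varepsilon$ ranges over $(0,\infty)$ and invoke density. Actually the cleanest route is to choose the \emph{activation} in the network-approximation step to be exactly $\psi = \varphi$ (which is legitimate precisely because $\varphi$ is continuous and, by the span-density hypothesis, nonpolynomial): then each ridge term $c_k \varphi(\langle w_k, x\rangle + \theta_k)$ is realized by a single unit of $f_2$, so $d_1 = n$ units suffice and each $f_2^{(j)}$ only needs to reproduce a single scaled-and-shifted copy of $\varphi$ — which one center already gives after the linear layer supplies the shift. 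This makes the two-center budget comfortable; the second center is spare or used to handle the absolute value (splitting $\varphi(|t-c|)$ behavior on the two sides of $c$) if $\varphi(\varepsilon|\cdot|)$ rather than $\varphi(\varepsilon\,\cdot)$ must be used.

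The main obstacle I anticipate is the mismatch between the linear activation argument $\langle w_k, x\rangle + \theta_k$ assumed in the classical theorem and the \emph{radial} form $\varphi(\varepsilon|t - c|)$ forced by the RBF kernel: the kernel layer cannot produce $\varphi$ of a signed argument directly, only of $|t-c|$. Handling this requires restricting the preactivation to one side of each center — e.g. by translating the data so that each relevant preactivation coordinate lies entirely above $c_1$, using that $\Omega$ is compact so the preactivations lie in a fixed bounded interval and the shift $c_1$ can be pushed below it — after which $t \mapsto \varphi(\varepsilon|t - c_1|) = \varphi(\varepsilon(t-c_1))$ genuinely is a scaled copy of $\varphi$ and Assumption \ref{ass:wide_assumptions}(1) applies verbatim. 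I expect the bookkeeping of biases/shifts and the verification that translation invariance of the kernel lets us absorb $\theta_k$ freely (as flagged after \Cref{prop:linear_layer}) to be the fiddly part, but not conceptually deep; the universal approximation for fixed nonpolynomial activations plus \Cref{prop:linear_layer_2} do the real work, and letting $d_1 = d_2 = n \to \infty$ (equivalently $\varepsilon \to 0$ in the density statement) drives the error to zero.
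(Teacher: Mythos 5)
Your plan founders on the step you yourself flag as the ``main obstacle,'' and the proposed fix does not work. The shift seen by the middle layer is not a free bias: the $j$-th unit computes $\varphi\bigl(\varepsilon_j\,|(A_1x)^{(j)}-(A_1z_i)^{(j)}|\bigr)$, where the kink sits at the \emph{propagated center} $(A_1z_i)^{(j)}$ with $z_i\in\Omega$. Because the kernel is translation invariant, any bias or translation of the data shifts the preactivation and the propagated center by the same amount and cancels exactly (this is precisely the remark after \Cref{prop:linear_layer}); and since $z_i$ lies in $\Omega=[0,1]^d$, the value $(A_1z_i)^{(j)}$ is always attained inside the range of the preactivation over $\Omega$. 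So you can never push the center below the relevant interval: each unit can only produce functions of the ridge variable that are \emph{even} about the projected center, and a single unit cannot realize a general ridge term $\psi(\langle w_k,x\rangle+\theta_k)$. Your parenthetical ``the second center is spare or used to handle the absolute value'' is exactly where the real work lies, and it is missing.

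The paper's proof supplies precisely the two ingredients your plan lacks. First, Lemma \ref{lem:decomposition_function}: any $h\in C([a,b])$ splits as $h=h_1+h_2$ with $h_1$ symmetric about $a^\top z_1$ and $h_2$ symmetric about $a^\top z_2$, built by an iterative reflection construction; each symmetric piece is then approximated by a sum $\sum_j \alpha^{(j)}\varphi(|\sigma_j x_a|)$ anchored at one of the two centers, using Assumption \ref{ass:wide_assumptions}(1) on the half-line side — this is why exactly two distinct centers are needed and suffice. Second, the decomposition requires $a^\top z_1\neq a^\top z_2$ for every ridge direction $a$ used, which an off-the-shelf Leshno--Lin--Pinkus--Schocken approximant does not guarantee (its directions $w_k$ are arbitrary and may be orthogonal to $z_1-z_2$). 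The paper instead invokes the Vostrecov--Kreines theorem (Theorem \ref{th:vostrecov_kreines}), which gives density of ridge superpositions with directions restricted to a relatively open spherical cap around $(z_1-z_2)/\Vert z_1-z_2\Vert$, so that $a^\top z_1\neq a^\top z_2$ holds for all directions employed. Without the symmetric-decomposition lemma and this restriction on directions, your argument does not go through as written.
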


The proof of Theorem \ref{th:unbounded_width} can be found in the appendix, see \proofref{th:unbounded_width}. %

\subsection{Universal approximation in the depth of the network} 
\label{subsubsec:infinite_depth_case}

For standard feedforward neural networks, it was pointed out that \textit{deep} networks asymptotically perform better than \textit{shallow} networks: Most of those work focussed on the \relu \ activation function, see e.g.\ \cite{guhring2020error} or \cite{elbrachter2021deep} for a discussion and an overview. \\
In this section we prove the universal approximation property also for the deep case of the SDKN, by mimicing common NN approximation constructions. By using the flexibility of kernel methods, we establish these results for a wide range of kernels. \\
For this, we discuss in Section \ref{subsubsec:flat_limits} the so called flat limit of kernels. Subsequently, we provide in Section \ref{subsubseq:deep_construction_of_approximant} the approximation results for SDKNs with unbounded depth.

\begin{figure}[H]
\def\layersep{2cm}
\centering %
\begin{tikzpicture}[scale=0.45, shorten >=1pt,->,draw=black!50, node distance=\layersep]
    \tikzstyle{every pin edge}=[<-,shorten <=1pt]
    \tikzstyle{neuron}=[circle,fill=black!25,minimum size=8pt,inner sep=0pt]
    \tikzstyle{input neuron}=[neuron, fill=green!50];
    \tikzstyle{output neuron}=[neuron, fill=red!50];
    \tikzstyle{hidden neuron}=[neuron, fill=blue!50];
    \tikzstyle{annot} = [text width=4em, text centered]

	\node[annot] () at (-1.5*\layersep, -2.5) {Input};
	\node[annot] () at (8*\layersep, -2.5) {Output};    

    \foreach \name / \y in {1,2,3}
        \node[input neuron,pin={[pin edge={-}]left:}] (I-\name) at (0,-.5-\y) {};

    \foreach \name / \y in {1,...,4}
        \node[hidden neuron] (H1-\name) at (\layersep,-\y) {};
        
    \foreach \name / \y in {1,...,4}
        \node[hidden neuron] (H2-\name) at (2*\layersep,-\y) {};        
        
    \foreach \name / \y in {1,...,4}
        \node[hidden neuron] (H3-\name) at (3*\layersep,-\y) {};
        
    \foreach \name / \y in {1,...,4}
        \node[hidden neuron] (H4-\name) at (4.5*\layersep,-\y) {};
        
    \foreach \name / \y in {1,...,4}
        \node[hidden neuron] (H5-\name) at (5.5*\layersep,-\y) {};

    \foreach \name / \y in {1}
        \node[output neuron,pin={[pin edge={-}]right:}] (O-\name) at (6.5*\layersep,-1.5-\y) {};    

    \node(dots) at (3.75*\layersep,-2.5){\dots};

    \foreach \source in {1,...,3}
	    \foreach \dest in {1,...,4}
	        \path [lightgray] (I-\source) edge (H1-\dest);    
    
    \foreach \source in {1,...,4}
        \foreach \dest in {1,...,4}
            \path [lightgray] (H2-\source) edge (H3-\dest);
            
	\foreach \source in {1,...,4}
		\foreach \dest in {1}
	        \path [lightgray] (H5-\source) edge (O-\dest);   

	\foreach \node in {1,...,4}
		\path [black] (H1-\node) edge (H2-\node);
		
	\foreach \node in {1,...,4}
		\path [black] (H4-\node) edge (H5-\node);

\end{tikzpicture}
\caption{Visualization of the unbounded depth case for $d_0 = 3, d_1 = .. = d_{L-1} = 4, d_L = 1$: Unbounded depth, i.e.\ $L \in \N$ arbitrary, but otherwise bounded width and number of centers.}
\label{fig:visualization_NN_DK_inf_depth}
\end{figure}
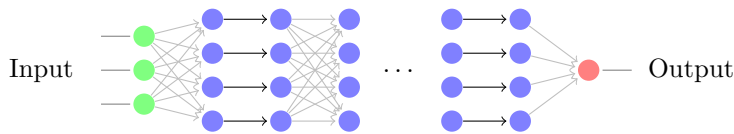

\subsubsection{Flat limit of kernels}
\label{subsubsec:flat_limits}

As elaborated in Section \ref{subsec:approx_with_kernels}, 
the radial basis function kernels depend on a shape parameter $\varepsilon > 0$. 
The case of $\varepsilon \rightarrow 0$ is refered to as the \textit{flat limit} of kernels, 
as the shape of the RBF functions becomes very flat - in contrast to their peaky shape for large values of $\varepsilon$. 
This case was studied in the kernel literature \cite{DRISCOLL2002413, LARSSON2005103}, and close connections to polynomial interpolation were derived.
Especially in the $1D$ case $\Omega = [0, 1]$ 
- which is present in the activation function layers due to the single-dimensional kernels - precise statements can be derived, 
that hold under mild requirements. 
The following theorem gives a precise statement for the use of two and three interpolation points (see \cite[Theorem 3.1]{DRISCOLL2002413}).

\begin{theorem}
\label{th:flat_kernel_theorem}
Let $N \in \{2, 3\}$ distinct data points $X_N = \{x_1, \ldots, x_N\} \subset \R$ and corresponding target values $\{f_1, \ldots, f_N\} \subset \R$ be given. 
Suppose the basis function
\begin{align*}
\varphi(r) = \sum_{j=0}^\infty a_{j} r^{2j} %
\end{align*}
is strictly positive definite (i.e.\ the kernel $k(x,z) := \varphi(\varepsilon\Vert x - z \Vert)$ is strictly positive definite). If 
\begin{align}
\label{eq:requirements_flat_kernel}
\begin{aligned}
a_0 &\neq 0,~ a_1 \neq 0 \hspace{1.7cm} \text{in the case $N=2$} \\
a_1 &\neq 0,~ 6a_0 a_2 - a_1^2 \neq 0 \hspace{.4cm} \text{in the case $N=3$},
\end{aligned}
\end{align}
then the kernel interpolant $s(x, \varepsilon)$ based on the nodes $X_N$ satisfies for any $x \in \R$
\begin{align}
\label{eq:convergence_to_polynomial}
\lim_{\varepsilon \rightarrow 0} s(x, \varepsilon) = p_{N-1}(x),
\end{align}
where $p_{N-1}(x) \in \mathbb{P}_{N-1}$ is the polynomial interpolating $f$ on the nodes $X_N$. %
\end{theorem}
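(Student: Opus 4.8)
The plan is to write the kernel interpolant explicitly in terms of the expansion coefficients $a_j$ of $\varphi$ and the shape parameter $\epsilon$, and then perform a careful Taylor expansion in $\epsilon$ around $\epsilon = 0$. First I would recall that the kernel interpolant on nodes $X_N = \{x_1,\dots,x_N\}$ is $s(x,\epsilon) = \sum_{i=1}^N \alpha_i(\epsilon)\, k(x,x_i)$ with $k(x,y) = \varphi(\epsilon|x-y|)$, where the coefficient vector $\alpha(\epsilon) = (\alpha_1(\epsilon),\dots,\alpha_N(\epsilon))^\top$ solves the linear system $A(\epsilon)\alpha(\epsilon) = f$ with $A(\epsilon)_{ij} = \varphi(\epsilon|x_i - x_j|)$ and $f = (f_1,\dots,f_N)^\top$. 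By strict positive definiteness $A(\epsilon)$ is invertible for every $\epsilon > 0$, so $s(x,\epsilon)$ is well defined; the subtlety is that $A(\epsilon) \to A(0) = a_0 \mathbf{1}\mathbf{1}^\top$, a rank-one singular matrix, so one cannot simply pass to the limit in $\alpha(\epsilon)$. Instead I would expand the vector $b(x,\epsilon) := (\varphi(\epsilon|x-x_1|),\dots,\varphi(\epsilon|x-x_N|))^\top$ and the matrix $A(\epsilon)$ in even powers of $\epsilon$ using $\varphi(r) = \sum_j a_j r^{2j}$, obtaining $A(\epsilon) = \sum_{j\ge 0} \epsilon^{2j} A_j$ and $b(x,\epsilon) = \sum_{j\ge 0} \epsilon^{2j} b_j(x)$ with $A_0 = a_0\mathbf{1}\mathbf{1}^\top$, $(A_1)_{ij} = a_1 (x_i - x_j)^2$, etc., and $s(x,\epsilon) = b(x,\epsilon)^\top A(\epsilon)^{-1} f$.

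The key analytic step is to control $A(\epsilon)^{-1}$ as $\epsilon \to 0$. Here I would use the structure of the problem rather than a black-box perturbation result: since everything is finite-dimensional ($N \in \{2,3\}$) I can simply compute. For $N=2$, write $A(\epsilon) = a_0\mathbf{1}\mathbf{1}^\top + \epsilon^2 a_1 D + O(\epsilon^4)$ with $D$ a fixed rank-one-ish matrix built from $(x_1-x_2)^2$, invert the $2\times 2$ matrix $A(\epsilon)$ in closed form (its determinant is $\Theta(\epsilon^2)$, which is where $a_1 \neq 0$ enters to guarantee the leading term of the determinant does not vanish), and likewise expand $b(x,\epsilon)$; multiplying out and collecting the lowest-order terms, the $\epsilon \to 0$ limit of $b(x,\epsilon)^\top A(\epsilon)^{-1} f$ should come out to the affine function through $(x_1,f_1),(x_2,f_2)$, i.e. $p_1$. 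For $N=3$ the determinant of $A(\epsilon)$ is $\Theta(\epsilon^6)$ and its leading coefficient is a nonzero multiple of $6a_0a_2 - a_1^2$ times the squared Vandermonde-type factor $\prod_{i<j}(x_i-x_j)^2$ — this is precisely the second condition in \eqref{eq:requirements_flat_kernel} — so again $A(\epsilon)^{-1}$ has a controlled Laurent expansion, and collecting leading orders in $b(x,\epsilon)^\top A(\epsilon)^{-1} f$ yields the quadratic interpolant $p_2$. Throughout, the identities that make the cancellations work are the uniqueness of polynomial interpolation and the fact that the limiting interpolant must reproduce $f_i$ at $x_i$ (which one can check by plugging $x = x_i$: then $b(x_i,\epsilon) $ is the $i$-th column of $A(\epsilon)$, so $s(x_i,\epsilon) = f_i$ exactly, hence the limit is $f_i$).

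The cleanest way to organize this, and what I would actually do, is to invoke the cited result \cite[Theorem 3.1]{DRISCOLL2002413} directly for the statement as given, and only verify that the hypotheses match: namely that our $\varphi$ with the listed sign conditions on $a_0, a_1, a_2$ is an admissible radial basis function in their framework and that their nondegeneracy conditions for $N=2,3$ reduce to exactly \eqref{eq:requirements_flat_kernel}. In that case the "proof" is a short bookkeeping argument translating notation. I expect the main obstacle, if one insists on a self-contained argument, to be bounding the inverse of the nearly-singular matrix $A(\epsilon)$ in a way that tracks all orders of $\epsilon$ that survive in the final contraction with $b(x,\epsilon)$ and $f$ — equivalently, showing that the apparent negative powers of $\epsilon$ cancel — since naively $A(\epsilon)^{-1}$ blows up like $\epsilon^{-2}$ (for $N=2$) or $\epsilon^{-6}$ (for $N=3$) while $b(x,\epsilon)$ stays bounded; the cancellation is forced by the interpolation constraints but making it fully rigorous requires the careful matched-order expansion sketched above.
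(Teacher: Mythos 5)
The paper does not prove this statement at all: it is recalled verbatim as \cite[Theorem 3.1]{DRISCOLL2002413}, so your fallback option --- invoking that cited result and just checking that the hypotheses and the nondegeneracy conditions \eqref{eq:requirements_flat_kernel} match --- is exactly the paper's treatment. Your matched-order expansion sketch (determinant of order $\Theta(\epsilon^{2})$ for $N=2$ and $\Theta(\epsilon^{6})$ for $N=3$, with the exact-interpolation argument $s(x_i,\epsilon)=f_i$ forcing the cancellations) is consistent with how the original proof proceeds, so it is a correct outline of the same underlying argument rather than a genuinely different route.
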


We remark that the convergence in Eq.\ \eqref{eq:convergence_to_polynomial} is not only pointwise but even a compact convergence, i.e.\ uniform convergence on 
any compact subset $K \subset \R$. Like this, based on two distinct centers, it is possible to approximate any affine mapping $x \mapsto ax + b$ by driving the 
kernel parameter $\varepsilon$ to zero. Based on three pairwise distinct centers, it is even possible to approximate arbitrary quadratic polynomials $x \mapsto 
ax^2 + bx + c$.

A modification of the kernel width parameter of the kernel used in the SDKN setup is not required to achieve this flat limit: 
Any single-dimensional kernel function layer within the SDKN is preceded by a linear layer, 
thus the case of a small kernel parameter $\varepsilon$ can be realized by decreasing the magnitude of the preceding linear layer, 
i.e. $A \rightarrow \varepsilon A$, see \Cref{lem:basic_operations}.

\subsubsection{Deep construction of approximant}
\label{subsubseq:deep_construction_of_approximant}

In this section we provide a construction that shows that a deep SDKN can approximate any continuous function on a compact domain. 
The corresponding statement is made precise in Theorem \ref{th:infinite_depth}. 
The construction is inspired from related constructions in e.g.\ \cite{elbrachter2021deep, YAROTSKY2017103}, 
but a substantially smaller layout (in terms of the depth) is possible by using kernel properties. 
This and further comments are given in Remark \ref{rem:deep_case_construction}. \\

We will focus on $\Omega \subset \R^d_{\geq 0}$ for simplicity. We will refer to the following common assumptions on the SDKN
under consideration:
\begin{assumption}%
[Universal approximation in the depth of the network]
\label{ass:deep_assumptions} ~
\begin{enumerate}%
\item The kernels of the single-dimensional kernel layers satisfy the requirements within Eq.~\eqref{eq:requirements_flat_kernel} of \Cref{th:flat_kernel_theorem}.
\item There are given $3$ centers $z_1, z_2, z_3 \in \R_{\geq 0}^d$ such that $z_1^{(j)}, z_2^{(j)}, z_3^{(j)}$ are pairwise distinct for $j=1, \ldots, d$.
\end{enumerate}
\end{assumption}
\begin{rem}
There are versions of Theorem~\ref{th:flat_kernel_theorem} that result into the approximation of polynomials of higher degree, thus potentially simplifying 
some of the constructions that follow. However, this is possible only at the price of imposing stricter conditions on $X_N$, which may hold for the input 
centers, but not necessarily for the propagated ones. 
\end{rem}

Now we are in a position to state some preliminary constructions that will be used in the following. These constructions are similar to common constructions for 
ReLU-networks.

\begin{lemma}[Identity and squaring operation]
\label{lem:basic_operations}
Let $\Omega \subset \R_{\geq 0}$ be a compact interval. The functions $\psi_1: \Omega \rightarrow \R, x \mapsto x$ and $\psi_2: \Omega \rightarrow \R, x \mapsto x^2$ can be approximated by an SDKN satisfying the Assumptions \ref{ass:deep_assumptions} to arbitrary accuracy:
\begin{align*}
\dist(\psi_1, ~ \mathcal{F}_{1,1,3}(\Omega)) &= 0, ~~ \psi_1(x) = x \\
\dist(\psi_2, ~ \mathcal{F}_{1,1,3}(\Omega)) &= 0, ~~ \psi_2(x) = x^2.
\end{align*}
\end{lemma}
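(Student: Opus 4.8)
The statement asserts that a single activation-function layer (so $L=1$, width $1$, using $3$ centers) can approximate both $x \mapsto x$ and $x \mapsto x^2$ arbitrarily well on a compact interval $\Omega \subset \R_{\geq 0}$. Recall that a depth-$L=1$ SDKN has the form $f_3 \circ f_2 \circ f_1$ with $f_1, f_3$ linear and $f_2$ a single-dimensional kernel layer. For width $1$, each $f_l$ maps $\R \to \R$, so $f_1(x) = A_1 x$ is scalar multiplication, $f_3(x) = A_3 x$ is scalar multiplication, and $f_2(x) = \sum_{i=1}^3 \alpha_i k(x, (A_1 z_i))$ is a $1$D kernel interpolant/expansion based on the three propagated centers $A_1 z_1, A_1 z_2, A_1 z_3$. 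By Assumption \ref{ass:deep_assumptions}, the three centers have pairwise distinct coordinates, and the kernel satisfies the requirements \eqref{eq:requirements_flat_kernel} of Theorem \ref{th:flat_kernel_theorem} for $N=3$.

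The plan is to invoke the flat-limit result directly. First I would fix the three propagated nodes $\tilde z_i := A_1 z_i$ (pairwise distinct since the $z_i^{(j)}$ are, and $A_1 \neq 0$); note that as observed in Section \ref{subsubsec:flat_limits}, shrinking $A_1 \to \epsilon A_1$ realizes the flat limit $\epsilon \to 0$ without changing the kernel itself. For the squaring operation, I would choose target values $f_i = \tilde z_i^2$ at the three nodes; Theorem \ref{th:flat_kernel_theorem} then gives that the kernel interpolant $s(\cdot,\epsilon)$ converges compactly (uniformly on the compact set $\tilde\Omega := A_1\Omega$, suitably rescaled) to the quadratic interpolating polynomial $p_2$, which here is exactly $t \mapsto t^2$ since three points determine it and $t^2$ already passes through all three. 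Composing with $f_1: x \mapsto \epsilon A_1 x$ on the left — wait, more carefully: we want the overall map $x \mapsto x^2$, so I would set $f_1(x) = c x$ for a suitable scaling $c$, interpolate the values $(c z_i)^2$ at nodes $c z_i$ to recover $t \mapsto t^2$ in the flat limit, and then use $f_3(t) = c^{-2} t$ to rescale back, giving $x \mapsto c^{-2}(cx)^2 = x^2$. The uniform convergence on the compact set transfers through the continuous linear post-composition $f_3$, so $\dist(\psi_2, \mathcal{F}_{1,1,3}(\Omega)) = 0$. For the identity $\psi_1$, the argument is even simpler: one can either use the same machinery with target values $f_i = \tilde z_i$ (flat limit gives the degree-$\leq 2$ interpolant of a linear function, which is that linear function) and rescale, or more trivially absorb the identity into the linear layers alone — but since $f_2$ must be a genuine kernel layer in the SDKN architecture, the clean route is the flat-limit one with linear target data, yielding an affine limit $t \mapsto a t + b$ and choosing $a, b$ and the surrounding linear maps to produce exactly $x \mapsto x$.

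The main obstacle, though it is minor, is bookkeeping around the flat limit being realized \emph{inside} the SDKN: Theorem \ref{th:flat_kernel_theorem} is stated for a kernel with an explicit shape parameter $\epsilon$, whereas in the SDKN the kernel $k_2$ is fixed and the ``flat limit'' is emulated by the preceding linear layer $A_1 \to \epsilon A_1$ (as noted at the end of Section \ref{subsubsec:flat_limits}). I would need to state carefully that $s(\epsilon A_1 x, \epsilon) $ — the value of the fixed-$\epsilon=1$ kernel expansion on nodes $\epsilon A_1 z_i$ evaluated at $\epsilon A_1 x$ — coincides with the scaled-parameter interpolant, so Theorem \ref{th:flat_kernel_theorem}'s conclusion applies after the change of variables $t = \epsilon A_1 x$, and that uniform convergence on the (shrinking, but still fixed-after-rescaling) compact image set is what is needed. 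A second small point is checking that the quadratic (resp. linear) interpolant of the sampled values of $t^2$ (resp. $t$) is exactly $t^2$ (resp. $t$), which is immediate from uniqueness of interpolation. Everything else — existence of the required coefficients $\alpha_i$, continuity of post-composition by the linear $f_3$ preserving uniform convergence — is routine.
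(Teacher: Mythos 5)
Your proposal is correct and follows essentially the same route as the paper's proof: realize the flat limit of Theorem \ref{th:flat_kernel_theorem} through the scaling of the preceding linear layer, impose interpolation conditions at the three centers, and use uniqueness of polynomial interpolation to identify the limit with $x$ resp.\ $x^2$. The only cosmetic difference is that the paper imposes the (unscaled) data $\psi_j(z_i)$ directly, so the final linear weight can simply be set to $1$ and no $c^{-2}$ rescaling or change-of-variables bookkeeping is needed; your version, after unwinding the scaling, reduces to exactly the same interpolant.
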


\begin{proof}
Since both the mappings $\psi_1, \psi_2$ are $\R \rightarrow \R$, we choose $d_0 = d_1 = d_2 = 1$. The linear mapping in the beginning acts only as a scaling $x \rightarrow \varepsilon \cdot x$, the linear mapping in the end is not required, i.e.\ its weight can be set to $1$. Thus the output of the SDKN is given by
\begin{align*}
f_\varepsilon(x) = \sum_{i=1}^3 \alpha_i k(\varepsilon x, \varepsilon z_i)
\end{align*}
with $\alpha_i \in \R, \varepsilon > 0$ for a kernel $k$ satisfying the requirements within Theorem \ref{th:flat_kernel_theorem}. We choose $\alpha_i$ such that we have $f_\sigma(z_i) = \psi_j(z_i)$ for $i=1, 2$ ($j=1$) respective $i=1,2,3$ ($j=2$).
These are standard interpolation conditions that give the linear equation system (in case of $j=2$)
\begin{align*}
\begin{pmatrix}
k(\varepsilon z_1, \varepsilon z_1) 	& k(\varepsilon z_1, \varepsilon z_2)		& k(\varepsilon z_1, \varepsilon z_3) \\
k(\varepsilon z_2, \varepsilon z_1) 	& k(\varepsilon z_2, \varepsilon z_2)		& k(\varepsilon z_2, \varepsilon z_3) \\
k(\varepsilon z_3, \varepsilon z_1) 	& k(\varepsilon z_3, \varepsilon z_2)		& k(\varepsilon z_3, \varepsilon z_3) \\
\end{pmatrix}
\begin{pmatrix}
\alpha_1 \\
\alpha_2 \\
\alpha_3 \\
\end{pmatrix}
= 
\begin{pmatrix}
\psi_2(z_1) \\
\psi_2(z_2) \\
\psi_2(z_3) \\
\end{pmatrix}.
\end{align*}
For $\varepsilon \rightarrow 0$ the flat limit is encountered: 
Leveraging Theorem \ref{th:flat_kernel_theorem} yields 
\begin{align*}
\lim_{\varepsilon \rightarrow 0} f_\varepsilon(x) = p_2(x),
\end{align*}
with $p_2(x)$ being the the interpolating polynomial for the data $(z_1, \psi_j(z_1))$, $(z_2, \psi_j(z_2))$, $(z_3, \psi_j(z_3))$,
which is given by $p_2(x) = x$ for $j=1$ or $p_2(x) = x^2$ for $j=2$.
\end{proof}

We remark that the mapped centers $\psi_j(z_1), \psi_j(z_2), \psi_j(z_3)$ after the identity or squaring operation are still pairwise different due to the injectivity of $\psi_1, \psi_2$ on $\Omega \subset \R_{\geq 0}$, 
i.e.\ we have that $\psi_j(z_1)$, $\psi_j(z_2)$, $\psi_j(z_3)$ are pairwise distinct for both $j=1$ and $j=2$.

\begin{lemma}[Depth adjustment]
\label{lem:depth_adjustment}
Let $\Omega \subset \R_{\geq 0}$ be a compact interval and assume \Cref{ass:deep_assumptions}.
Given an SDKN of depth $L_1$, it is possible to approximate this SDKN to arbitrary accuracy using another SDKN of arbitrary depth $L_2 > L_1$, i.e., 
\begin{align*}
\forall w \in \N ~ \forall f \in \mathcal{F}_{L_1, w, 3}(\Omega): ~ \dist(f, \mathcal{F}_{L_2, w, 3}(\Omega)) = 0.
\end{align*}
\end{lemma}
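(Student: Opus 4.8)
The plan is to realize a deeper SDKN by \emph{prepending} to the given depth-$L_1$ network $L_2-L_1$ extra blocks, each consisting of one linear layer followed by one single-dimensional kernel layer and acting approximately as the identity on $\R^{d_0}$. Such a (linear, single-dimensional) block adds exactly one linear and one activation-function layer, so prepending $L_2-L_1$ of them turns the $2L_1+1$ layers into $2L_2+1$ layers while keeping the layer types strictly alternating and starting and ending with a linear layer; taking all dimensions inside the blocks equal to $d_0\le w$ leaves the width unchanged, and no new centers are introduced, so the resulting network lies in $\mathcal{F}_{L_2,w,3}(\Omega)$. Prepending (rather than appending) is convenient because the propagated centers seen by the new blocks then start from $z_1,z_2,z_3$, which are pairwise distinct coordinatewise by \Cref{ass:deep_assumptions}, whereas the images of the $z_i$ under a generic map need not be.

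First I would construct the near-identity block. Choosing its linear layer to be the scaling $x\mapsto\sigma x$ and the coefficients of the following single-dimensional layer, coordinatewise, so as to interpolate $x\mapsto x$ at the propagated centers, \Cref{th:flat_kernel_theorem} (applied exactly as in \Cref{lem:basic_operations}, and giving locally uniform convergence to the interpolating polynomial, which for the identity data is $x\mapsto x$) shows that as $\sigma\to0$ the block converges to $x\mapsto x$ uniformly on every compact subset of $\R^{d_0}$. Its hypotheses hold by \Cref{ass:deep_assumptions}: the single-dimensional kernels satisfy \eqref{eq:requirements_flat_kernel}, and as long as the propagated centers entering a block stay pairwise distinct coordinatewise — which they do once they are close enough to $z_1,z_2,z_3$ — the interpolation problems are solvable; moreover, by the injectivity remark following \Cref{lem:basic_operations}, the centers leaving the block are again admissible for the next one.

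Next I would chain the blocks and bound the total error. Fix $\varepsilon>0$ and a compact box $K\subset\R^{d_0}$ containing $\Omega$ and the centers. Viewing the new network's output as a function of the effective input passed to the original part and of the propagated centers used in the original part's single-dimensional layers, this function is continuous — a composition of linear maps and continuous kernels — hence uniformly continuous on $K$ times a compact neighbourhood of $(z_1,z_2,z_3)$, and it reduces to $f$ when the effective input is $x$ and the centers are exactly $z_1,z_2,z_3$. Choosing the scalings $\sigma_1,\dots,\sigma_{L_2-L_1}$ successively small enough makes each block approximate the identity on $K$ within an arbitrarily small $\beta$; a short telescoping estimate then bounds by $(L_2-L_1)\beta$ both the deviation of the composed blocks from the identity on $\Omega$ and the deviation of the propagated centers from $z_1,z_2,z_3$, and keeps all intermediate images inside $K$. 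Taking $\beta$ small enough relative to $\varepsilon$, to $\mathrm{dist}(\Omega\cup\{z_1,z_2,z_3\},\partial K)$, and to the minimal coordinatewise separation of the $z_i$, produces an SDKN in $\mathcal{F}_{L_2,w,3}(\Omega)$ within $\varepsilon$ of $f$ in $L^\infty(\Omega)$, which is the assertion.

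The main difficulty here is bookkeeping rather than anything deep. One must check that the prepended blocks slot into the alternating pattern without altering the width or the number of centers; that the flat-limit approximation of the identity is invoked on a single fixed compact set $K$ containing \emph{all} intermediate images of $\Omega$ (which is why the errors must be fixed first, and uses compactness of $\Omega$ — if $\Omega$ meets $\partial\R^d_+$ one simply enlarges $K$, harmlessly, since \Cref{th:flat_kernel_theorem} carries no sign restriction); and — the least obvious point — that perturbing the propagated centers of the original part, not merely its input, is also harmless, which is precisely the uniform-continuity observation above.
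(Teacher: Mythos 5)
Your proposal is correct and takes essentially the same route as the paper: the paper's proof also precomposes the given network with $L_2-L_1$ copies of the flat-limit near-identity approximation from \Cref{lem:basic_operations}, applied in every input dimension, so that $f = f\circ \mathrm{id} \approx f\circ\widetilde{\mathrm{id}} \in \mathcal{F}_{L_2,w,3}(\Omega)$. The only difference is that you spell out the bookkeeping (merging adjacent linear layers, the drift of the propagated centers through the new blocks, and the uniform-continuity/telescoping estimate on a fixed compact set) that the paper's short argument leaves implicit.
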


\begin{proof}
It suffices to prove the existence of a depth $L_2 - L_1$ SDKN that realizes an arbitrary accurate approximation $\widetilde{\id}$ of the identity mapping $\id: \R^{d_0} \rightarrow \R^{d_0}$:
If this is the case, we have $\mathcal{F}_{L_1, w, 3} \ni f = f \circ \id \approx f \circ \widetilde{\id} \in \mathcal{F}_{L_2, w, 3}$. \\
The realization of such an approximation $\tilde{\id}$ is possible due to Lemma \ref{lem:basic_operations} which states that $\psi_1(x) = x$ can be approximated to arbitrary accuracy. This can be employed in every input dimension $L_2 - L_1$ times.
\end{proof}

\begin{lemma}[Product module]
\label{lem:product_module}
Let $\Omega \subset \R_{\geq 0}^2$ be a compact domain. The function $\psi: \Omega \rightarrow \R, (x, y) \mapsto xy$ can be approximated by an SDKN satisfying 
the Assumptions \ref{ass:deep_assumptions} to arbitrary accuracy, i.e., 
\begin{align*}
\dist(\psi(x, y) = xy, ~ \mathcal{F}_{1,3,3}(\Omega)) = 0
\end{align*}
\end{lemma}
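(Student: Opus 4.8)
The plan is to use the classical polarization-type identity $xy = \frac{1}{2}\big((x+y)^2 - x^2 - y^2\big)$ together with the squaring operation from Lemma~\ref{lem:basic_operations}. Concretely, for an input $(x,y) \in \Omega \subset \R_{\geq 0}^2$, I would first apply a linear layer (realizable by Proposition~\ref{prop:linear_layer}, resp.\ Proposition~\ref{prop:linear_layer_2}) that produces the three-dimensional vector $(x+y,\, x,\, y)^\top$; this needs width $3$, which matches the claimed $\mathcal{F}_{1,3,3}$. Then the single-dimensional kernel layer acts componentwise, and on each of the three coordinates I use the flat-limit construction of Lemma~\ref{lem:basic_operations} (three centers, shape parameter $\sigma \to 0$) to approximate the squaring map $t \mapsto t^2$ simultaneously in all three dimensions, yielding an approximation of $\big((x+y)^2,\, x^2,\, y^2\big)^\top$. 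Finally, the closing linear layer takes the linear combination $\frac{1}{2}(\,\cdot\,)_1 - \frac{1}{2}(\,\cdot\,)_2 - \frac{1}{2}(\,\cdot\,)_3$ to obtain $xy$.

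The key steps in order: (i) note $\Omega \subset \R_{\geq 0}^2$ is compact, so $x+y$ ranges over a compact subinterval of $\R_{\geq 0}$, hence the squaring approximation of Lemma~\ref{lem:basic_operations} applies to each coordinate on a common compact interval; (ii) check that the propagated centers remain admissible — the three centers $z_1,z_2,z_3$ have pairwise distinct coordinates in each of the original $d$ dimensions by Assumption~\ref{ass:deep_assumptions}, and after the first linear layer the coordinate $x+y$ evaluated at the (two-dimensional projections of the) centers is $z_k^{(1)}+z_k^{(2)}$, which I need to be pairwise distinct for $k=1,2,3$; (iii) invoke the uniform (compact) convergence in Theorem~\ref{th:flat_kernel_theorem} so the componentwise errors are controlled uniformly on $\Omega$; (iv) propagate the error through the final (fixed, hence Lipschitz) linear layer and through the continuity of the polarization identity, and use a triangle-inequality argument to conclude $\dist(\psi, \mathcal{F}_{1,3,3}(\Omega)) = 0$.

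The main obstacle I anticipate is step (ii): guaranteeing that the centers propagated through the first linear layer still have pairwise distinct entries in the relevant coordinate, since Lemma~\ref{lem:basic_operations} (via Theorem~\ref{th:flat_kernel_theorem}) requires three \emph{distinct} nodes per single-dimensional kernel. If $z_1^{(1)}+z_1^{(2)}$, $z_2^{(1)}+z_2^{(2)}$, $z_3^{(1)}+z_3^{(2)}$ happen to collide, the construction breaks; one remedy is to observe that the linear layer has the freedom (by Proposition~\ref{prop:linear_layer_2} / the flexibility of choosing which centers to use and how to scale) to produce a vector $(\,a x + b y,\, x,\, y\,)^\top$ for a generic choice of scalars $a,b>0$, for which $a z_k^{(1)} + b z_k^{(2)}$ are pairwise distinct, and then use the identity $xy = \frac{1}{2ab}\big((ax+by)^2 - a^2 x^2 - b^2 y^2\big)$ instead — the final linear layer absorbs the constants $\frac{1}{2ab}$, $-\frac{a}{2b}$, $-\frac{b}{2a}$. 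A cleaner alternative, which I would actually prefer, is simply to apply Lemma~\ref{lem:basic_operations} to the first coordinate with its own (already-distinct) three centers $z_1^{(1)}+z_1^{(2)},\dots$ when they are distinct, and otherwise note the genericity argument removes the degenerate case; either way, once admissibility of the nodes is secured the rest is routine bookkeeping with uniform estimates.
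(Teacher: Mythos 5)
Your proposal is correct and follows essentially the same route as the paper's proof: the polarization identity $xy = \tfrac{1}{2\beta}\left[(x+\beta y)^2 - x^2 - y^2\right]$ realized by a linear layer, the flat-limit squaring operation of Lemma~\ref{lem:basic_operations} applied componentwise, a closing linear combination, and a generic choice of the mixing coefficient(s) to keep the propagated centers pairwise distinct in the new coordinate. The only cosmetic difference is that the paper additionally separates the case of linearly dependent input data (where $xy = c\,x^2$ is handled by the squaring module alone) and otherwise defers the realizability of the linear layers to \Cref{prop:linear_layer,prop:linear_layer_2}, which your appeal to those propositions covers in substance.
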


\begin{proof}
In the case that the input data $x$ and $y$ are linearly dependent (that means: the vectors $(x_i)_{i=1}^N \in \R^N$ (using the first dimension of the training data $X_N$) and $(y_i)_{i=1}^N \in \R^N$ (using the second dimension of the training data)), 
the output $xy = c \cdot x^2$ for some $c \in \R$ can be approximated by applying the squaring operation $x \mapsto x^2$ of \Cref{lem:basic_operations} to the input $x$ with a proper scaling. \\
Thus assume linear independence in the input. 
The \textit{product module} for this case is depicted in Figure \ref{fig:product_module}. We make use of
\begin{align} \label{eq:multiplication}
xy = \frac{1}{2\beta} \left( (x+\beta y)^2 - x^2 - \beta^2 y^2 \right), \beta > 0.
\end{align}
The linear combination $(x,y) \rightarrow x + \beta y$ can be computed by the linear layer in the beginning, which is possible based on \Cref{prop:linear_layer} due to the linear independence of the inputs. The linear combination of the squares is possible for the same reason.
So far the parameter $\beta$ can be chosen arbitrarily as long as $\beta \neq 0$.
Thus we can make a special choice, such that the assumptions for the application of \Cref{lem:basic_operations} are satisfied: 
It is required that $z_i^{(1)} + \beta z_i^{(2)}$, $i=1, \ldots, 3$ are pairwise distinct, which can be enforced based on $\beta \neq 0$ and the pairwise 
distinctness of $z_1^{(j)}, z_2^{(j)}, z_3^{(j)}$ for $j=1,2$. %
\end{proof}

In the following we restrain from explicitly elaborating on the realization of the linear layers via \Cref{prop:linear_layer}. Even in case of linear dependency, all required linear layers can be realized as proven in \Cref{prop:linear_layer_2}.

\begin{figure}[H]
\def\layersep{2cm}
\centering %
\begin{tikzpicture}[scale=0.45, shorten >=1pt,->,draw=black!50, node distance=\layersep]
    \tikzstyle{every pin edge}=[<-,shorten <=1pt]
    \tikzstyle{neuron}=[circle,fill=black!25,minimum size=8pt,inner sep=0pt]
    \tikzstyle{input neuron}=[neuron, fill=green!50];
    \tikzstyle{output neuron}=[neuron, fill=red!50];
    \tikzstyle{hidden neuron}=[neuron, fill=blue!50];
    \tikzstyle{relu neuron}=[neuron, fill=black, minimum size=5pt];    
    \tikzstyle{annot} = [text width=4em, text centered]
    \tikzstyle{module}=[circle,fill=black!25,minimum size=8pt,inner sep=0pt]

    \node[input neuron, pin={[pin edge={-}]left:}, label={\tiny{$x$}}] (I-1) at (0,-1) {};
    \node[input neuron, pin={[pin edge={-}]left:}, label={\tiny{$y$}}] (I-2) at (0,-3) {};        

	\node[hidden neuron, label={\tiny{$x$}}] (H1-1) at (\layersep,0) {};
    \node[hidden neuron, label={\tiny{$y$}}] (H1-2) at (\layersep,-2) {};
	\node[hidden neuron, label={\tiny{$x+\beta y$}}] (H1-3) at (\layersep,-4) {};

	\node[hidden neuron, label={\tiny{$x^2$}}] (H2-1) at (3*\layersep,0) {};
	\node[hidden neuron, label={\tiny{$y^2$}}] (H2-2) at (3*\layersep,-2) {};
    \node[hidden neuron, label={\tiny{$(x+\beta y)^2$}}] (H2-3) at (3*\layersep,-4) {};

    \node[output neuron, pin={[pin edge={-}]right:}, label={\tiny{$xy$}}] (O-1) at (4*\layersep,-2) {};    

	\path [lightgray] (I-1) edge (H1-1);
	\path [lightgray] (I-2) edge (H1-2);
	\path [lightgray] (I-1) edge (H1-3);
	\path [lightgray] (I-2) edge (H1-3);
    
	\foreach \source in {1,...,3}
		\foreach \dest in {1}
	        \path [lightgray] (H2-\source) edge (O-\dest);   

	\foreach \node in {1,...,3}
		\path [black] (H1-\node) edge (H2-\node);

	\node[annot] () at (6*\layersep, -2.1) {\Huge{=}};    

    \node[input neuron, pin={[pin edge={-}]left:}, label={\tiny{$x$}}] (I2-1) at (8*\layersep,-1) {};
    \node[input neuron, pin={[pin edge={-}]left:}, label={\tiny{$y$}}] (I2-2) at (8*\layersep,-3) {};        
	\node[module] (P-1) at (9*\layersep, -2) {P};  		
    \node[output neuron, pin={[pin edge={-}]right:}, label={\tiny{$x y$}}] (O-1) at (10*\layersep, -2) {};    

	\path (I2-1) edge (P-1);
	\path (I2-2) edge (P-1);	
	
	\path (P-1) edge (O-1);

\end{tikzpicture}
\caption{Visualization of the product module (left) with corresponding abbreviation (right), which approximates the function $\psi(x, y) = xy$ based on Eq.\ \eqref{eq:multiplication} to arbitrary accuracy. The weights of the final linear layer of the left figure are each $\frac{1}{2\beta}$.}
\label{fig:product_module}
\end{figure}
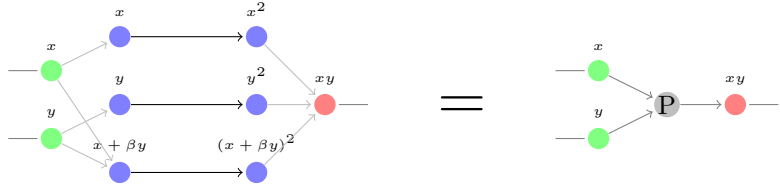

\begin{lemma}[Univariate monomial module]
\label{lem:univariate_monomial_module}
Let $\Omega \subset \R_{\geq 0}$ be a compact interval. The function $\psi: \Omega \rightarrow \R, x \mapsto x^n$ with $n \in \N$ can be approximated by an SDKN from $\mathcal{F}_{L, 4, 3}$ satisfying the Assumptions \ref{ass:deep_assumptions} to arbitrary accuracy:
\begin{align*}
\dist(\psi(x) = x^n, ~ \mathcal{F}_{L,4,3}(\Omega)) = 0
\end{align*}
with depth $L = \max( \lceil \log_2(n) \rceil, 1)$.
\end{lemma}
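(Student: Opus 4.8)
The plan is to build $x^n$ from the product module (Lemma~\ref{lem:product_module}) and the identity approximation (Lemma~\ref{lem:basic_operations}) via a balanced binary tree of multiplications, exactly as one does for ReLU networks (cf.\ \cite{YAROTSKY2017103}), but exploiting that each SDKN product module occupies only depth $1$. First I would observe that repeated squaring computes $x^{2^k}$ in $k$ product modules: $x \mapsto x^2 \mapsto x^4 \mapsto \cdots \mapsto x^{2^k}$, each step being one application of Lemma~\ref{lem:product_module} (or of the squaring part of Lemma~\ref{lem:basic_operations}), so $\lceil \log_2 n\rceil$ layers suffice when $n$ is a power of two. For general $n$, write $n$ in binary and form $x^n$ as a product of the precomputed powers $x^{2^{k}}$ corresponding to the nonzero bits; combining these also takes a balanced tree of multiplications whose depth is bounded by $\lceil \log_2 n\rceil$, and the precomputation of the powers themselves is at most $\lceil \log_2 n \rceil$ deep, with the two phases arrangeable so that the total depth is $\max(\lceil \log_2 n\rceil,1)$ (the $\max$ with $1$ covering $n=1$, where a single identity layer from Lemma~\ref{lem:basic_operations} is used).

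The bookkeeping I would make explicit is the width and center count. Each product module needs width $3$ and $3$ centers by Lemma~\ref{lem:product_module}; to carry partially-computed powers forward across a layer without destroying them, one routes them through an identity channel (Lemma~\ref{lem:basic_operations}, which needs width $1$ and $3$ centers). A naive tree would need many parallel channels, but one can organize the computation so that at any layer we hold a bounded number of intermediate quantities; in fact width $4$ is enough: one channel preserves the original $x$ (so higher powers $x^{2^{k+1}} = (x^{2^k})^2$ can be generated on demand), and the product module's three working channels suffice to multiply the current accumulated product by the next needed power. I would verify by induction on the layer index that the needed intermediate data always fit into $4$ dimensions and that the propagated centers stay pairwise distinct in each coordinate: since every building block ($\psi_1,\psi_2$, and the product module) is injective on $\R_{\ge 0}$ — products and squares of positive distinct numbers are distinct — Assumption~\ref{ass:deep_assumptions}(2) is preserved down the network, so each subsequent flat-limit argument (Theorem~\ref{th:flat_kernel_theorem}) remains applicable.

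Finally, the error control: each module only \emph{approximates} its target to arbitrary accuracy (flat limit $\sigma\to 0$), not exactly, so I would propagate errors through the composition. Because the domain is a fixed compact interval $\Omega\subset\R_{\ge0}$ and all intermediate maps (products, squares) are Lipschitz on the relevant bounded ranges, a standard composition estimate — choosing the accuracy of the inner modules small enough, from the output layer backward — gives $\dist(x^n,\mathcal{F}_{L,4,3}(\Omega))=0$. The main obstacle I anticipate is not any single estimate but the careful routing/width argument: showing that the balanced-tree schedule of squarings and multiplications, together with identity pass-throughs, genuinely fits inside width $4$ and depth $\max(\lceil\log_2 n\rceil,1)$ simultaneously, rather than the looser bounds a sloppy layout would give. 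Everything else — invoking Lemmas~\ref{lem:basic_operations} and \ref{lem:product_module}, preserving center distinctness, and the Lipschitz error bookkeeping — is routine.
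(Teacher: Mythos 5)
Your plan is essentially the paper's own construction: binary exponentiation with repeated squaring in one carried channel, multiplying in the bit-selected powers via the product module interleaved in the same layers (width $3+1=4$), identity pass-throughs where a bit is zero, and center distinctness preserved because powers of distinct nonnegative centers stay distinct. The only difference is cosmetic — you make the error propagation through composed approximate modules explicit, which the paper leaves implicit — so the approach matches.
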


\begin{proof}
The univariate monomial module is depicted in \Cref{fig:univariate_monomial_module}.
The case $n=1$ is simply the identity, 
whereas $n=2$ corresponds to the squaring operation, 
which were both treated in \Cref{lem:basic_operations}. 
Thus we focus on $n \geq 3$:
The approximation of $x^{2^{\lfloor \log_2(n) \rfloor}}$ can be realized by concatenating the squaring operation from \Cref{lem:basic_operations} $\lfloor \log_2(n) \rfloor$ times. 
The remaining factor $x^{n - 2^{\lfloor \log_2(n) \rfloor}}$ can be multiplied (in case $n - 2^{\lfloor \log_2(n) \rfloor} \neq 0$) in the end by the multiplication operation: 
The factor $x^{n - 2^{\lfloor \log_2(n) \rfloor}}$ was built in parallel and collected in the first dimension 
(i.e.\ in the top of Figure \ref{fig:univariate_monomial_module}) %
\end{proof}

The assumption on the pairwise distinctness of the centers transfers through the whole monomial module, as the approximation of the squaring and the product module is exact in the respective centers, and the pairwise distinctness of $z_1, z_2, z_3$ implies the pairwise distinctness of $z_1^n, z_2^n, z_3^n$ due to $\Omega \subset \R_{\geq 0}$.

\begin{figure}[H]
\def\layersep{2cm}
\centering %
\begin{tikzpicture}[scale=0.45, shorten >=1pt,->,draw=black!50, node distance=\layersep]
    \tikzstyle{every pin edge}=[<-,shorten <=1pt]
    \tikzstyle{neuron}=[circle,fill=black!25,minimum size=8pt,inner sep=0pt]
    \tikzstyle{input neuron}=[neuron, fill=green!50];
    \tikzstyle{output neuron}=[neuron, fill=red!50];
    \tikzstyle{hidden neuron}=[neuron, fill=blue!50];
    \tikzstyle{relu neuron}=[neuron, fill=black, minimum size=5pt];    
    \tikzstyle{annot} = [text width=4em, text centered]
    \tikzstyle{module}=[circle,fill=black!25,minimum size=8pt,inner sep=0pt]
    \tikzstyle{unused module}=[circle,fill=black!15,minimum size=8pt,inner sep=0pt]    

	\node[input neuron, pin={[pin edge={-}]left:}, label={\tiny{$x$}}] (I-1) at (0,-2) {};

	\node[hidden neuron, label={\tiny{$x$}}] (H1-1) at (\layersep,0) {};
    \node[hidden neuron, label={\tiny{$x^2$}}] (H1-2) at (\layersep,-2) {};    
	\node[module] (P1) at (1.8*\layersep, -1) {P};

	\node[hidden neuron, label={\tiny{$x^3$}}] (H2-1) at (3*\layersep,0) {};
    \node[hidden neuron, label={\tiny{$x^4$}}] (H2-2) at (3*\layersep,-2) {};    
  	\node[unused module] (P2) at (3.8*\layersep, -1) {P};    
    
	\node[hidden neuron, label={\tiny{$x^3$}}] (H3-1) at (5*\layersep,0) {};
    \node[hidden neuron, label={\tiny{$x^8$}}] (H3-2) at (5*\layersep,-2) {};    
  	\node[module] (P3) at (5.8*\layersep, -1) {P};        

    \node[output neuron, pin={[pin edge={-}]right:}, label={\tiny{$x^{11}$}}] (O-1) at (7*\layersep, 0) {};    

	\path [lightgray] (I-1) edge (H1-1);
	\path [lightgray] (I-1) edge (H1-2);
	
	\path (H1-2) edge (H2-2);
	\path (H1-1) edge (P1);
	\path (H1-2) edge (P1);		
	
	\path (P1) edge [out=70, in=180] (H2-1);	
	\path (H2-2) edge (H3-2);
	
	\draw[->, lightgray, dashed] (H2-1) -- (P2);	
	\draw[->, lightgray, dashed] (H2-2) -- (P2);	

	\draw[->, lightgray, dashed] (P2) to[out=70, in=200] (H3-1);		
	\path (H2-1) edge (H3-1);
	\path (H3-1) edge (P3);
	\path (H3-2) edge (P3);
	
	\path (P3) edge [out=70, in=180] (O-1);

	\node[annot] () at (8.5*\layersep, -1.1) {\Huge{=}};    

    \node[input neuron, pin={[pin edge={-}]left:}, label={\tiny{$x$}}] (I2-1) at (10*\layersep,-1) {};
	\node[module] (M-1) at (11*\layersep, -1) {M};
    \node[output neuron, pin={[pin edge={-}]right:}, label={\tiny{$x^{11}$}}] (O-1) at (12*\layersep, -1) {};

	\path (I2-1) edge (M-1);
	
	\path (M-1) edge (O-1);

\end{tikzpicture}
\caption{Visualization of the monomial module (left) with corresponding abbreviation (right), 
which approximates the function $\psi(x) = x^n$ to arbitrary accuracy. Here, we have $n=11$. 
The grey circles ``P'' refer to the product module of \Cref{lem:product_module}. 
The dashed lines indicate the position of another potential product module, which is however not required as there is no $x^4$ contribution to build $x^{11}$.}
\label{fig:univariate_monomial_module}
\end{figure}
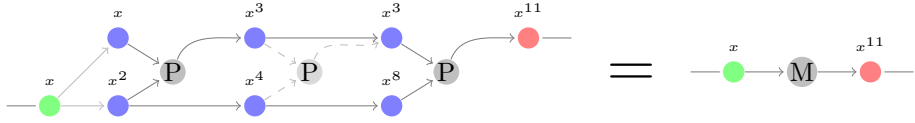

The following Lemma \ref{lem:bivariate_monomial_module} shows that the approximation of polynomials in two inputs is possible by combining the monomial and the 
product module.

\begin{lemma}[Bivariate monomial module]
\label{lem:bivariate_monomial_module}
Let $\Omega \subset \R_{\geq 0}^2$ be a compact domain. The function $\psi: \Omega \rightarrow \R, (x, y) \mapsto x^{a}y^{b}$ with $a, b \in \N$ can be approximated by an SDKN from $\mathcal{F}_{L, 8, 3}(\Omega)$ satisfying the Assumptions \ref{ass:deep_assumptions} to arbitrary accuracy:
\begin{align*}
\dist(\psi(x,y) = x^{a}y^{b}, \mathcal{F}_{L,8,3}(\Omega)) = 0
\end{align*}
with depth $L = \lceil \log_2(\max(a, b)) \rceil + 1$.
Furthermore the following extension holds:
\begin{align*}
\dist(\tilde{\psi}(x,y,z) = x^{a}y^{b}+\beta z, \mathcal{F}_{L,9,3}(\Omega)) = 0, ~ \beta \in \R.
\end{align*}
\end{lemma}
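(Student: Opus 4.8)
The plan is to build the bivariate monomial $x^a y^b$ by running two univariate monomial modules in parallel and then multiplying their outputs with a single product module. First I would apply \Cref{lem:univariate_monomial_module} in the first input coordinate to approximate $x \mapsto x^a$, which needs depth $\lceil \log_2 a\rceil$ (or $1$ if $a\le 2$) and width $4$; simultaneously, in the second input coordinate, I would apply the same lemma to approximate $y \mapsto y^b$, again with width $4$. The two computations run side by side in disjoint coordinate blocks, so the width needed to carry them both is $4+4=8$. Since the depths of the two modules may differ, I would use \Cref{lem:depth_adjustment} to pad the shorter of the two computations with (approximate) identity layers so that both finish at the same depth $\max(\lceil\log_2 a\rceil,\lceil\log_2 b\rceil)$ (with the obvious lower bound $1$); this does not increase the width beyond $8$. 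At this stage the network has produced, to arbitrary accuracy, the pair $(x^a, y^b)$.

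Next I would append one product module from \Cref{lem:product_module} to multiply these two numbers, giving $x^a y^b$. The product module has width $3$ and depth $1$, and it fits within the width-$8$ budget (in fact it only needs $3$ of the $8$ available dimensions). This accounts for the extra $+1$ in the depth formula $L = \lceil \log_2(\max(a,b))\rceil + 1$. To make the error control rigorous I would note that all the intermediate modules approximate continuous functions uniformly on compact sets, that the outputs $x^a$ and $y^b$ stay in a fixed compact subset of $\R_{\ge 0}$ since $\Omega$ is compact, and that multiplication is uniformly continuous on compacta; hence composing the approximations yields an arbitrarily small $L^\infty(\Omega)$ error, so the distance is $0$. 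One should also check, as in the remarks following the previous lemmas, that the propagated centers $z_1,z_2,z_3$ remain pairwise distinct throughout: the univariate modules preserve this by injectivity of $t\mapsto t^n$ on $\R_{\ge 0}$, and the linear combination feeding the product module preserves it for suitable $\beta>0$ exactly as in the proof of \Cref{lem:product_module}.

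For the extension $\tilde\psi(x,y,z) = x^a y^b + \beta z$, I would carry the third input coordinate $z$ along untouched through the whole network using approximate-identity layers (\Cref{lem:basic_operations} / \Cref{lem:depth_adjustment}) in one additional dimension, bringing the width to $9$, and then form the linear combination $x^a y^b + \beta z$ in the final linear layer of the SDKN (this is admissible by \Cref{prop:linear_layer_2}, with no additional depth). The depth is unchanged, so the same $L = \lceil\log_2(\max(a,b))\rceil+1$ works.

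The main obstacle I expect is the bookkeeping of the width and the depth synchronization: one has to verify that the two parallel univariate monomial modules, the carried-along identity channels, and the product module can all be laid out within the stated dimension bounds at every layer, and that padding the shorter branch with identity layers (whose approximation error is nonzero but controllable) does not spoil the uniform error estimate once everything is composed. Everything else is a routine composition-of-continuous-maps argument using the already-established lemmas.
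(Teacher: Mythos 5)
Your proposal is correct and follows essentially the same construction as the paper's proof: two univariate monomial modules run in parallel on the two inputs, synchronized via the depth-adjustment lemma, followed by one product module (giving the $+1$ in the depth), and for the extension the coordinate $z$ is carried along in one extra dimension and added by a final linear layer. Your additional remarks on uniform error control and center distinctness are consistent with the paper's treatment, which handles the latter in the surrounding remarks.
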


\begin{proof}
The standard bivariate monomial module for the approximation of $\psi(x,y)$ is depicted in Figure \ref{fig:bivariate_monomial_module}, top: Based on the two inputs $x$ and $y$, the univariate monomial module is applied to each of them. In case of $a \neq b$, the depth of the univariate monomial modules is adjusted to $\max (a, b)$ using Lemma \ref{lem:depth_adjustment}. This requires a depth of $L_1 = \max(\lceil \log_2(\max(a, b)) \rceil, 1)$. Subsequently, the product module from Lemma \ref{lem:product_module} is used to compute the final approximation of $x^{a}y^{b}$. Thus the total depth is given by $L = L_1 + 1$. \\
For the proof of the extension, i.e.\ the approximation of $\psi(x,y,z)$, the same construction can be extended with a final linear layer that adds the $+\beta z$ term. This is depicted in Figure \ref{fig:bivariate_monomial_module}, bottom.
\end{proof}

It might happen that the centers collapse after the standard bivariate monomial module, i.e.\ that $\psi(z_1^{(1)}, z_1^{(2)})$, $\psi(z_2^{(1)}, z_2^{(2)})$, $\psi(z_3^{(1)}, z_3^{(2)})$ are no longer pairwise distinct. Therefore the extension to $\tilde{\psi}$ was introduced. As $\beta \in \R$ can be chosen arbitrarily, this alleviates the beforementioned center collapse.

\begin{figure}[H]
\def\layersep{2cm}
\centering %
\begin{tikzpicture}[scale=0.45, shorten >=1pt,->,draw=black!50, node distance=\layersep]
    \tikzstyle{every pin edge}=[<-,shorten <=1pt]
    \tikzstyle{neuron}=[circle,fill=black!25,minimum size=8pt,inner sep=0pt]
    \tikzstyle{input neuron}=[neuron, fill=green!50];
    \tikzstyle{output neuron}=[neuron, fill=red!50];
    \tikzstyle{hidden neuron}=[neuron, fill=blue!50];
    \tikzstyle{relu neuron}=[neuron, fill=black, minimum size=5pt];    
    \tikzstyle{annot} = [text width=4em, text centered]
    \tikzstyle{module}=[circle,fill=black!25,minimum size=8pt,inner sep=0pt]

	\node[input neuron, pin={[pin edge={-}]left:}, label={\tiny{$x$}}] (I-1) at (0,0) {};
	\node[input neuron, pin={[pin edge={-}]left:}, label={\tiny{$y$}}] (I-2) at (0,-2) {};

	\node[module] (M-1) at (\layersep,0) {M};
    \node[module] (M-2) at (\layersep,-2) {M};    
    
    \node[hidden neuron, label={\tiny{$x^{a}$}}] (H1-1) at (2*\layersep, 0) {};    
    \node[hidden neuron, label={\tiny{$y^{b}$}}] (H1-2) at (2*\layersep, -2) {};        
    
	\node[module] (P-1) at (3*\layersep, -1) {P};    

    \node[output neuron, pin={[pin edge={-}]right:}, label={\tiny{$x^{a}y^{b}$}}] (O-1) at (4*\layersep, -1) {};    

	\path (I-1) edge (M-1);
	\path (I-2) edge (M-2);
	
	\path (M-1) edge (H1-1);
	\path (M-2) edge (H1-2);
	
	\path (H1-1) edge (P-1);
	\path (H1-2) edge (P-1);
	
	\path (P-1) edge (O-1);

	\node[annot] () at (5.5*\layersep, -1.1) {\Huge{=}};    

    \node[input neuron, pin={[pin edge={-}]left:}, label={\tiny{$x$}}] (I2-1) at (7*\layersep,0) {};
    \node[input neuron, pin={[pin edge={-}]left:}, label={\tiny{$y$}}] (I2-2) at (7*\layersep,-2) {};        
	\node[module] (B-1) at (8*\layersep, -1) {B};  		
    \node[output neuron, pin={[pin edge={-}]right:}, label={\tiny{$x^{a}y^{b}$}}] (O-1) at (9*\layersep, -1) {};    

	\path (I2-1) edge (B-1);
	\path (I2-2) edge (B-1);	
	
	\path (B-1) edge (O-1);

\end{tikzpicture}
\rule{.9\textwidth}{.2pt}
\begin{tikzpicture}[scale=0.45, shorten >=1pt,->,draw=black!50, node distance=\layersep]
    \tikzstyle{every pin edge}=[<-,shorten <=1pt]
    \tikzstyle{neuron}=[circle,fill=black!25,minimum size=8pt,inner sep=0pt]
    \tikzstyle{input neuron}=[neuron, fill=green!50];
    \tikzstyle{output neuron}=[neuron, fill=red!50];
    \tikzstyle{hidden neuron}=[neuron, fill=blue!50];
    \tikzstyle{relu neuron}=[neuron, fill=black, minimum size=5pt];    
    \tikzstyle{annot} = [text width=4em, text centered]
    \tikzstyle{module}=[circle,fill=black!25,minimum size=8pt,inner sep=0pt]

	\node[input neuron, pin={[pin edge={-}]left:}, label={\tiny{$x$}}] (I-1) at (0,0) {};
	\node[input neuron, pin={[pin edge={-}]left:}, label={\tiny{$y$}}] (I-2) at (0,-2) {};
	\node[input neuron, pin={[pin edge={-}]left:}, label={\tiny{$z$}}] (I-3) at (0,-4) {};	

	\node[module] (M-1) at (\layersep,0) {M};
    \node[module] (M-2) at (\layersep,-2) {M};
    
    \node[hidden neuron, label={\tiny{$x^{a}$}}] (H1-1) at (2*\layersep, 0) {};    
    \node[hidden neuron, label={\tiny{$y^{b}$}}] (H1-2) at (2*\layersep, -2) {};        
    \node[hidden neuron, label={\tiny{$z$}}] (H1-3) at (2*\layersep, -4) {};        
    
	\node[module] (P-1) at (3*\layersep, -1) {P};    
	
    \node[hidden neuron, label={\tiny{$x^{a}y^{b}$}}] (H2-1) at (4*\layersep, -1) {};    
    \node[hidden neuron, label={\tiny{$z$}}] (H2-2) at (4*\layersep, -3) {};        

    \node[output neuron, pin={[pin edge={-}]right:}, label={\tiny{$x^{a}y^{b}+\beta z$}}] (O-1) at (5*\layersep, -2) {};    

	\path (I-1) edge (M-1);
	\path (I-2) edge (M-2);
	\path (I-3) edge (H1-3);
	
	\path (M-1) edge (H1-1);
	\path (M-2) edge (H1-2);
	
	\path (H1-1) edge (P-1);
	\path (H1-2) edge (P-1);
	\path (H1-3) edge (H2-2);
	
	\path (P-1) edge (H2-1);

	\path (H2-1) edge (O-1);
	\path (H2-2) edge (O-1);

	\node[annot] () at (6.5*\layersep, -2.1) {\Huge{=}};    

    \node[input neuron, pin={[pin edge={-}]left:}, label={\tiny{$x$}}] (I2-1) at (8*\layersep,0) {};
    \node[input neuron, pin={[pin edge={-}]left:}, label={\tiny{$y$}}] (I2-2) at (8*\layersep,-2) {};        
    \node[input neuron, pin={[pin edge={-}]left:}, label={\tiny{$z$}}] (I2-3) at (8*\layersep,-4) {};            
	\node[module] (B-1) at (9*\layersep, -2) {B'};  		
    \node[output neuron, pin={[pin edge={-}]right:}, label={\tiny{$x^{a}y^{b}+\beta z$}}] (O-1) at (10*\layersep, -2) {};    

	\path (I2-1) edge (B-1);
	\path (I2-2) edge (B-1);	
	\path (I2-3) edge (B-1);	
	
	\path (B-1) edge (O-1);

\end{tikzpicture}
\caption{Top: Visualization of the bivariate monomial module (left) with corresponding abbreviation (right), which approximates the function $\psi(x, y) = x^{a}y^{b}$ to arbitrary accuracy. The grey circles ``M'' refer to the univariate monomial module, whereas the grey circle ``P'' refers to the product module. \newline
Bottom: Visualization for the extension, i.e.\ the approximation of $\tilde{\psi}$.}
\label{fig:bivariate_monomial_module}
\end{figure}
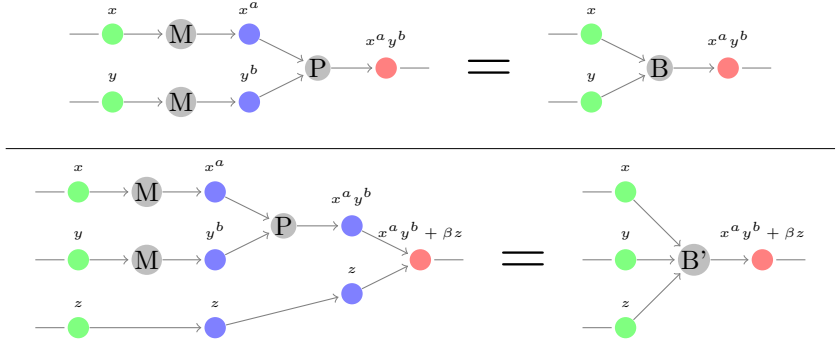

Using these preliminary construction results, we can finally show how to add successively arbitrary polynomials to build a sum:

\begin{lemma}[Addition module]
\label{lem:addition_module}
Let $\Omega \subset \R_{\geq 0}^d \times \R$ be a compact domain, $\alpha, \beta \in \R$, $n_j \in \N$, and define the mapping
\begin{align*}
\psi: \Omega &\rightarrow \R^{d+1}, \begin{pmatrix} x^{(1)} \\ \vdots \\ x^{(d)} \\ S \end{pmatrix} \mapsto \begin{pmatrix} x^{(1)} \\ \vdots \\ x^{(d)} \\ S' \end{pmatrix}, \\
&\text{with} ~ S' := S+\alpha \cdot \prod_{j=1}^d (x^{(j)})^{n_j} + \beta x^{(d)}.
\end{align*}
Then, under Assumptions \ref{ass:deep_assumptions}, an SDKN can approximately realize the mapping $\psi$ to arbitrary accuracy, i.e., 
\begin{align*}
\dist(\psi(x), \mathcal{F}_{L,d+8,3}(\Omega)) = 0, \qquad L \leq d \cdot \lceil \log_2(\max_{j=1,\ldots,d} n_j + 1) \rceil.
\end{align*}
\end{lemma}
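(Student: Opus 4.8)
The plan is to realise $\psi$ by a depth-$L$ SDKN that processes the $d$ input coordinates one at a time, maintaining in one auxiliary coordinate a running product $P_j\approx\prod_{i=1}^j(x^{(i)})^{n_i}$, while the input coordinates $x^{(1)},\dots,x^{(d)}$ and the accumulator $S$ are carried along unchanged by the identity approximation of \Cref{lem:basic_operations}, with the depths of the different parallel branches equalised via \Cref{lem:depth_adjustment}. All the linear recombinations that occur (scalings, sums, duplications) are realisable by \Cref{prop:linear_layer}, respectively by \Cref{prop:linear_layer_2} in the degenerate case of linearly dependent data, exactly as in the preceding proofs, so I would not comment on them further, following the convention announced after \Cref{lem:product_module}.

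The core building block is the incorporation of one factor $(x^{(j)})^{n_j}$ into the running product, which I would do by a \emph{fused exponentiation by squaring} rather than by computing $(x^{(j)})^{n_j}$ separately and then multiplying. Expand $n_j$ in binary with $t_j:=\lceil\log_2(n_j+1)\rceil$ bits, keep a copy $b$ of $x^{(j)}$ and the running value $r=P_{j-1}$, and at step $k=0,1,\dots,t_j-1$ apply a single basic SDKN block that, in parallel across disjoint coordinate slots, squares $b$ (via \Cref{lem:basic_operations}) to prepare $(x^{(j)})^{2^{k+1}}$ and, whenever the $k$-th bit of $n_j$ is set, runs a product module (\Cref{lem:product_module}) multiplying the current $b=(x^{(j)})^{2^k}$ into $r$; both operations only read the layer-$k$ values of $b$ and $r$, hence fit into one depth unit. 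Thus this step costs exactly $t_j=\lceil\log_2(n_j+1)\rceil$ in depth; for $j=1$ one instead simply invokes the univariate monomial module (\Cref{lem:univariate_monomial_module}), whose depth $\max(\lceil\log_2 n_1\rceil,1)$ is also $\le\lceil\log_2(n_1+1)\rceil$. Summing over $j=1,\dots,d$ gives $L\le\sum_{j=1}^d\lceil\log_2(n_j+1)\rceil\le d\cdot\lceil\log_2(\max_i n_i+1)\rceil$, as claimed. After $P_d\approx\prod_{j=1}^d(x^{(j)})^{n_j}$ is available alongside the preserved coordinates and $S$, a single linear layer — absorbed into the trailing linear layer of the last product module, hence costing no additional depth — forms $S'=S+\alpha P_d+\beta x^{(d)}$. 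A slot count as in the preceding constructions ($d$ pass-through coordinates, the accumulator, the running product $r$, the running power $b$, and the workspace of one product module) stays within the stated width $d+8$.

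The main obstacle I expect is bookkeeping rather than a new idea, on two fronts. First, one must verify that at no layer the number of simultaneously live auxiliary quantities exceeds the width budget — in particular that the identity branches used for depth matching of the pass-through coordinates are width-$1$ in-place maps and do not cost extra slots. Second, and more delicate, every invoked lemma requires the three centres to remain pairwise distinct. Since the squaring, identity and product modules act on $\R_{\ge 0}$, where they are injective/handled exactly on the three centres, the centre images in the pass-through coordinates and in the power coordinate $b$ stay pairwise distinct throughout; the only coordinate that could collapse the centres is the output $S'$, and this is exactly the purpose of the free parameter $\beta$ in the $+\beta x^{(d)}$ term: choosing $\beta$ generically keeps $S'(z_1),S'(z_2),S'(z_3)$ pairwise distinct, so that several addition modules can be chained (as will be done in \Cref{th:infinite_depth}), mirroring the role of the extension in \Cref{lem:bivariate_monomial_module}.

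A remark for the write-up: one should note that naively iterating \Cref{lem:bivariate_monomial_module} with exponent pair $(1,n_j)$ would only give per-step depth $\max(\lceil\log_2 n_j\rceil,1)+1$, which is strictly worse than $\lceil\log_2(n_j+1)\rceil$ already for $n_j=3$ and would not yield the stated bound; the fusion of the final "multiply into $P$" step with the squaring loop is what buys the one-layer-per-coordinate saving that makes the claimed $L$ attainable.
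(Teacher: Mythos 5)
Your overall architecture (coordinate-by-coordinate accumulation of the product, pass-through of $x^{(1)},\dots,x^{(d)}$ and $S$, per-coordinate depth governed by binary exponentiation, final linear layer forming $S'$) matches the paper's, and your closing remark on the depth count is fair: the paper also incorporates each factor via a bivariate-module step and its per-step count $\lceil\log_2(n_k+1)\rceil$ is justified only loosely, so your fused square-and-multiply is a legitimate way to attain the stated bound. The genuine gap is in your center bookkeeping. You claim that only the final output coordinate $S'$ can collapse the three centers, because squaring, identity and product are ``injective/handled exactly'' on $\R_{\geq 0}$. This is false for the running-product coordinate: a product of \emph{different} coordinates is not injective on the center triple. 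For instance with $d\geq 3$, $n_1=n_2=n_3=1$ and centers $z_1=(1,2,5,\dots)$, $z_2=(2,1,7,\dots)$, $z_3=(3,3,11,\dots)$ (admissible under \Cref{ass:deep_assumptions}), after incorporating $x^{(1)}$ and $x^{(2)}$ your running value $r\approx x^{(1)}x^{(2)}$ has center values $2,2,9$. The next fused step must feed $r$ into a product module, whose interior squares $r$ and $r+\beta b$ by three-point interpolation in the flat limit (\Cref{th:flat_kernel_theorem}); with only two distinct nodes the flat limit is the \emph{linear} interpolant, so the squaring -- and hence the product -- is no longer approximated, and your terminal $+\beta x^{(d)}$ correction comes too late to repair this intermediate collapse.

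This is precisely why the paper's intermediate quantity is not the bare partial product but $r_k=\prod_{j\leq k}(x^{(j)})^{n_j}+\beta x^{(k+1)}$: an additive $\beta$-carry is inserted at \emph{every} step (via the extended module of \Cref{lem:bivariate_monomial_module}) to keep the three mapped centers pairwise distinct in the running coordinate, and is removed again at the following step through the $-\beta(x^{(k)})^{n_k+1}$ term (cf.\ \Cref{rem:additional_beta_term}). Your construction can be repaired in the same spirit -- carry $r=P_{\mathrm{partial}}+\beta x^{(j+1)}$ (or $+\beta b$) through the fused loop and cancel the spurious summand using a separately built power -- but as written the distinctness claim, and with it the applicability of \Cref{lem:product_module} at the intermediate stages, does not hold.
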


\begin{rem}
\label{rem:additional_beta_term}
The additional summand $\beta x^{(d)}$ is included for the same reason why we included the parameter $\beta$ in the proof of Lemma \ref{lem:product_module}: 
It might happen, that the used centers $z_1, z_2, z_3$ coincide after the mapping $\psi$, 
i.e.\ $\psi^{(d+1)}(z_1),$ $\psi^{(d+1)}(z_2),$ $\psi^{(d+1)}(z_3)$ are no longer pairwise distinct. 
By including the $\beta$-summand, we can always find a $\beta \in \R$ such that the components $\psi(z_1)^{(j)}$, $\psi(z_2)^{(j)}$, $\psi(z_3)^{(j)}$, $j=1, 
\ldots, d$ of the mapped centers $\psi(z_1)$, $\psi(z_2)$, $\psi(z_3)$ are pairwise distinct also for $j = d+1$.
\end{rem}

\begin{proof}[Proof of Lemma \ref{lem:addition_module}]
The idea is to build the product $\prod_{j=1}^d (x^{(j)})^{n_j}$ in an iterative way: The first step starts with $(x^{(1)})^{n_1}$, then in the next steps the 
factors $(x^{(j)})^{n_j}, j=2,\ldots,d$ will be combined multiplicatively. Finally, the product is added to the sum. We assume $n_j > 0$ for $j=1, \ldots, d$, 
otherwise the inputs with $n_j = 0$ will simply be ignored.
\begin{enumerate}
\item The first step is realized with help of the univariate monomial module (and a linear layer contribution in case of $d>1$) to build 
\begin{align*}
r_1 := \begin{cases}
(x^{(1)})^{n_1} 					& d = 1 \\
(x^{(1)})^{n_1} + \beta x^{(2)} 	& d > 1.
\end{cases}
\end{align*}
This is depicted in Figure \ref{fig:sum_module_construction} left. If $d=1$, the next step is skipped. This step requires a depth of $L_1 = \max(\lceil \log_2(n_1) \rceil, 1)$, see Lemma \ref{lem:univariate_monomial_module}.
\item In the $\ell$-th step, $\ell \in \{2, \ldots, d-1\}$: 
We have $r_{\ell-1} = \prod_{j=1}^{\ell-1} (x^{(j)})^{n_j} + \beta x^{(\ell)}$.
A bivariate monomial module multiplies the intermediate result $r_{\ell-1}$ from the last step with $(x^{(\ell)})^{n_\ell}$. 
Furthermore, a univariate monomial module builds $(x^{(\ell)})^{n_\ell+1}$. 
A subsequent linear layer combines these parts and the possible summand $\beta x^{(\ell+1)}$ to yield the intermediate result $r_\ell$:
\begin{align*}
r_\ell &= r_{\ell-1} \cdot (x^{(\ell)})^{n_\ell} - \beta (x^{(\ell)})^{n_\ell+1} + \beta x^{(\ell+1)}
= \prod_{j=1}^{\ell} (x^{(j)})^{n_j} + \beta x^{(\ell+1)}
\end{align*}
The corresponding setup is depicted in Figure \ref{fig:sum_module_construction}, middle. 
The required depth $L_\ell$ for this step is given by the depth of approximating $(x^{(\ell)})^{n_\ell+1}$, which is given by $L_\ell = \lceil \log_2(n_\ell+1) \rceil$ due to \Cref{lem:univariate_monomial_module} and $n_\ell > 0$.
\item The $d$-th step,
where we have $r_{d-1} = \prod_{j=1}^{d-1} (x^{(j)})^{n_j} + \beta x^{(d)}$,
simply multiplicatively combines the last contribution $(x^{(d)})^{n_d}$. 
Instead of an additional part like $\beta x^{(d+1)}$, 
we will add $\beta x^{(d)}$: 
A subsequent linear layer adds the product $r_d = \prod_{j=1}^d (x^{(j)})^{n_j}$ and the additional "$+ \beta x^{(d)}$" to the sum $S$:
\begin{align*}
r_d &= r_{d-1} \cdot (x^{(d)})^{n_d} - \beta (x^{(d)})^{n_d+1} + \beta x^{(d)}
= \prod_{j=1}^{d} (x^{(j)})^{n_j} + \beta x^{(d)}
\end{align*}
As explained in Remark \ref{rem:additional_beta_term}, the "$+\beta x^{(d)}$ part is required to ensure that the three centers within the last dimension $d+1$ are pairwise distinct, i.e.\ $z_i^{(d+1)} + \prod_{i=1}^d (z_i^{(j)})^{n_j} + \beta z_i^{(d)}$ for $i=1,2,3$ are pairwise distinct. This will be required in order to stack such sum-modules on top of each other later. \\
The required depth for this step can be infered from the bivariate monomial module (Lemma \ref{lem:bivariate_monomial_module}) and is thus given by $L_d = \max(\lceil \log_2(n_d) \rceil, 1)$.
\end{enumerate}
Overall, the depth is limited by 
\begin{align*}
L = \sum_{j=1}^d L_j \leq \sum_{j=1}^d \lceil \log_2(n_j + 1) \rceil \leq d \cdot \lceil \log_2(\max_{j=1,\ldots,d} n_j + 1) \rceil.
\end{align*}
\end{proof}

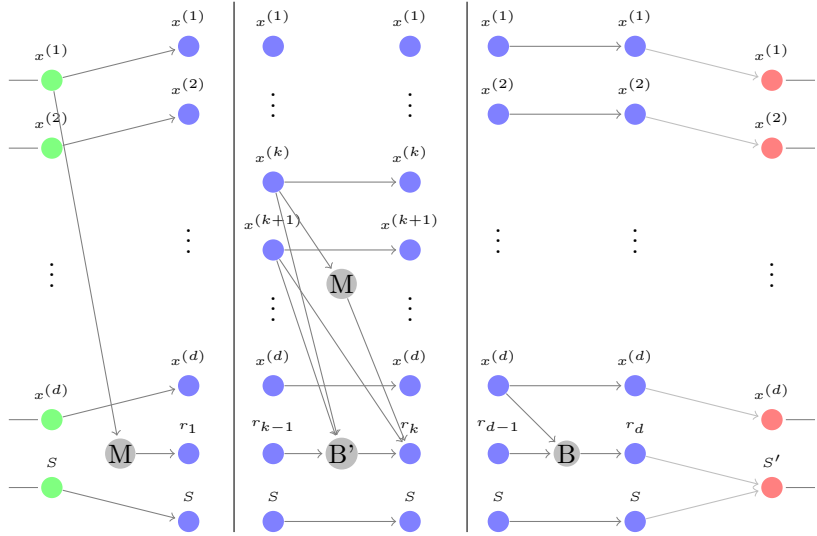
\begin{figure}[H]
\def\layersep{2cm}
\centering %
\begin{tikzpicture}[scale=0.45, shorten >=1pt,->,draw=black!50, node distance=\layersep]
    \tikzstyle{every pin edge}=[<-,shorten <=1pt]
    \tikzstyle{neuron}=[circle,fill=black!25,minimum size=8pt,inner sep=0pt]
    \tikzstyle{input neuron}=[neuron, fill=green!50];
    \tikzstyle{output neuron}=[neuron, fill=red!50];
    \tikzstyle{hidden neuron}=[neuron, fill=blue!50];
    \tikzstyle{relu neuron}=[neuron, fill=black, minimum size=5pt];    
    \tikzstyle{annot} = [text width=4em, text centered]
    \tikzstyle{module}=[circle,fill=black!25,minimum size=8pt,inner sep=0pt]

   	\node[input neuron, pin={[pin edge={-}]left:}, label={\tiny{$x^{(1)}$}}] (I-1) at (0,5) {};
	\node[input neuron, pin={[pin edge={-}]left:}, label={\tiny{$x^{(2)}$}}] (I-2) at (0,3) {};
   	\node[input neuron, pin={[pin edge={-}]left:}, label={\tiny{$x^{(d)}$}}] (I-3) at (0,-5) {};
	\node[input neuron, pin={[pin edge={-}]left:}, label={\tiny{$S$}}] (I-4) at (0,-7) {};	

    \node(dots) at (0,-.5){\vdots};
	\node(dots) at (2*\layersep,.5){\vdots};

	\node[hidden neuron, label={\tiny{$x^{(1)}$}}] (H1-1) at (2*\layersep,6) {};
	\node[hidden neuron, label={\tiny{$x^{(2)}$}}] (H1-2) at (2*\layersep,4) {};
	\node[hidden neuron, label={\tiny{$x^{(d)}$}}] (H1-3) at (2*\layersep,-4) {};		
	\node[hidden neuron, label={\tiny{$r_1$}}] (H1-4) at (2*\layersep,-6) {};		
	\node[hidden neuron, label={\tiny{$S$}}] (H1-5) at (2*\layersep,-8) {};		
	
	\node[module] (B1) at (1*\layersep, -6) {M};    

	\path [gray] (I-1) edge (H1-1);
	\path [gray] (I-2) edge (H1-2);
	\path [gray] (I-3) edge (H1-3);
	\path [gray] (I-4) edge (H1-5);
	
	\path [gray] (I-1) edge (B1);	
	\path [gray] (B1) edge (H1-4);	

\end{tikzpicture}
\vrule
\begin{tikzpicture}[scale=0.45, shorten >=1pt,->,draw=black!50, node distance=\layersep]
    \tikzstyle{every pin edge}=[<-,shorten <=1pt]
    \tikzstyle{neuron}=[circle,fill=black!25,minimum size=8pt,inner sep=0pt]
    \tikzstyle{input neuron}=[neuron, fill=green!50];
    \tikzstyle{output neuron}=[neuron, fill=red!50];
    \tikzstyle{hidden neuron}=[neuron, fill=blue!50];
    \tikzstyle{relu neuron}=[neuron, fill=black, minimum size=5pt];    
    \tikzstyle{annot} = [text width=4em, text centered]
    \tikzstyle{module}=[circle,fill=black!25,minimum size=8pt,inner sep=0pt]

	\node(dots) at (0,4.5){\vdots};
	\node(dots) at (2*\layersep,4.5){\vdots};	
	
	\node(dots) at (0,-1.5){\vdots};
	\node(dots) at (2*\layersep,-1.5){\vdots};

   	\node[hidden neuron, label={\tiny{$x^{(1)}$}}] (H1-1) at (0,6) {};
	\node[hidden neuron, label={\tiny{$x^{(k)}$}}] (H1-2) at (0,2) {};
	\node[hidden neuron, label={\tiny{$x^{(k+1)}$}}] (H1-3) at (0,0) {};
	\node[hidden neuron, label={\tiny{$x^{(d)}$}}] (H1-4) at (0,-4) {};		
	\node[hidden neuron, label={\tiny{$r_{k-1}$}}] (H1-5) at (0,-6) {};		
	\node[hidden neuron, label={\tiny{$S$}}] (H1-6) at (0,-8) {};		

	\node[module] (M1) at (1*\layersep, -1) {M};    
	\node[module] (B1) at (1*\layersep, -6) {B'};    

	\node[hidden neuron, label={\tiny{$x^{(1)}$}}] (H2-1) at (2*\layersep,6) {};	
	\node[hidden neuron, label={\tiny{$x^{(k)}$}}] (H2-2) at (2*\layersep,2) {};
	\node[hidden neuron, label={\tiny{$x^{(k+1)}$}}] (H2-3) at (2*\layersep,0) {};
	\node[hidden neuron, label={\tiny{$x^{(d)}$}}] (H2-4) at (2*\layersep,-4) {};		
	\node[hidden neuron, label={\tiny{$r_k$}}] (H2-5) at (2*\layersep,-6) {};		
	\node[hidden neuron, label={\tiny{$S$}}] (H2-6) at (2*\layersep,-8) {};		

	\path [gray] (H1-2) edge (H2-2);
	\path [gray] (H1-3) edge (H2-3);
	\path [gray] (H1-4) edge (H2-4);
	\path [gray] (H1-6) edge (H2-6);
	\path [gray] (H1-3) edge (H2-5);

	\path [gray] (H1-2) edge (B1);
	\path [gray] (H1-3) edge (B1);
	\path [gray] (H1-5) edge (B1);		
	\path [gray] (H1-2) edge (M1);

	\path [gray] (B1) edge (H2-5);
	\path [gray] (M1) edge (H2-5);

\end{tikzpicture}
\vrule
\begin{tikzpicture}[scale=0.45, shorten >=1pt,->,draw=black!50, node distance=\layersep]
    \tikzstyle{every pin edge}=[<-,shorten <=1pt]
    \tikzstyle{neuron}=[circle,fill=black!25,minimum size=8pt,inner sep=0pt]
    \tikzstyle{input neuron}=[neuron, fill=green!50];
    \tikzstyle{output neuron}=[neuron, fill=red!50];
    \tikzstyle{hidden neuron}=[neuron, fill=blue!50];
    \tikzstyle{relu neuron}=[neuron, fill=black, minimum size=5pt];    
    \tikzstyle{annot} = [text width=4em, text centered]
    \tikzstyle{module}=[circle,fill=black!25,minimum size=8pt,inner sep=0pt]

	\node(dots) at (0,.5){\vdots};
	\node(dots) at (2*\layersep,.5){\vdots};
	\node(dots) at (4*\layersep,-.5){\vdots};

	\node[hidden neuron, label={\tiny{$x^{(1)}$}}] (H1-1) at (0,6) {};
	\node[hidden neuron, label={\tiny{$x^{(2)}$}}] (H1-2) at (0,4) {};
	\node[hidden neuron, label={\tiny{$x^{(d)}$}}] (H1-3) at (0,-4) {};		
	\node[hidden neuron, label={\tiny{$r_{d-1}$}}] (H1-4) at (0,-6) {};		
	\node[hidden neuron, label={\tiny{$S$}}] (H1-5) at (0,-8) {};		

	\node[module] (B1) at (1*\layersep, -6) {B};

	\node[hidden neuron, label={\tiny{$x^{(1)}$}}] (H2-1) at (2*\layersep,6) {};
	\node[hidden neuron, label={\tiny{$x^{(2)}$}}] (H2-2) at (2*\layersep,4) {};	
	\node[hidden neuron, label={\tiny{$x^{(d)}$}}] (H2-3) at (2*\layersep,-4) {};		
	\node[hidden neuron, label={\tiny{$r_d$}}] (H2-4) at (2*\layersep,-6) {};		
	\node[hidden neuron, label={\tiny{$S$}}] (H2-5) at (2*\layersep,-8) {};	

	\node[output neuron, pin={[pin edge={-}]right:}, label={\tiny{$x^{(1)}$}}] (O-1) at (4*\layersep,5) {};
	\node[output neuron, pin={[pin edge={-}]right:}, label={\tiny{$x^{(2)}$}}] (O-2) at (4*\layersep,3) {};	
	\node[output neuron, pin={[pin edge={-}]right:}, label={\tiny{$x^{(d)}$}}] (O-3) at (4*\layersep,-5) {};		
    \node[output neuron, pin={[pin edge={-}]right:}, label={\tiny{$S'$}}] (O-4) at (4*\layersep, -7) {};	

   	\path [gray] (H1-1) edge (H2-1);
	\path [gray] (H1-2) edge (H2-2);
	\path [gray] (H1-3) edge (H2-3);
	\path [gray] (H1-5) edge (H2-5);

	\path [gray] (H1-3) edge (B1);
	\path [gray] (H1-4) edge (B1);		
	
	\path [gray] (B1) edge (H2-4);

	\path [lightgray] (H2-1) edge (O-1);
	\path [lightgray] (H2-2) edge (O-2);
	\path [lightgray] (H2-3) edge (O-3);
	\path [lightgray] (H2-4) edge (O-4);
	\path [lightgray] (H2-5) edge (O-4);

\end{tikzpicture}
\caption{Visualization of the construction in Lemma \ref{lem:addition_module}. Left: Adding a monomial in $x^{(1)}$. 
Middle: For $\ell \in \{2, \ldots, d-1\}$, 
extend $r_{\ell-1} = \prod_{j=1}^{\ell-1} (x^{(j)})^{n_j} + \beta x^{(\ell)}$ to $r_{\ell} = \prod_{j=1}^{\ell} (x^{(j)})^{n_i} + \beta x^{(\ell+1)}$. 
Right: Extend $r_{d-1} = \prod_{j=1}^{d-1} (x^{(j)})^{n_j} + \beta x^{(d)}$ to $r_{d} = \prod_{j=1}^{d} (x^{(j)})^{n_j}$ and add it to $S$. 
The grey circles ``M'', ``B'' and ``B$^\prime$'' refer to the modules introduced earlier.}
\label{fig:sum_module_construction}
\end{figure}

Now it is time to state and prove the main result of this section, which is based on an iterative application of Lemma \ref{lem:addition_module}.

\begin{theorem}[Universal approximation for unbounded depth] 
\label{th:infinite_depth}
Let $\Omega \subset \R_{\geq 0}^d, d \in \N$ be a compact domain. Consider an arbitrary continuous function $f: \Omega \rightarrow \R$. Then it is possible under the Assumptions \ref{ass:deep_assumptions} to approximate this function $f$ to arbitrary accuracy using an SDKN of width $w = d+8$ and $M=3$ centers, i.e.\
\begin{align*}
  \lim_{L \rightarrow \infty} \dist(f, \mathcal{F}_{L, d+8, 3}(\Omega))) = 0.
\end{align*}
\end{theorem}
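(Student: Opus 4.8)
The strategy is to reduce to polynomials via the Stone--Weierstrass theorem and then realise an arbitrary polynomial by stacking the addition module of Lemma \ref{lem:addition_module}. Since $\Omega \subset \R_{\geq 0}^d$ is compact, for every $\varepsilon > 0$ there is a polynomial $p(x) = c_0 + \sum_{m=1}^K \alpha_m \prod_{j=1}^d (x^{(j)})^{n_{m,j}}$, with $K \in \N$, $c_0, \alpha_m \in \R$, $n_{m,j} \in \N_0$ (and at least one $n_{m,j} > 0$ for each $m$), such that $\Vert f - p \Vert_{L^\infty(\Omega)} < \varepsilon$. Hence it suffices to build, for arbitrary accuracy, a SDKN of width $d+8$ and $3$ centers that approximates $p$ uniformly on $\Omega$, after which Lemma \ref{lem:depth_adjustment} extends the result to all larger depths.

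For the construction, the first linear layer embeds the input as $(x^{(1)}, \dots, x^{(d)}, 0)^\top$, i.e.\ it appends a zero channel for the running sum $S := 0$; all intermediate states keep these $d+1$ persistent channels. For $m = 1, \dots, K$ I apply the addition module of Lemma \ref{lem:addition_module} with coefficient $\alpha_m$ and exponents $(n_{m,1}, \dots, n_{m,d})$; it acts as $(x, S) \mapsto (x,\, S + \alpha_m \prod_j (x^{(j)})^{n_{m,j}} + \beta_m x^{(d)})$ up to arbitrary accuracy, leaves the first $d$ coordinates intact, uses width $d+8$ and the given $3$ centers, and has finite depth $L_m \leq d \lceil \log_2(\max_j n_{m,j} + 1)\rceil$. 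These modules are concatenated (merging adjacent linear layers), and the constant term $c_0$ is inserted by one further flat-limit activation layer realising the constant map on one channel (its interpolant of the data $(z_i, c_0)$ being the constant $c_0$). After all these steps the last channel equals $p(x) + B x^{(d)}$ with $B := \sum_{m=1}^K \beta_m$, and the final linear layer of the SDKN (always present at the end, and realising an arbitrary linear map by \Cref{prop:linear_layer_2}) outputs $S' - B\, x^{(d)}$, which equals $p(x)$. The total depth $L = \sum_{m=1}^K L_m + O(1)$ is finite.

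Two points need care. First, each module realises its nominal operation only in a flat limit, so the error of composing these finitely many approximate maps must be controlled; this is routine because $\Omega$ is compact, each module output depends continuously on a point ranging over a compact set (so all intermediate ranges are compact), and by uniform continuity the per-module accuracies can be chosen small enough to bring the total sup-norm error below $\varepsilon$. Second, the modular lemmas require the three propagated centers to stay pairwise distinct. This holds genuinely, not merely approximately: the flat-limit kernel interpolant is \emph{exact} at its interpolation nodes, so the centers $z_1, z_2, z_3$ are mapped exactly at every stage, and the free parameters $\beta_m$ — constrained only to avoid finitely many ``collapsing'' values, cf.\ Remark \ref{rem:additional_beta_term} — can be chosen so that the last-coordinate values of the three propagated centers remain distinct after each addition module (the constant layer is placed last, where collapse no longer matters).

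Finally, the construction yields, for each $\varepsilon > 0$, a finite depth $L_0 = L_0(f, \varepsilon)$ with $\dist(f, \mathcal{F}_{L_0, d+8, 3}(\Omega)) < \varepsilon$. By Lemma \ref{lem:depth_adjustment}, $\dist(f, \mathcal{F}_{L, d+8, 3}(\Omega)) \leq \dist(f, \mathcal{F}_{L_0, d+8, 3}(\Omega))$ for all $L \geq L_0$, hence $\limsup_{L \to \infty} \dist(f, \mathcal{F}_{L, d+8, 3}(\Omega)) < \varepsilon$, and letting $\varepsilon \to 0$ gives the claim. I expect the main obstacle not to be any single estimate but the bookkeeping around the auxiliary $\beta x^{(d)}$ terms together with the center-collapse phenomenon — verifying that one global linear correction removes all residual contributions while the propagated centers are kept pairwise distinct through the entire stack.
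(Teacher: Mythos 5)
Your proposal is correct and takes essentially the same route as the paper's proof: stack the addition modules of Lemma \ref{lem:addition_module} to approximate arbitrary polynomials and then invoke Stone--Weierstrass together with transitivity of denseness. The paper's version is only a short sketch, so the extra bookkeeping you supply (the $\beta x^{(d)}$ corrections, the constant term, error composition over compact intermediate ranges, and keeping the propagated centers distinct) is precisely the detail it leaves implicit.
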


\begin{proof}
By stacking the addition modules from Lemma \ref{lem:addition_module} after each other, 
it is possible to approximate any polynomial $\sum_{l \in \N^d} \alpha_l x^l$ to arbitrary accuracy, 
where $l \in \N^d$ is a multiindex. 
This means that $\bigcup_{L \in \N} \mathcal{F}_{L,d+8, 3}$ is dense in the space of polynomials. 
As the space of polynomials is dense in the continuous functions (Stone--Weierstrass) and denseness is a transitive property, 
this implies that $\bigcup_{L \in \N} \mathcal{F}_{L, d+8, 3}(\Omega)$ is dense in $C(\Omega)$.
\end{proof}

\begin{rem} \label{rem:deep_case_construction} $\mbox{}$ \\
\begin{enumerate}
\item In the theory of neural networks, most of the results on deep approximation focus on ReLU activation function, as it allows to realize explicit constructions.
In order to approximate the squaring operation $x \mapsto x^2$, a multilayer \textit{saw-tooth} construction is used \cite{YAROTSKY2017103}. 
This is not required here: 
Due to the increased flexibility and extensive theory of kernels, the squaring operation can be realized without multiple layers, see \Cref{lem:basic_operations}. 
This is a decisive theoretical advantage compared to ReLU neural networks, as it allows for simpler networks, also analytically. \\
Furthermore, the above constructions hold for a whole class of kernels (as specified via \Cref{ass:deep_assumptions}) instead of only for a single activation function.
\item Nevertheless, in~\cite{math7100992}, Theorem 1, a result analogous to \Cref{th:infinite_depth} is proven for ReLU neural networks, obtaining the same 
type of universality with depth $w=d+2$, where additionally an estimate of the depth $L$ needed to reach a certain target accuracy in the $L_\infty$ norm is 
given. 
Obtaining similar estimates with our method will be the object of future work. In general, we did not derive statements on %
  expression rates, as this was not the intention here. 
\item The Wendland kernel of order $0$ is given by $k(x,y) = \max(1 - |x-y|, 0)$.
Due to its piecewise affine shape, it is quite comparable to the ReLU function. 
But the Wendland kernel of order $0$ does not satisfy the \Cref{ass:deep_assumptions}. 
However it turns out that most of the constructions proven in \Cref{subsubsec:infinite_depth_case} also work for the Wendland kernel of order $0$ by using its similarity to the ReLU-function and the sawtooth construction for ReLU neural networks \cite{YAROTSKY2017103}.
\end{enumerate}
\end{rem}

\section{Conclusion and Outlook}
In this work we examined how to use a deep kernel representer theorem to build powerful deep kernel models, 
and in particular we introduced a number of apt architectural choices that resulted in the definition of the Structured Deep Kernel Network (SDKN) setup. 
The SDKN can be interpreted as a neural network with optimizable activation functions, governed by a (deep kernel) representer theorem. 
This formulation offers increased approximation power compared to standard neural networks. 
The analysis of SDKNs leverages tools from both kernel theory and neural network research. 
Through a rigorous theoretical investigation, we established fundamental properties of this novel class of kernel models, including several universal approximation results. 
Notably, in the deep case, the SDKN allows for asymptotically superior constructions compared to ReLU-based neural networks

Further directions of research include the derivation of quantitative instead of only qualitative approximation statements, namely, addressing the problem of the speed of approximation for certain function classes.
Such results exist for neural networks, e.g.\ \cite{schwabzech2023,L21}.
Hence investigation of such quantitative approximation rate statements is a
natural next step.
Also the closer investigation of regularization in terms of RKHS norms is part
of future work. 
Another key target is to make use of the resulting deep kernel from Equation \eqref{eq:deep_kernel}
in order to enhance standard kernel methods, e.g.\ greedy kernel methods. 
\subsection*{Acknowledgments}
The authors acknowledge the funding of the project by the Deutsche Forschungsgemeinschaft (DFG, German Research Foundation) 
under
Germany's Excellence Strategy - EXC 2075 - 390740016 and funding by the BMBF under contract 05M20VSA.

\bibliography{references} %
\bibliographystyle{abbrv}

\appendix

\section{Further proofs} \label{sec:further_proofs}

\subsection{Positive definiteness of the employed kernels}\label{app:positive_definiteness}

Both classes of kernels introduced in Section \ref{subsec:setup_sdkn} are positive definite kernels, but in general not strictly positive definite:

\begin{enumerate}
\item The linear kernel introduced in Section \ref{subsubsec:linear_kernel} is positive definite but in general not strictly positive definite,
  as its feature map is the identity mapping, i.e.\ the RKHS is especially finite-dimensional. 
\item The single-dimensional matrix-valued kernel $k_s$ is also in general only positive definite, even if the used kernels $k^{(1)}, .., k^{(d)}$ are strictly 
ositive definite. This can be observed if we use $k^{(1)} = .. = k^{(d)}$: The overall kernel matrix related to single-dimensional matrix-valued kernels is the 
diagonal concatenation of the kernel matrices concerning the single dimensions. Zero eigenvalues can occur, if some coordinate of different points coincide: 
Consider two data points $x_1 \neq x_2 \in \R^2$ with $x_1^{(1)} = x_2^{(1)}$, i.e.\ the first coordinate coincides. When the Wendland kernel of order $0$ is 
used as the single-dimensional kernel, the resulting kernel matrix is rank deficient, e.g.

\begin{align*}
X_2 = \left\{ 
\begin{pmatrix}
    1 \\ -0.2 
\end{pmatrix}
,
\begin{pmatrix}    
    1 \\ -0.9
\end{pmatrix}
\right\} 
\Rightarrow
(k_s(x_i, x_j))_{i,j} =
\begin{pmatrix}
1 &		1 & 	0 & 		0 \\
1 & 	1 & 	0 &     	0 \\
0 &    	0 &		1 & 		0.3 \\
0 &		0 &		0.3 & 	1 \\
\end{pmatrix}
\end{align*}
with eigenvalues  $0$, $0.7$, $1.3$, $2$.
\end{enumerate}

\subsection{Proofs}
\begin{delayedproof}{prop:linear_layer}
We consider $b=1$. The extension to $b>1$ is obvious. Thus $A \in \R^{b \times d}$ is simply a row vector and the linear mapping $x \mapsto Ax$ simplifies to a dot product, i.e.\ $Ax = \langle A^\top, x \rangle_{\R^d}$. \\
Using the linear kernel, the kernel map $s$ reads 
\begin{align*}
s(x) &= \sum_{i=1}^M \alpha_i k(x, z_i) = \sum_{i=1}^M \alpha_i \langle x, z_i \rangle_{\R^d} = \langle x, \sum_{i=1}^M \alpha_i z_i \rangle_{\R^d}
\end{align*}
Thus we have $Ax = s(x)$ iff there are $\alpha_i \in \R$ such that $\sum_{i=1}^M \alpha_i z_i = A^\top$. 
\end{delayedproof}

\begin{delayedproof}{prop:linear_layer_2}
We consider $b=1$. The extension to $b>1$ is obvious. Thus assume $\beta_j \in \R$ for $j=1,..,d$. Set $\beta = (\beta_1, .., \beta_d)^T \in \R^d$ and decompose $\beta = \beta^\parallel + \beta^\perp$ with $\beta^\parallel \in \Sp g(\Omega) \subset \R^d, \beta^\perp \perp \Sp g(\Omega)$. 
\begin{enumerate}
\item The mapping 
\begin{align*}
(g^{(1)}(x), .., g^{(d)}(x))^\top \mapsto \sum_{j=1}^d \beta_j^\parallel g^{(j)}(x) = \langle \beta^\parallel, g(x) \rangle_{\R^d}
\end{align*}
can be realized as a kernel mapping, if the centers $z_1, .., z_d \in \Omega$ are chosen such that $\Sp \{ g(z_i), i=1,..,d\} \supset \Sp g(\Omega)$: Like this, we have $\beta^\parallel \in \Sp g(\Omega) = \Sp \{ g(z_1), .., g(z_d) \}$ and \Cref{prop:linear_layer} yields the representation via the linear kernel. 
\item For $\beta^\perp$ we have
\begin{align*}
\sum_{j=1}^d \beta_j^\perp g^{(j)}(x) = \langle \beta^\perp, g(x) \rangle_{\R^d} = 0 
\end{align*}
as $g(x) \in \Sp g(\Omega), \beta^\perp \perp \Sp g(\Omega)$. 
\end{enumerate}
In conclusion this yields that $\langle \beta, g(x) \rangle_{\R^d} =
  \langle \beta^\parallel, g(x) \rangle_{\R^d}$ can be expressed as linear kernel mapping.
\end{delayedproof}

\begin{delayedproof}{th:unbounded_nr_centers}
Using the Kolmogorov-Arnold Theorem \ref{th:kolmogorov_theorem}, we can represent the given function $f$ as 
\begin{align*}
f(x^{(1)}, .., x^{(d)}) = \sum_{q=0}^{2d} \Phi \left( \sum_{j=1}^d \lambda_j \phi_{q}(x^{(j)}) \right)
\end{align*}
with continuous $\Phi$, $\phi_{q}$ for $q=0, .., 2d$, $j=1, .., d$. 
For our construction, we make use of the function decomposition depicted in Figure \ref{fig:visualization_NN_DK_inf_centers}.
\begin{enumerate}
\item Approximation of $\phi_{q}: [0, 1] \rightarrow [0, 1]$: As the kernel $k_2$ is universal, it holds that the RKHS $\mathcal{H}_{k_2}([0,1]) \subset C([0,1])$ is a dense subspace in the continuous functions. By construction of the RKHS holds
  $$\overline{\text{span}\{k_2(\cdot, u), u \in [0,1]\}} = \mathcal{H}_{k_2}([0,1]).$$
  Therefore, since all $\phi_{q}, q=0, .., 2d$ are continuous, we conclude
\begin{align*}
\forall \epsilon > 0 ~ &\exists N^{(q)} \in \N, \{u_i^{(q)}\}_{i=1}^{N^{(q)}} \subset [0, 1], \{\alpha_i^{(q)}\}_{i=1}^{N^{(q)}} \subset \R ~ \\ 
&\text{such that} ~ \left\lVert \phi_{q}(\cdot) - \sum_{i=1}^{N^{(q)}} \alpha_i^{(q)} k_2(\cdot, u_i^{(q)}) \right\rVert_{L^\infty([0,1])} < \epsilon.
\end{align*}
We define the abbreviation $\sum_{i=1}^{N^{(q)}} \alpha_i^{(q)} k(\cdot, u_i^{(q)}) =: \tilde{\phi}_{q}(\cdot)$. 
As the argument before holds for any $q=0,.., 2d$, we have for all $x^{(j)} \in [0, 1]$
\begin{align} \label{eq:proof_estimate_1}
\begin{aligned}
\forall q = 0, .., 2d \qquad &\left \lVert \sum_{j=1}^d \lambda_j \phi_{q}(x^{(j)}) - \sum_{j=1}^d \lambda_j \tilde{\phi}_{q}(x^{(j)}) \right \rVert_\infty \\
&\leq \sum_{j=1}^d |\lambda_j| \left \lVert \phi_{q}(x^{(j)}) - \tilde{\phi}_{q}(x^{(j)}) \right \rVert_\infty \leq \epsilon \cdot \sum_{j=1}^d \lambda_j \leq \epsilon.
\end{aligned}
\end{align}
For any $q = 0, .., 2d$, $i=1, .., N^{(q)}$ we define $u_i^{(q, 0)} := u_i^{(q)} \cdot (1, .., 1)^\top \in \R^d$. Using this definition, we have $\mathbb{P}_j u_i^{(q, 0)} = u_i^{(q)}$ for all $j = 1, .., d$, 
here $\mathbb{P}_j: [0, 1]^d \rightarrow [0, 1], x \mapsto x^{(j)}$ denotes the projection operator mapping to the $j$-th coordinate. \\ 
The points $\bigcup_{q=0}^{2d} \{ u_i^{(q, 0)} \}_{i=1}^{N^{(q)}} \subset [0, 1]^d$  are used as centers. \\

\item Approximation of $\Phi$ on $I_q := (\sum_{j=1}^d \lambda_j \tilde{\phi}_{q}((\cdot)^{(j)}))([0,1]^d) \subset \R$ for $q=0, .., 2d$: We employ the same reasoning as in the first step: The RKHS of $k_4$ is dense in the continuous functions, and as $\Phi$ is continuous on $I_q$ it holds
\begin{align} \label{eq:proof_estimate_2.1}
\forall \epsilon > 0 ~ &\exists M^{(q)} \in \N, \{v_i^{(q)}\}_{i=1}^{M^{(q)}} \subset I_q, \{\beta_i^{(q)}\}_{i=1}^{M^{(q)}} \subset \R \notag \\ 
&\text{such that} ~ \left\lVert \sum_{i=1}^{M^{(q)}} \beta_i^{(q)} k_4(\cdot, v_i^{(q)}) - \Phi(\cdot) \right\rVert_{L^\infty(I_q)} < \epsilon.
\end{align}
We define the abbreviation $\sum_{i=1}^{M^{(q)}} \beta_i^{(q)} k_4(\cdot, v_i^{(q)}) =: \tilde{\Phi}_q(\cdot)$. \\
As $I_q$ is the range of $\sum_{j=1}^d \lambda_j \tilde{\phi}_{q}((\cdot)^{(j)})$ on $[0,1]^d$, there exists some $v_i^{(q,0)} \in [0,1]^d$ for every $v_i^{(q)} \in I_q$ such that $\sum_{j=1}^d \lambda_j \tilde{\phi}_{q} \left( \left( v_i^{(q, 0)} \right)^{(j)} \right)  = v_i^{(q)}$.
                                    
\item We use the abbreviation $\phi_{q,j}(\cdot) := \lambda_j \phi_{q}(\cdot)$ and $\tilde{\phi}_{q,j}(\cdot) := \lambda_j \tilde{\phi}_{q}(\cdot)$, $j=1, .., d$. Combining the two previous steps yields: 
\begin{align*}
&\Phi \left( \sum_{j=1}^d \phi_{q,j}(x^{(j)}) \right) - \tilde{\Phi}_q \left( \sum_{j=1}^d \tilde{\phi}_{q,j}(x^{(j)}) \right) \\ 
= &\underbrace{\Phi \left( \sum_{j=1}^d \phi_{q,j}(x^{(j)}) \right) - \Phi \left( \sum_{j=1}^d \tilde{\phi}_{q,j}(x^{(j)}) \right)}_{(*_1)} + \underbrace{\Phi \left( \sum_{j=1}^d \tilde{\phi}_{q,j}(x^{(j)}) \right) - \tilde{\Phi}_q \left( \sum_{j=1}^d \tilde{\phi}_{q,j}(x^{(j)}) \right)}_{(*_2)}
\end{align*}
\begin{enumerate}
\item[$(*_1)$:] The set $(\sum_{j=1}^d \phi_{q,j}((\cdot)^{(j)}))([0,1]^d) \cup (\sum_{j=1}^d \tilde{\phi}_{q,j}((\cdot)^{(j)}))([0,1]^d)$ $\subset \R$ is a compact set. Therefore, $\Phi$ is uniformly continuous on this set. Thus the estimate from Eq.\ \eqref{eq:proof_estimate_1} of the first step allows to bound $(*_1)$.
\item[$(*_2)$:] This difference is bounded by Eq.\ \eqref{eq:proof_estimate_2.1} from the second step of the proof. There, $\Vert \tilde{\Phi}_q(\cdot) - \Phi(\cdot) \Vert_{L^\infty(I_q)} < \epsilon$ was proven.
\end{enumerate}
\item Finally the steps one, two and three can be applied for all $q = 0, .., 2d$. For this, define $\tilde{f}(x) = \sum_{q=0}^{2d} \tilde{\Phi}_q(\sum_{j=1}^d \tilde{\phi}_{q,j}(x^{(j)}))$. Then it holds
\begin{align*}
\Vert f - \tilde{f} \Vert_{L^\infty([0,1]^d)} &= \left \lVert \sum_{j=0}^{2d} \left( \Phi \left( \sum_{j=1}^d \phi_{q,j}(x^{(j)}) \right) - \tilde{\Phi}_q \left( \sum_{j=1}^d \tilde{\phi}_{q,j}(x^{(j)}) \right) \right) \right \rVert_{L^\infty([0,1]^d)} \\
&\leq \sum_{q=0}^{2d} \left \lVert \Phi \left( \sum_{j=1}^d \phi_{q,j}(x^{(j)}) \right) - \tilde{\Phi}_{q,j} \left( \sum_{j=1}^d \tilde{\phi}_{q,j}(x^{(j)}) \right) \right \rVert_{L^\infty([0,1]^d)}. 
\end{align*}
The argumentation from the third step finishes the estimation.
\end{enumerate}
The set of centers required for the approximation is finally given by the union of the centers within the steps above, that is 
\begin{align*}
\bigcup_{q=0}^{2d} \left( \{u_i^{(q,0)}\}_{i=1}^{N^{(q)}} \cup \{ v_i^{(q, 0)} \}_{i=1}^{M^{(q)}} \right).
\end{align*}
The realization of the linear combinations is possible based on \Cref{prop:linear_layer_2}.
\end{delayedproof}

\begin{lemma}
\label{lem:decomposition_function}
Consider a continuous function $h \in C([a,b]), a < b \in \R$ and two points $z_1 < z_2 \in [a, b]$. Then it is possible to construct a decomposition $h = h_1 + h_2$ such that $h_1: \R \rightarrow \R$ and $h_2: \R \rightarrow \R$ have the following symmetry property:
\begin{align*}
h_1(z_1+x) = h_1(z_1-x), ~ h_2(z_2+x) = h_2(z_2-x), \quad \forall x \geq 0.
\end{align*}
\end{lemma}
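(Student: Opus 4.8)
The plan is to turn the problem into a scalar linear difference equation on all of $\R$. Since $[a,b]$ is compact I would first extend $h$ to a continuous, compactly supported function $\tilde h\in C_c(\R)$ (e.g.\ interpolate affinely down to $0$ just outside $[a,b]$ and take $0$ elsewhere); any decomposition of $\tilde h$ on $\R$ with the desired symmetries then restricts to one of $h$ on $[a,b]$. Write $S_i(x):=2z_i-x$ for the reflection about $z_i$, put $d:=z_2-z_1>0$, and let $\sigma:=S_1\circ S_2$. Short computations give the identities I will rely on: $\sigma$ is the translation $\sigma(x)=x-2d$, one has $S_1\circ\sigma=S_2$ (equivalently $S_1(x-2d)=S_1(x)+2d=S_2(x)$), and $S_1\circ\sigma\circ S_1=\sigma^{-1}$, so $S_1$ conjugates the translation $\sigma$ to its inverse --- this is just the structure of the infinite dihedral group generated by $S_1,S_2$.

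Next I would set $w:=\tilde h-\tilde h\circ S_2$, which is antisymmetric about $z_2$ (i.e.\ $w\circ S_2=-w$, since $S_2$ is an involution), and look for a continuous $h_1$ that is symmetric about $z_1$, i.e.\ $h_1\circ S_1=h_1$, and that solves the difference equation $h_1(x)-h_1(x-2d)=w(x)$ for all $x\in\R$. Once such an $h_1$ is found, $h_2:=\tilde h-h_1$ automatically has the remaining symmetry: from $h_1\circ S_1=h_1$ one gets $h_1\circ S_2=(h_1\circ S_1)\circ S_2=h_1\circ\sigma=h_1(\cdot-2d)$, hence
\[
h_2\circ S_2-h_2=(\tilde h\circ S_2-\tilde h)-\big(h_1(\cdot-2d)-h_1\big)=(-w)-(-w)=0,
\]
and of course $h_1+h_2=\tilde h=h$ on $[a,b]$. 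Rewriting $h_1\circ S_1=h_1$ as $h_1(z_1+x)=h_1(z_1-x)$ and $h_2\circ S_2=h_2$ as $h_2(z_2+x)=h_2(z_2-x)$ recovers the stated symmetry properties.

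It remains to build $h_1$. I would first pick \emph{any} continuous solution $g$ of $g(x)-g(x-2d)=w(x)$; because $w$ has compact support one may simply take $g:=-\sum_{k\ge 1}w(\cdot+2dk)$, a locally finite (hence continuous) sum, and verify the difference equation by telescoping. The crucial observation is then that $g\circ S_1$ solves the \emph{same} equation: using $S_1(x-2d)=S_1(x)+2d=S_2(x)$, the difference equation for $g$, and the antisymmetry of $w$,
\[
(g\circ S_1)(x)-(g\circ S_1)(x-2d)=g(S_1x)-g(S_1x+2d)=-w(S_1x+2d)=-w(S_2x)=w(x).
\]
Since the solution set of the difference equation is an affine subspace of $C(\R)$ (a fixed solution plus the $2d$-periodic functions), the average $h_1:=\frac12\big(g+g\circ S_1\big)$ is again a solution, and it is symmetric about $z_1$ because $S_1$ is an involution ($h_1\circ S_1=\frac12(g\circ S_1+g)=h_1$). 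Taking $h_2:=\tilde h-h_1$ then finishes the construction.

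The main obstacle I anticipate is exactly this middle step: symmetrizing $g$ about $z_1$ could a priori destroy the difference equation (which is what encodes the symmetry about $z_2$), and it survives only because $S_1$ conjugates $\sigma$ to $\sigma^{-1}$ together with the antisymmetry of $w$ about $z_2$; keeping these reflection/translation bookkeeping identities straight is where the care is needed. The remaining ingredients --- existence of the continuous compactly supported extension, continuity of the series defining $g$, and the verification that $h_2$ inherits its symmetry --- are routine.
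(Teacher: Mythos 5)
Your argument is correct, but it follows a genuinely different route from the paper's. The paper proceeds iteratively and locally: it partitions the line into intervals reflected about $z_1$, initializes $h_1:=h$, $h_2:=0$ on a starting interval, and then alternately propagates $h_1$ via the symmetry about $z_1$ (setting $h_2:=h-h_1$ there) and $h_2$ via the symmetry about $z_2$ (setting $h_1:=h-h_2$), with continuity obtained by induction over the intervals. You instead recast the problem algebraically: after extending $h$ to $\tilde h\in C_c(\R)$, you observe that the two symmetry requirements amount to finding an $S_1$-symmetric solution of the difference equation $h_1(x)-h_1(x-2d)=w(x)$ with $w=\tilde h-\tilde h\circ S_2$, you produce a particular solution by the locally finite telescoping series, and you symmetrize about $z_1$; your bookkeeping identities ($S_1(x-2d)=S_2(x)$, $w\circ S_2=-w$, the affine structure of the solution set) are exactly what is needed, and all the computations check out, including the final verification that $h_2:=\tilde h-h_1$ is symmetric about $z_2$. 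What your version buys: $h_1,h_2$ come out defined and continuous on all of $\R$, so the symmetry identities hold literally for every $x\ge 0$ and no inductive continuity-matching at interval endpoints is needed (a point the paper's sketch leaves implicit); the price is the preliminary compactly supported extension of $h$ and the series construction. The paper's version is more elementary and stays on $[a,b]$ with an explicit ping-pong construction, but its continuity argument is only sketched. Either proof establishes the lemma as it is used in the proof of Theorem \ref{th:unbounded_width}.
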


\begin{proof}[Sketch of proof of Lemma \ref{lem:decomposition_function}]
We give a constructive proof and refer to Figure \ref{fig:proof_visualization_func_decomposition}: Define $d := |z_2 - z_1| > 0$. Subdivide the interval $I := [a,b]$ into pieces of length up to $2d$, starting at $z_1$: 
$I_l = [z_1+2ld, z_1+2(l+1)d]$ and $J_l =[z_1-2(l+1)d, z_1-2ld]$ for $l=0, 1, ...$ .
Observe that $I_l, J_l$ are reflected w.r.t.\ $z_1$. Then it holds $I = \left( \bigcup_{l \geq 0} I_l \cup \bigcup_{l \geq 0} J_l \right) \cap I$. \\
We define the functions $h_1$ and $h_2$ in an iterative fashion starting on the interval $I_0$ and continuing with the intervals $J_0, I_1, J_1, I_2, ...$:
\begin{enumerate}
\item Start: For $x \in I_0$ set $h_1(x) := h(x)$ and $h_2(x) := 0$.
\item Iteratively: \\ 
For $x \in J_l, l \geq 0$:
\begin{itemize}
\item Define $h_1(x)$ via the symmetry condition $h_1(z_1-x) = h_1(z_1+x)$. 
\item Define $h_2(x) := h(x) - h_1(x)$ to satisfy $h(x) = h_1(x) + h_2(x)$.
\end{itemize}
For $x \in I_l, l \geq 1$:
\begin{itemize}
\item Define $h_2$ via the symmetry condition $h_2(z_2+x) = h_2(z_2-x)$.
\item Define $h_1(x) := h(x) - h_2(x)$ to satisfy $h(x) = h_1(x) + h_2(x)$.
\end{itemize}
\end{enumerate}
By construction it holds $h(x) = h_1(x) + h_2(x)$ for $x \in I$. The continuity of $h_1, h_2$ follows from the continuity of $h$ via induction.

\begin{figure}
\begin{center}
\input{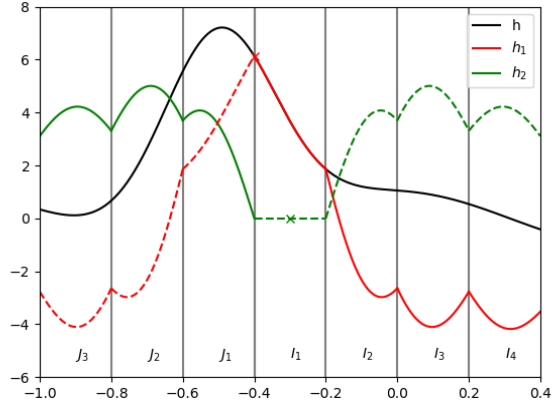}
\caption{Visualization of the decomposition of a function $h: [a,b] \rightarrow \R$ (black) into two symmetric functions $h_1$ (red) and $h_2$ (green), such that it holds $h = h_1 + h_2$.
}  
\label{fig:proof_visualization_func_decomposition}
\end{center}
\end{figure}

\end{proof}

\begin{delayedproof}{th:unbounded_width}
The proof is split into several steps:
\begin{enumerate}
\item Based on a single center, we can approximate any given function that is symmetric with respect to that center on a finite interval: \\ 
Let $\varphi$ be a radial basis function as specified in Assumption \ref{ass:wide_assumptions}. As we only consider a single center for now, say $z_1$, we have 
\begin{align*}
F_2^{(j)}(x) = (f_2 \circ f_1)(x)^{(j)} = \sum_{i=1}^1 \alpha_{2,i}^{(j)} \varphi(|(A_1(x - z_i))^{(j)}|), ~ 1 \leq j \leq d_1 = d_2
\end{align*}
Now we choose the rows of the matrix $A_1$ as positive multiples of each other, i.e.\ $\sigma_j \cdot a$, with
$0 \neq a \in \R^{d_0}, \sigma_j \geq 0, j=1,.., d_1 = d_2$. Then we have
\begin{align*}
F_2^{(j)}(x) &= \alpha_{2,1}^{(j)} \varphi(|\sigma_j a^\top (x - z_1)|), ~ 1 \leq j \leq d_1 = d_2 \\
&= \alpha_{2,1}^{(j)} \varphi(|\sigma_j x_a|), ~~ x_a := a^\top (x-z_1) \in \R,
\end{align*}
which is a scalar valued function in $x_a$. Now choose the last mapping $f_3$ such that a summation is realized, i.e.\ $A_3 = (1, 1, .., 1)^\top$, such that
\begin{align} \label{eq:single_center_approx}
s_1(x_a) := \sum_{j=1}^{d_2} \alpha_{2,1}^{(j)} \varphi(|\sigma_j x_a|).
\end{align}
This expression satisfies $s_1(-x_a) = s_1(x_a)$, thus only consider $x_a \geq 0$:

Due to Assumption \ref{ass:wide_assumptions} it holds that $\Sp \{\varphi(\sigma x_a), \sigma \geq 0 \}$ is dense in $C([0, 2 \sqrt{d} \Vert a \Vert])$. Hence for sufficiently large $d_2$ and suitable coefficients $\alpha_{2,1}^{(j)}$ and $\sigma_j \geq 0$, the sum $s_1(x_a) = \sum_{j=1}^{d_2} \alpha_{2,1}^{(j)} \varphi(|\sigma_j x_a|), x_a \geq 0$ from Equation \eqref{eq:single_center_approx} approximates any function on $C([0, 2\sqrt{d} \Vert a \Vert])$ to arbitrary accuracy:
\begin{align*}
&\forall {\epsilon > 0, f \in C([0, 2\sqrt{d}\Vert a \Vert])} ~~ \exists {d_2 \in \N, \alpha_{2,1}^{(j)}, \sigma_j \in \R_{>0}, j=1, .., d_2} \\ 
&\quad \quad \forall {x_a \in [0, 2\sqrt{d}\Vert a \Vert]} ~~ \bigg| f(x_a) - \underbrace{\sum_{j=1}^{d_2} \alpha_{2,1}^{(j)} \varphi(|\sigma_j x_a|)}_{s_1(x_a)} \bigg| < \epsilon.
\end{align*}
For $x_a \in [-2 \sqrt{d} \Vert a \Vert, 2 \sqrt{d} \Vert a \Vert]$ the function $s_1(x_a)$ corresponds to a symmetric function in $C([-2 \sqrt{d} \Vert a \Vert, 2 \sqrt{d} \Vert a \Vert])$.
\item Leveraging Lemma \ref{lem:decomposition_function}, it is possible to approximate any given function that is univariate in direction $a \in \R^{d_0}$ on a finite interval: \\
First of all, according to Lemma \ref{lem:decomposition_function}, decompose $h \in C([-2 \sqrt{d} \Vert a \Vert, 2 \sqrt{d} \Vert a \Vert])$ into $h = h_1 + h_2$,
with $h_i$ satisfying $h_i(a^\top z_i + y) = h_i(a^\top z_i - y)$ for $y \geq 0, i=1, 2$. 
Here we assume for now that both symmetry centers are different, i.e.\ $a^\top z_1 \neq a^\top z_2 \Leftrightarrow a^\top (z_1 - z_2) \neq 0 \Leftrightarrow a \not\perp z_1-z_2$ is required. This requirement will be addressed in the next step. We have:
\begin{align*}
h(a^\top x) &= h_1(a^\top x) + h_2(a^\top x) \\
&= h_1(a^\top z_1 + a^\top (x-z_1)) + h_2(a^\top z_2 + a^\top (x-z_2)) \\
&=: \tilde{h}_1(a^\top (x-z_1)) + \tilde{h}_2(a^\top (x-z_2))
\end{align*}
Both $\tilde{h}^{(1)}$ and $\tilde{h}^{(2)}$ are symmetric in their input with respect to zero. Therefore according to the first step, there exist $s_1, s_2$ such that
\begin{align*}
\forall y \in [-2\sqrt{d} \Vert a \Vert, 2\sqrt{d} \Vert a \Vert] \qquad 
| \tilde{h}_1(y) - s_1(y) | <& \frac{\epsilon}{2} \\
\text{and} ~ | \tilde{h}_2(y) - s_2(y) | <& \frac{\epsilon}{2}.
\end{align*}
Thus it is possible to estimate:
\begin{align*}
& ~ |h(a^\top x) - s_1(a^\top (x-z_1)) - s_2(a^\top (x-z_2))| \\
\leq& ~ |\tilde{h}_1(a^\top (x-z_1)) - s_1(a^\top (x-z_1))| + |\tilde{h}_2(a^\top (x-z_2)) - s_2(a^\top (x-z_2))| \\
\leq& ~ \frac{\epsilon}{2} + \frac{\epsilon}{2} = \epsilon
\end{align*}
which holds for $x \in [0, 1]^d$, as $|a^\top (x-z_i)| \leq 2 \sqrt{d} \Vert a \Vert, ~ i=1,2$.
\item Finally the approximation of arbitrary univariate functions from the second step is extended to the multivariate input case. For this, a Theorem from Vostrecov and Kreines is used \cite[Theorem 3.2]{pinkus1999approximation}:
\begin{theorem}[Special case of Vostrecov and Kreines, 1961] \label{th:vostrecov_kreines}
The space $\Sp \{ g(a^\top x) ~ | ~ g \in C(\R), a \in \mathcal{A} \}$ for a given $\mathcal{A} \subset \R^d$ is dense in $C(\R^d)$, if $\mathcal{A}$ contains a (relatively) open subset of the unit sphere $\mathbb{S}^{d-1} \equiv \{ y ~ | ~ \Vert y \Vert = 1 \}$.
\end{theorem}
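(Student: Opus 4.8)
The plan is to prove density in the topology of uniform convergence on compact subsets of $\R^d$, which is the only topology making the statement meaningful on the unbounded domain. The starting observation is that, by the multivariate Weierstrass theorem, the algebra of polynomials is dense in $C(\R^d)$ for this topology; since density of a subset forces density of any superset, it suffices to show that every polynomial already lies in the ridge span $\Sp\{ g(a^\top x) \mid g \in C(\R),\, a \in \mathcal{A}\}$. Taking $g(t) = t^k$ shows that all ridge powers $(a^\top x)^k$ with $a \in \mathcal{A}$ and integer $k \geq 0$ belong to the span, so the whole problem reduces to a purely algebraic one: for each degree $k$, the ridge powers $\{(a^\top x)^k : a \in \mathcal{A}\}$ must span the space $\mathcal{H}_k$ of homogeneous polynomials of degree $k$ in $d$ variables.

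First I would record the apolar identity. For a homogeneous polynomial $q(x) = \sum_{|\alpha|=k} c_\alpha x^\alpha$ let $q(D) := \sum_{|\alpha|=k} c_\alpha \partial^\alpha$ be the associated constant-coefficient differential operator. A direct induction using $\partial^\alpha (a^\top x)^k = \frac{k!}{(k-|\alpha|)!} a^\alpha (a^\top x)^{k-|\alpha|}$ yields, for $|\alpha| = k$, the identity $q(D)\,(a^\top x)^k = k!\, q(a)$. Since the pairing $(p,q) \mapsto q(D)p$ is a nondegenerate bilinear form on the finite-dimensional space $\mathcal{H}_k$, every linear functional on $\mathcal{H}_k$ is of the form $p \mapsto q(D)p$ for a unique $q \in \mathcal{H}_k$, and such a functional annihilates all ridge powers $\{(a^\top x)^k : a \in \mathcal{A}\}$ precisely when $q(a) = 0$ for every $a \in \mathcal{A}$. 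Surjectivity onto $\mathcal{H}_k$ is therefore equivalent to the statement that no nonzero $q \in \mathcal{H}_k$ vanishes on all of $\mathcal{A}$.

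This is where the hypothesis on $\mathcal{A}$ enters, and it is the crux of the argument. By assumption $\mathcal{A}$ contains a relatively open subset $U$ of the sphere $\mathbb{S}^{d-1}$. A homogeneous polynomial is real-analytic; if it vanishes on the nonempty open subset $U$ of the connected real-analytic manifold $\mathbb{S}^{d-1}$, then by the identity principle for real-analytic functions it vanishes on the entire sphere. A homogeneous polynomial vanishing on the unit sphere vanishes identically on $\R^d$, since any $x \neq 0$ can be rescaled onto $\mathbb{S}^{d-1}$ and homogeneity transfers the vanishing. Hence $q \equiv 0$, the only functional annihilating the ridge powers is trivial, and $\Sp\{(a^\top x)^k : a \in \mathcal{A}\} = \mathcal{H}_k$. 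Summing over $k$ shows the ridge span contains every polynomial, and combined with Weierstrass density this completes the proof.

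The main obstacle is precisely the non-vanishing step: it is essential that $\mathcal{A}$ contain a full-dimensional (relatively open) piece of the sphere, because a nonzero homogeneous polynomial \emph{can} vanish on lower-dimensional spherical subsets (for instance on a great subsphere cut out by a linear form), and only the real-analytic identity principle converts the openness hypothesis into global vanishing. A fully rigorous write-up would spell out that identity-principle step on $\mathbb{S}^{d-1}$ --- either through local analytic charts, or by observing that vanishing on an open spherical cap forces all directional Taylor coefficients to vanish there and then propagating this along the connected sphere --- as well as verify the nondegeneracy of the apolar pairing used to represent functionals on $\mathcal{H}_k$.
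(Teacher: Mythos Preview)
Your argument is correct and follows what is essentially the standard modern proof of the Vostrecov--Kreines density result: reduce via Weierstrass to polynomials, use the apolar pairing $q(D)(a^\top x)^k = k!\,q(a)$ to identify the annihilator of the ridge powers in $\mathcal H_k$ with homogeneous polynomials vanishing on $\mathcal A$, and invoke the identity principle on $\mathbb S^{d-1}$ to conclude that this annihilator is trivial.

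However, there is nothing to compare against: the paper does \emph{not} give its own proof of this statement. Theorem~\ref{th:vostrecov_kreines} is quoted as a known external result (attributed to Vostrecov and Kreines, with a reference to \cite[Theorem~3.2]{pinkus1999approximation}) and is simply invoked as a black box inside the proof of Theorem~\ref{th:unbounded_width}. So your write-up goes well beyond what the paper itself supplies. If you intend to include a self-contained proof, the one you sketched is fine; the only places that would benefit from being written out more carefully are (i) the nondegeneracy of the apolar pairing on $\mathcal H_k$ (a one-line computation on monomials) and (ii) the identity-principle step on the sphere, both of which you already flagged.
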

We pick $\mathcal{A} := \{ y \in \mathbb{S}^{d-1} ~ | ~ \Vert y - \frac{z_1-z_2}{\Vert z_1-z_2 \Vert} \Vert < \frac{1}{2} \} \subset \mathbb{S}^{d-1}$ which is a relatively open subset. This choice of $\mathcal{A}$ yields 
\begin{align*}
\frac{1}{4} > \left \lVert y - \frac{z_1 - z_2}{\Vert z_1 - z_2 \Vert} \right \rVert^2 &= \Vert y \Vert^2 + \left \lVert \frac{z_1 - z_2}{\Vert z_1 - z_2 \Vert} \right \rVert^2 - 2 \left\langle y, \frac{z_1 - z_2}{\Vert z_1 - z_2 \Vert} \right\rangle \\
\Leftrightarrow ~~ \left\langle y, \frac{z_1 - z_2}{\Vert z_1 - z_2 \Vert} \right\rangle &> 1 - \frac{1}{8} = \frac{7}{8} > 0,
\end{align*}
i.e.\ $a^\top z_1 \neq a^\top z_2$ for any $a \in \mathcal{A}$. Thus, the requirement $a^\top z_1 \neq a^\top z_2$ of the last step is satisfied. \\
Consider $f \in C([0,1]^d)$ arbitrary. For any $\epsilon > 0$, Theorem \ref{th:vostrecov_kreines} gives the existence of $N \in \N, a_i \in \mathcal{A}, g_i \in C(\R), i=1, .., N$ such that
\begin{align*}
\left| f(x) - \sum_{i=1}^N \alpha_i g_i(a_i^\top x) \right| < \frac{\epsilon}{2}.
\end{align*}
Step $2$ of the proof guarantees arbitrarily accurate approximation of those $\alpha_i g_i(a_i^\top x)$ for $x \in [0,1]^d$, i.e.\ $| a_i^\top x | \leq 2 \sqrt{d} \Vert a_i \Vert$. This is possible, as $a_i^\top z_1 \neq a_i^\top z_2$, 
which is ensured by the choice of $\mathcal{A}$. 
\end{enumerate}
The realization of the linear combinations is possible based on \Cref{prop:linear_layer_2}.
\end{delayedproof}

\end{document}